\definecolor{codegreen}{rgb}{0,0.6,0}
\definecolor{codegray}{rgb}{0.5,0.5,0.5}
\definecolor{codepurple}{rgb}{0.58,0,0.82}
\definecolor{backcolour}{rgb}{0.95,0.95,0.92}
\definecolor{darkred}{rgb}{0.64,0,0}
\definecolor{darkcyan}{rgb}{0,0.55,0.55}
\newcommand{\rowcolor}[1]{\textcolor{darkred}{#1}}
\newcommand{\columncolor}[1]{\textcolor{darkcyan}{#1}}
\newcommand{\shieldname}[1]{$\mathcal{T}_{#1}$}
\lstdefinestyle{mystyle}{
    language=Prolog,
    backgroundcolor=\color{backcolour},   
    commentstyle=\color{codegreen},
    keywordstyle=\color{magenta},
    numberstyle=\tiny\color{codegray},
    stringstyle=\color{codepurple},
    basicstyle=\ttfamily\footnotesize,
    breakatwhitespace=false,         
    breaklines=true,                 
    captionpos=b,                    
    keepspaces=true,                 
    numbers=left,                    
    numbersep=5pt,                  
    showspaces=false,                
    showstringspaces=false,
    showtabs=false,                  
    tabsize=2
}
\newcommand{\smallnfgame}[1]{%
\getargsC{#1}
\begin{tikzpicture}[scale=0.35]

\node (RT) at (-2,1) [label=left:\rowcolor{\argi}] {};
\node (RB) at (-2,-1) [label=left:\rowcolor{\argii}] {};
\node (CL) at (-1,2) [label=above:\columncolor{\argiii}] {};
\node (CR) at (1,2) [label=above:\columncolor{\argiv}] {};

\node (RTL) at (-1.4,0.6) {\rowcolor{\Large\argv}}; % top/left row player payoff etc.
\node (RTR) at (0.6,0.6) {\rowcolor{\Large\argvi}};
\node (RBL) at (-1.4,-1.4) {\rowcolor{\Large\argvii}};
\node (RBR) at (0.6,-1.4) {\rowcolor{\Large\argviii}};

\node (CTL) at (-0.6,1.4) {\columncolor{\Large\argix}};
\node (CTR) at (1.4,1.4) {\columncolor{\Large\argx}};
\node (CBL) at (-0.6,-0.6) {\columncolor{\Large\argxi}};
\node (CBR) at (1.4,-0.6) {\columncolor{\Large\argxii}};

\draw[-,very thick] (-2,-2) to (2,-2);
\draw[-,very thick] (-2,0) to (2,0);
\draw[-,very thick] (-2,2) to (2,2);
\draw[-,very thick] (-2,-2) to (-2,2);
\draw[-,very thick] (0,-2) to (0,2);
\draw[-,very thick] (2,-2) to (2,2);
\draw[-,very thin] (-2,2) to (0,0);
\draw[-,very thin] (0,0) to (2,-2);
\draw[-,very thin] (-2,0) to (0,-2);
\draw[-,very thin] (0,2) to (2,0);
\end{tikzpicture}
}
\theoremstyle{plain}
\newtheorem{theorem}{Theorem}[section]
\newtheorem{proposition}[theorem]{Proposition}
\newtheorem{lemma}[theorem]{Lemma}
\theoremstyle{definition}
\newtheorem{definition}[theorem]{Definition}
\theoremstyle{remark}
\newtheorem{remark}[theorem]{Remark}
\newcommand{\BibTeX}{B\kern-.05em{\sc i\kern-.025em b}\kern-.08em\TeX}
\begin{document}

%%%%%%%%%%%%%%%%%%%%%%%%%%%%%%%%%%%%%%%%%%%%%%%%%%%%%%%%%%%%%%%%%%%%%%%%

\begin{frontmatter}

%%% Use this command to specify your submission number.
%%% In doubleblind mode, it will be printed on the first page.

\paperid{4833} 

%%% Use this command to specify the title of your paper.

% \title{Safe, Explainable Multi-Agent Reinforcement Learning with Probabilistic Logic Shields}

\title{Analyzing Probabilistic Logic Shields for \\Multi-Agent Reinforcement Learning}

%%% Use this combinations of commands to specify all authors of your 
%%% paper. Use \fnms{} and \snm{} to indicate everyone's first names 
%%% and surname. This will help the publisher with indexing the 
%%% proceedings. Please use a reasonable approximation in case your 
%%% name does not neatly split into "first names" and "surname".
%%% Specifying your ORCID digital identifier is optional. 
%%% Use the \thanks{} command to indicate one or more corresponding 
%%% authors and their email address(es). If so desired, you can specify
%%% author contributions using the \footnote{} command.

% \author[A]{\fnms{Satchit}~\snm{Chatterji}\orcid{0009-0003-8648-1158}\thanks{Corresponding Author. Email: s.chatterji@uva.nl}}
% \author[A]{\fnms{Erman}~\snm{Acar}\orcid{0009-0002-9656-7249}}
\author[A]{\fnms{Satchit}~\snm{Chatterji}\thanks{Corresponding Author. Email: s.chatterji@uva.nl}}
\author[A]{\fnms{Erman}~\snm{Acar}}

% \address[A]{University of Amsterdam}
\address[A]{IvI \& ILLC, University of Amsterdam}

%%% Use this environment to include an abstract of your paper.

\begin{abstract}

% version 1
% Safe reinforcement learning (RL) is crucial for real-world applications, and multi-agent interactions introduce additional safety challenges. We propose \textbf{Shielded Multi-Agent Reinforcement Learning (SMARL)}, which extends single-agent Probabilistic Logic Shields (PLS) to decentralized, multi-agent settings. Our approach introduces Probabilistic Logic Temporal Difference Learning (PLTD) for shielded independent Q-learning and a probabilistic logic policy gradient-based shielded independent PPO. Results show improved safety, cooperation, and alignment with normative behaviors across multiple $n$-player game-theoretic environments, including asymmetric settings where a strict subset of agents is shielded. Thus, this method is shown to be effective as a robust equilibrium selection mechanism. These findings highlight SMARL’s potential to improve safety and cooperation in diverse multi-agent settings.

% version 2
Safe reinforcement learning (RL) is crucial for real-world applications, and multi-agent interactions introduce additional safety challenges. While Probabilistic Logic Shields (PLS) has been a powerful proposal to enforce safety in single-agent RL, their generalizability to multi-agent settings remains unexplored. In this paper, we address this gap by conducting extensive analyses of PLS within decentralized, multi-agent environments, and in doing so, propose \textbf{Shielded Multi-Agent Reinforcement Learning (SMARL)} as a general framework for steering MARL towards norm-compliant outcomes. Our key contributions are: (1) a novel Probabilistic Logic Temporal Difference (PLTD) update for shielded, independent Q-learning, which incorporates probabilistic constraints directly into the value update process; (2) a probabilistic logic policy gradient method for shielded PPO with formal safety guarantees for MARL; and (3) comprehensive evaluation across symmetric and asymmetrically shielded $n$-player game-theoretic benchmarks, demonstrating fewer constraint violations and significantly better cooperation under normative constraints. These results position SMARL as an effective mechanism for equilibrium selection, paving the way toward safer, socially aligned multi-agent systems.

%version 3

% Safe reinforcement learning (RL) is crucial for real-world applications, and multi-agent interactions introduce additional safety challenges. While Probabilistic Logic Shields (PLS) has been a powerful proposal to enforce safety in single-agent RL, their generalization to multi-agent settings remains unexplored. In order to fill this gap, in this article, we carry out an extensive analysis of 
% of PLS in multiagent settings across different  game-theoretic scenarios and environments. In doing so, we
% introduce \textbf{Shielded Multi-Agent Reinforcement Learning (SMARL)}, a framework that extends PLS to decentralized, multi-agent setting. In particular,  (1) we introduce a novel Probabilistic Logic Temporal Difference (PLTD) update for shielded, independent Q-learning; (2) we propose a probabilistic logic policy gradient method for shielded PPO with formal safety guarantees for MARL; and (3)  we demonstrate that fewer alignment violations and significantly better cooperation under normative constraints. These results suggest that PLS and  SMARL in general can serve as an effective mechanism for robust equilibrium selection, paving the way toward safer, socially aligned multi-agent systems.

\end{abstract}

\end{frontmatter}

%%%%%%%%%%%%%%%%%%%%%%%%%%%%%%%%%%%%%%%%%%%%%%%%%%%%%%%%%%%%%%%%%%%%%%%%

% \begin{table}[h]
% \caption{Locations of selected conference editions.}
% \centering
% \begin{tabular}{ll@{\hspace{8mm}}ll} 
% \toprule
% AISB-1980 & Amsterdam & ECAI-1990 & Stockholm \\
% ECAI-2000 & Berlin & ECAI-2010 & Lisbon \\
% ECAI-2020 & \multicolumn{3}{l}{Santiago de Compostela (online)} \\
% \bottomrule
% \end{tabular}
% \end{table}

%%%%%%%%%%%%%%%%%%%%%%%%%%%%%%%%%%%%%%%%%%%%%%%%%%%%%%%%%%%%%%%%%%%%%%%%

\section{Introduction}

Recent years have witnessed significant progress in multi-agent reinforcement learning (MARL), with sophisticated algorithms tackling increasingly complex problems in various domains including autonomous vehicles~\citep{shalev2016safe}, distributed robotics~\citep{brambilla2013swarm}, algorithmic trading~\citep{ganesh2019reinforcement}, energy grid management~\citep{van2023multi} and healthcare~\citep{riek2017healthcare}. The ultimate success of RL in this diverse collection of domains and the deployment in the real-world, however, demands overcoming a difficult key challenge: \textit{safety}.

This has naturally resulted in the research direction of \textit{Safe RL} which aims to learn optimal policies that are, by some measure, `safe'. \citet{gu2024review} present an up-to-date overview of the field. A number of proposals focus on the application of formal methods \citep{hunt2021verifiably,den2022planning}, within which the notion of \textit{shielding} is used, a technique that is inspired by formal verification through temporal logic specifications to avoid unsafe actions during the agent's learning process~\citep{bloem2015shield, alshiekh2018safe, jansen2020safe, carr2023safe}. 

One recent proposal that uses shielding to represent safety constraints is \textit{probabilistic logic shields} \citep[PLS,][]{yang2023safe} whose semantics are based on probabilistic logic (PL) programming \citep{de2007problog}. PLS constrains an agent's policy to comply with formal specifications \textit{probabilistically}. The specification of constraints, (the \textit{shield}), is defined within the exploration and learning pipeline. 
% 22/08/25 PLS comes with various advantages over other techniques: ($i$) the shields are applied at the \textit{policy level} instead of individual actions capturing the probabilistic evaluation of safety (in expectation) in contrast to deterministic shields that are based on hard-rejection \cite{hunt2021verifiably,jansen2020safe}; ($ii$) PLS is less demanding in terms of requiring knowledge of the underlying MDP compared to previous methods \citep[e.g.,][]{hunt2021verifiably, jansen2020safe, carr2023safe}; ($iii$) being end-to-end differentiable, it integrates logical semantics with deep learning architectures seamlessly; and ($iv$) it provides safety guarantees within single-agent settings.
PLS offers policy-level evaluation of safety in contrast to action-level hard-rejection shields \citep[e.g.,][]{hunt2021verifiably, jansen2020safe, carr2023safe}, differentiability, modest requirements on knowing the MDP, and safety guarantees within single-agent RL.

\begin{figure}[t]
    \centering
    \includegraphics[width=0.8\linewidth]{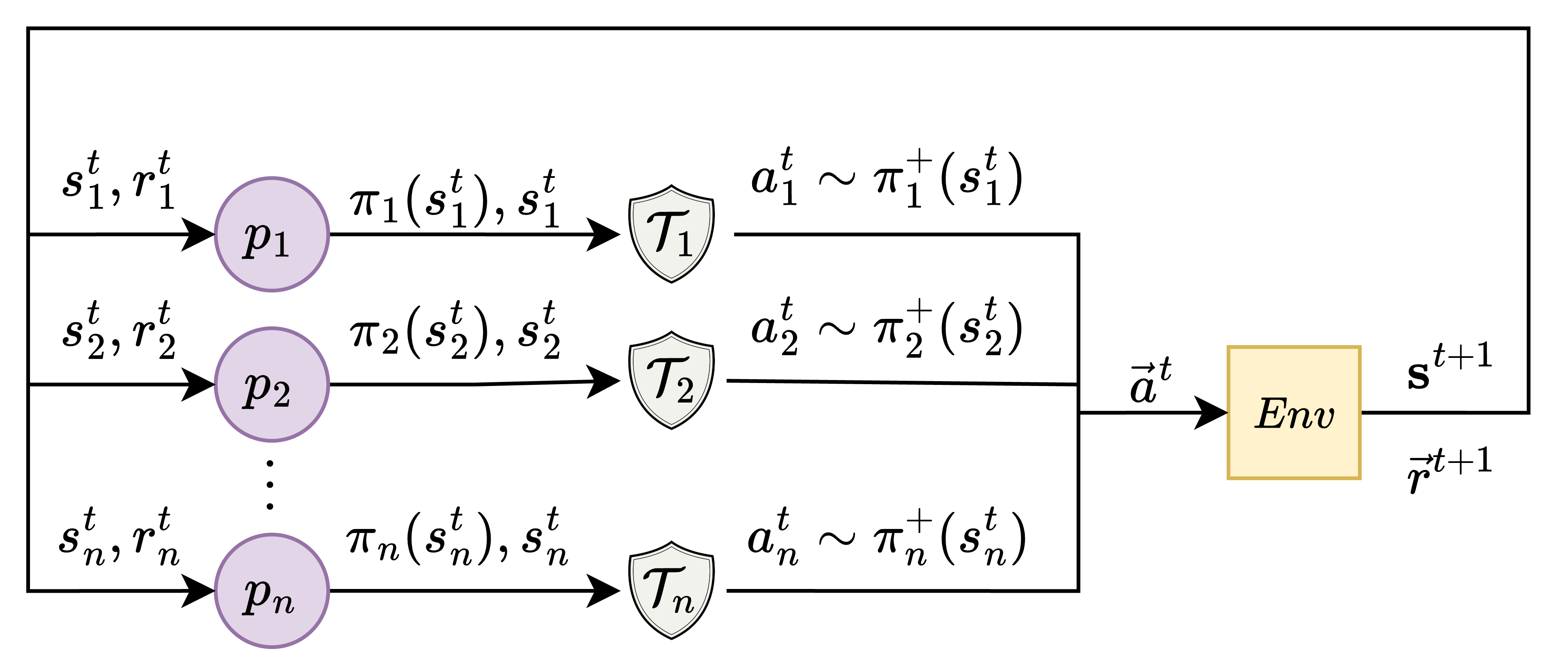}
    \caption{Shielded MARL (SMARL) interaction schematic. At time $t$, each agent $p_i$ passes their respective policy $\pi_i^t(s_i^t)$ and a safety-specific state $s^t_i$ to their shield $\mathcal{T}_i$, resulting in a safe policy $\pi_i^+(s_i^t)$ from which an action $a^t_i$ is sampled and returned to the environment.\vspace{-10pt}}
    \label{fig:smarl_parallel}
\end{figure}

However, safety is inherently a \textit{multi-agent} concept, as real-world environments often involve multiple agents interacting simultaneously, leading to hard-to-control complex systems. While there exist a few approaches tackling safety in multi-agent settings \citep{gu2023safe,elsayed2021safe,melcer2022shield, subramanian2024neuro}, to the best of our knowledge, PLS has not been adopted, extended or analyzed in the context of MARL. In this paper, we address this gap with the following contributions:
\begin{compactenum}
      \item We present a framework that extends shielding in RL to multi-agent settings, named \textit{Shielded MARL} (SMARL). We provide a theoretical guarantee that SMARL with PLS produces safer joint policies than unshielded counterparts. Using decentralized techniques, we introduce (a) \textit{Probabilistic Logic Temporal Difference Learning} (PLTD), with convergence guarantees, allowing for shielded independent Q-learning (SIQL), and (b) shielded independent PPO (SIPPO) using probabilistic logic policy gradients;
      \item We show that PLS can be used as an equilibrium selection mechanism, a key challenge in MARL, under various game-theoretic settings, such as coordination, spatio-temporal coordination, social dilemmas, cooperation under uncertainty, and mixed-motive problems. We provide strong empirical evidence across various $n$-player environments including an extensive-form game (Centipede), a stochastic game (Extended Public Goods Game), a simultaneous game (Stag-Hunt),  and its grid-world extension (Markov Stag-Hunt). Moreover, we investigate the impact of smaller (weak, less specific) and larger (strong, more specific) shields;
      % \item We provide a safety guarantee that SMARL produces safer joint policies than their unshielded counterparts;
      \item We investigate asymmetric shielding, i.e.\ the ability of shielded agents to influence unshielded peers. Results show that partial shielding can significantly enhance safety, highlighting SMARL’s effectiveness in both cooperative and non-cooperative settings.
\end{compactenum}

%\begin{compactenum}
%      \item We introduce \textit{Shielded MARL} (SMARL) by extending PLS to MARL -- in particular, we introduce \textit{Probabilistic Logic Temporal Difference Learning} (PLTD) to enable shielded independent Q-learning (SIQL), with a proof of convergence in limited settings, and introduce shielded independent PPO (SIPPO) using probabilistic logic policy gradients;
 %     \item We show that PLS can be used as a equilibrium selection mechanism in various $n$-player game-theoretic environments including a simultaneous game (Stag-Hunt), an extensive-form game (Centipede), a stochastic game (Extended Public Goods Game), and a grid-world (Markov Stag-Hunt). Moreover, we investigate the impact of smaller (weak) and larger (strong) shields;
 %     \item We provide a safety guarantee that SMARL produces safer joint policies than their unshielded counterparts;
%      \item We investigate the influence of shielded agents over unshielded ones. Results suggest that the agent equipped with the shield can guide the other agent's behavior significantly, highlighting SMARL's ability to enhance safety and cooperation.
%\end{compactenum}

% We believe a dedicated analysis of SMARL in two-player games is warranted, as these environments serve as fundamental, tractable models for understanding larger multi-agent populations -- an approach widely adopted in game theory and MARL research. In many multi-agent systems ($n\geq2$), interactions between opposing or independent agent groups often resemble two-player dynamics, making our scenarios and analyses broadly applicable.

\begin{figure*}[]
    \centering
    \begin{minipage}[b]{0.24\textwidth}
        \centering
        \includegraphics[width=0.8\linewidth]{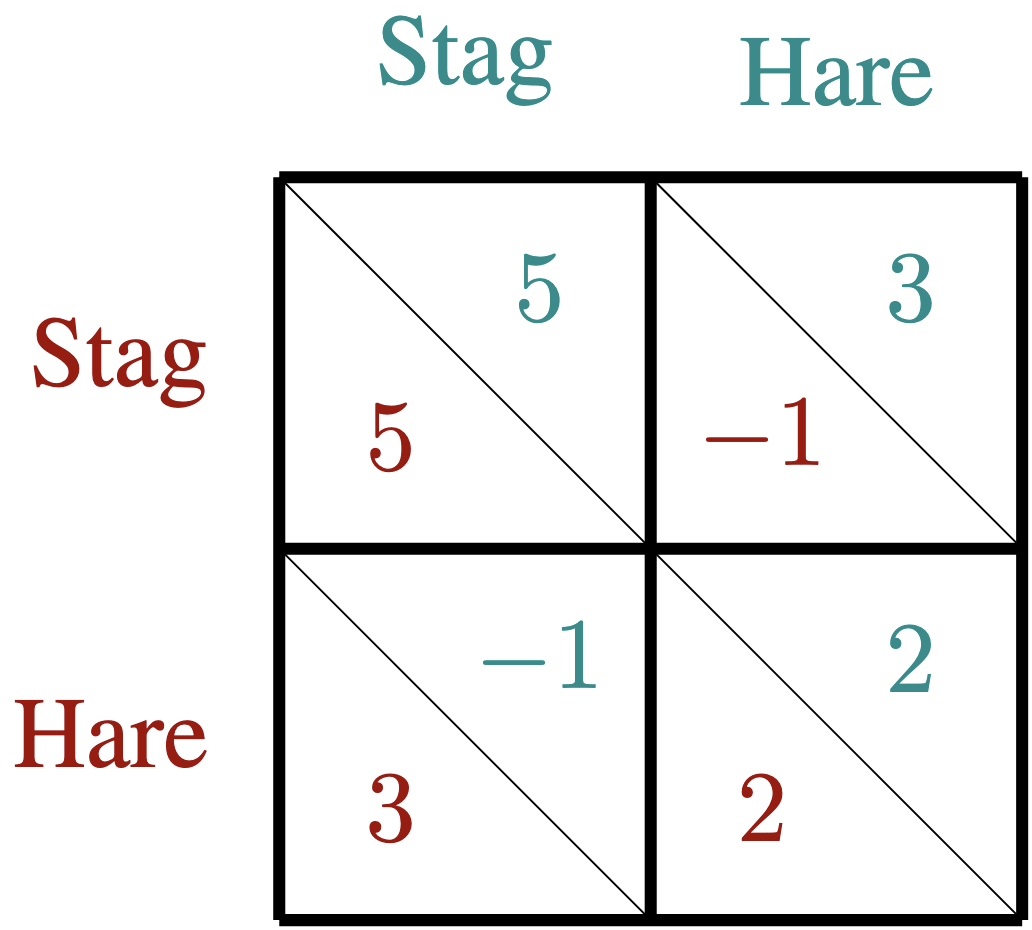}
        \vspace{0.3em}
        
        \textbf{(a)} Payoff matrix for \textit{Stag-Hunt}
        \label{fig:nfg_generic}
    \end{minipage}
    \hspace{0.1cm}
    \begin{minipage}[b]{0.24\textwidth}
        \centering
        \includegraphics[width=0.75\linewidth]{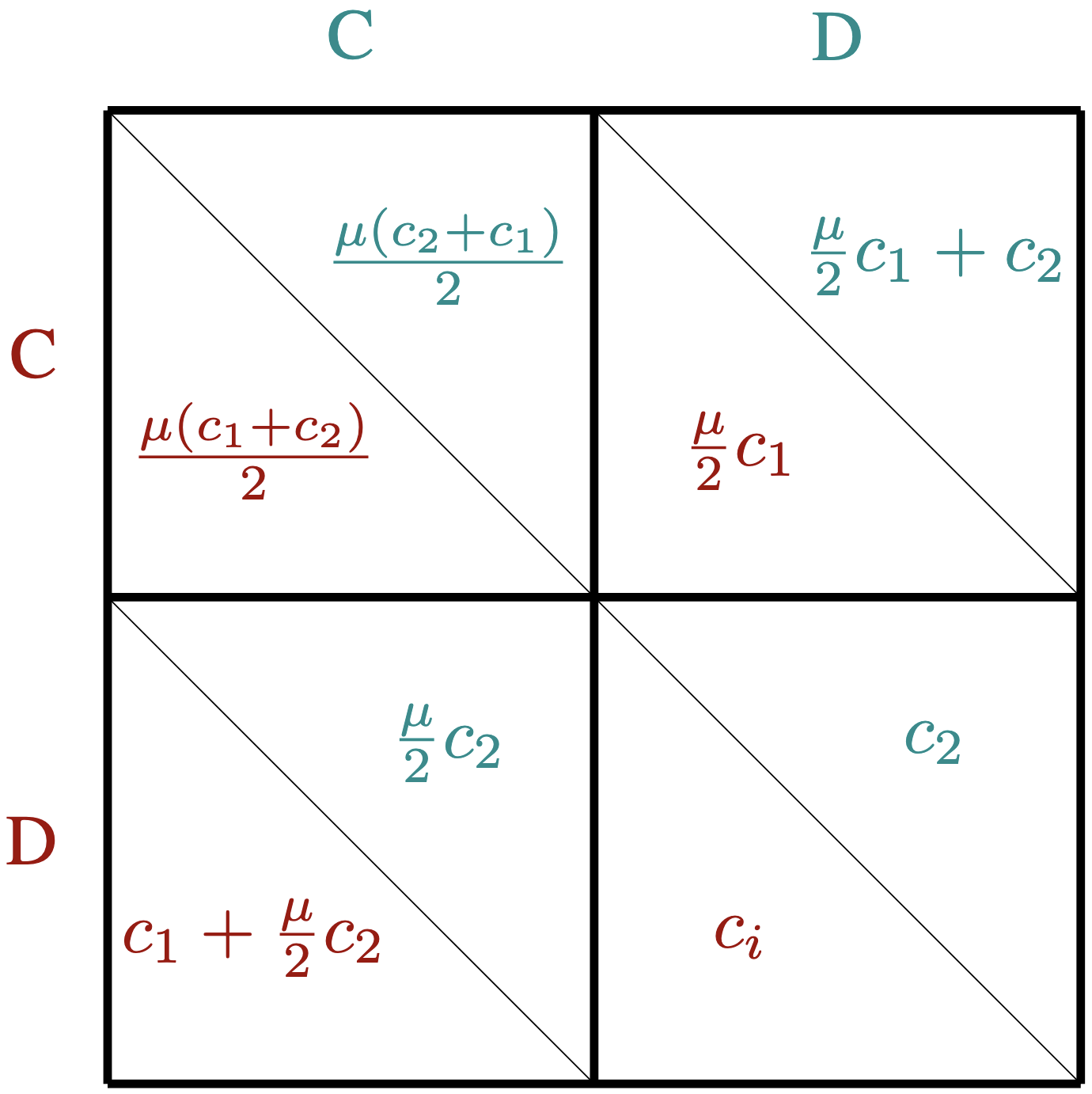}
        \vspace{0.3em}
        
        \textbf{(b)} Expected EPGG utilities
        \label{fig:nfg_epgg}
    \end{minipage}
    \hspace{0.1cm}
    \begin{minipage}[b]{0.24\textwidth}
        \centering
        \includegraphics[width=0.8\linewidth]{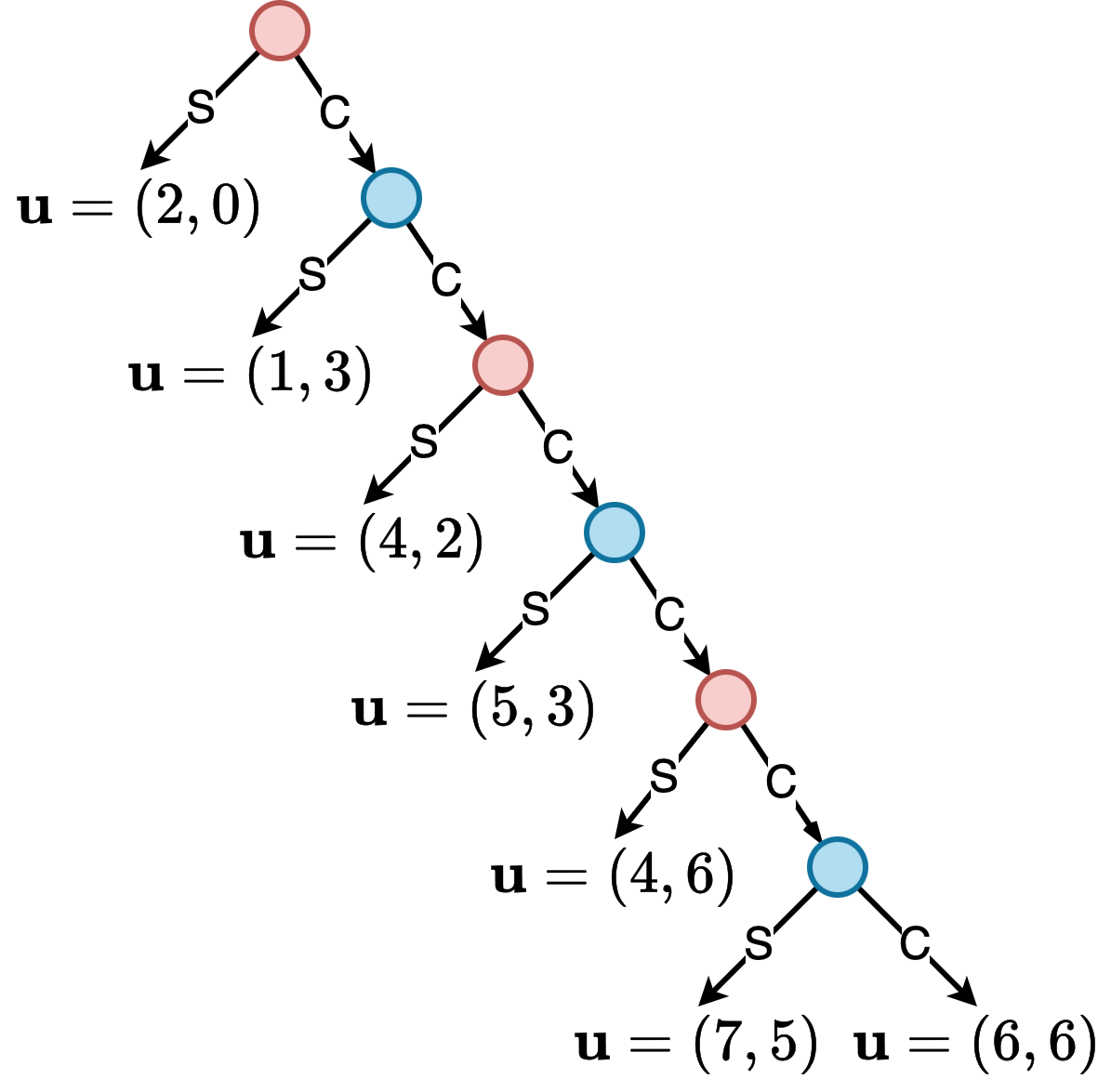}
        \vspace{0.3em}
        
        \textbf{(c)} \textit{Centipede} instance
        \label{fig:centipede_instance}
    \end{minipage}
    \hspace{0.1cm}
    \begin{minipage}[b]{0.23\textwidth}
        \centering
        \includegraphics[width=0.735\linewidth]{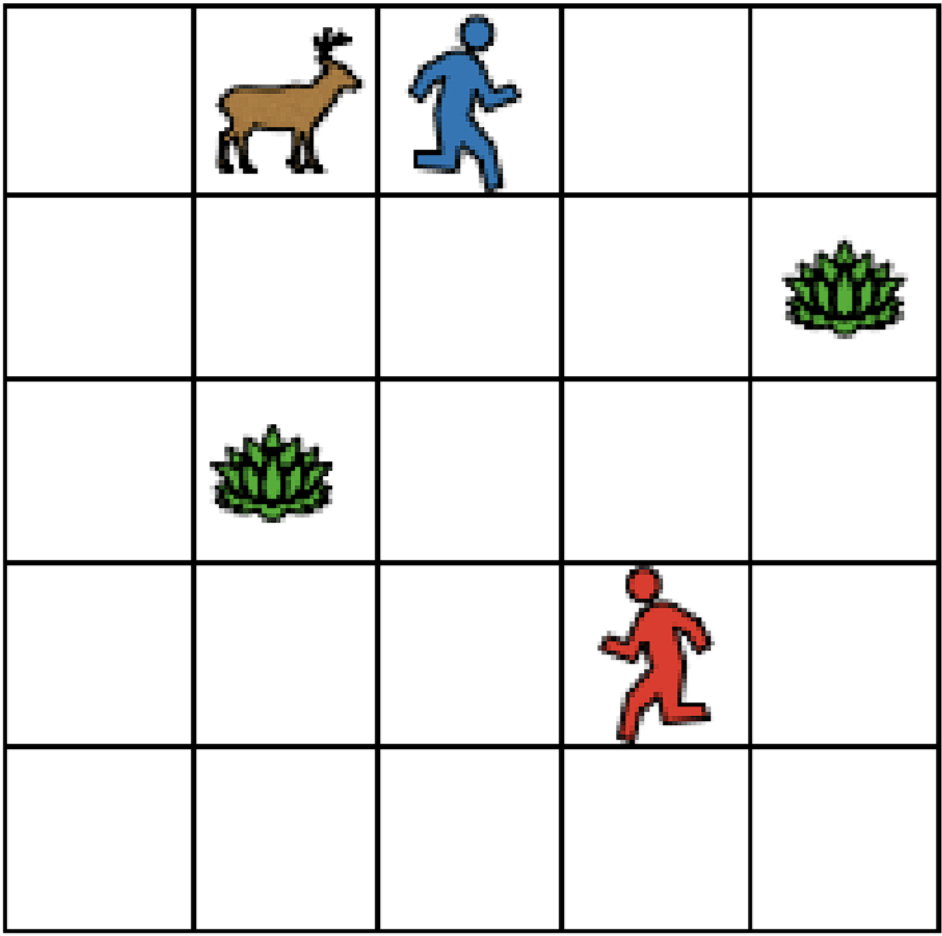}
        \vspace{0.3em}
        
        \textbf{(d)} \textit{Markov Stag-Hunt} \citep{peysakhovich2018prosocial}
        \label{fig:msh_env}
    \end{minipage}
    
    \caption{Representations of the games that are experimented within this paper. For details please refer to Section~\ref{sec:prelims:gt}.}
    \label{fig:nfg_utilities}
\end{figure*}

%%%%%%%%%%%%%%%%%%%%%%%%%%%%%%%%%%%%%%%%%%%%%%%%%%%%%%%%%%%%%%%%%%%%%%%%
%\subsection{Related Work}

\section{Preliminaries}
% We start by specifying what we mean by \textit{RL safety}, introduce MARL, provide background knowledge on the game-theoretic environments we use, and finally a quick introduction to PLS. 

\subsection{Safety Definition}\label{sec:safety}
We expand the interpretation of `safety' in RL beyond that of works like \citet{elsayed2021safe} and \citet{yang2023safe} (i.e.\ informally, \textit{the agent should not perform actions that may lead to something `harmful'}), and more towards an alternative conception provided in \citet[][pg.\ 3]{gu2024review}, namely, \textit{they act, reason, and generalize obeying human desire}. We formalize this in the vein of \citet{alshiekh2018safe}:

\begin{definition}\label{def:safety}
    \textbf{Formally-Constrained Safe RL} is the process of learning an optimal policy $\pi^*$ while maximally satisfying a set of formal specifications $\mathcal{T}$ during learning and deployment.
\end{definition}

$\mathcal{T}$ may represent any set of constraints, (e.g.\ arbitrary temporal logic formulae). These may not \textit{necessarily} be `harmful', but may instead ascribe normative or desirable behavior, such as social norms or equilibrium conditions.
In our case, $\mathcal{T}$ is a ProbLog program.

\subsection{Multi-Agent Reinforcement Learning} \label{sec:marl}
Multi-Agent Reinforcement Learning (MARL) generalizes RL to environments with multiple interacting agents, making the learning problem non-stationary since each agent’s dynamics depend on the evolving policies of others (\citep[][]{marl-book} provides a detailed introduction). Formally, MARL is often modeled as a \textit{stochastic game} in which each agent $i$ selects actions according to a local policy $\pi_i$, and inducing a joint policy $\vec{\pi} = \langle \pi_1, \dots, \pi_n \rangle$ determining the next state and rewards. A common baseline is \textit{Independent Q-Learning} \citep[IQL,][]{tan1993multi}, where each agent maintains its own Q-function and updates it independently using local observations and rewards. Similarly, \textit{Independent Proximal Policy Optimization} \citep[IPPO,][]{de2020independent} adapts PPO to the multi-agent case by training each one with a separate policy-gradient update. In practice, MARL algorithms may employ \textit{parameter sharing}, where agents use a common neural architecture for policies and/or critics, differing only through their inputs (e.g., observations). This has been shown to improve sample efficiency and facilitates coordination, but may reduce policy diversity and lead to homogenized behaviors \citep{marl-book, gupta2017cooperative}. 
% Together, IQL, IPPO, and their parameter-sharing variants serve as standard baselines against which safety-enhanced methods such as shielded MARL can be compared.

\subsection{Game-Theoretic Environments} \label{sec:prelims:gt}

A {normal-form game} (NFG) is a tuple $\langle N,\mathbf{A},\mathbf{u}\rangle$ where
 $N=\{1,...,n\}$ is a finite set of agents (or players), $\mathbf{A}=\mathcal{A}_1\times\cdots\times \mathcal{A}_n$ is a finite set of \emph{action profiles} $\mathbf{a}=(a_1,...a_n)$ with $\mathcal{A}_i$ being the set of player $i$'s actions, and $\mathbf{u}=(u_1,...,u_n)$ is a profile of utility functions $u_i:\mathbf{A}\rightarrow\mathbb{R}$. A strategy profile $\mathbf{s}=(s_1,...s_n)$ is called a \emph{Nash equilibrium} if for each player $k\in N$, the strategy $s_k$ is a best response to the strategies of all the other players $s_{i\in N\setminus\{k\}}$ \citep{leyton2022essentials, maschler2020game}.%.

% $\blacktriangleright$ \textbf{Prisoner's Dilemma} is a 2 players normal-form game in which each player that can take i.e.,  $\mathcal{A}_1 = \mathcal{A}_2$ = \{C, D\} where  (\textit{C}) and  (\textit{D}) correspond to \textit{cooperate} and \textit{defect}, respectively. 
% Any payoff matrix with the following structure 
% in Figure~\ref{fig:nfg_utilities} (for two players), is considered an instance of Prisoner's Dilemma \citep{leyton2022essentials}: $d=e>a=b>g=h>c=f$ (See Figure~\ref{fig:nfg_utilities} (a)). This payoff matrix has just one Nash equilibrium in which both players defect. 

$\blacktriangleright$ \textbf{Stag-Hunt}\label{sec:foundations:gt:nfg:sh} is a two-player NFG in which each player has the actions $\mathcal{A}_1  = \mathcal{A}_2 = \{Stag, Hare\}$. 
% Any payoff matrix with the following structure of payoffs is considered an instance of the Stag-Hunt: $a=b>d=e\geq g=h> c=f$. 
If they coordinate and both play \textit{Stag}, each will get a large reward. However, if only one plays \textit{Stag}, it gets a penalty (low/negative utility). Alternatively, either agent may unilaterally play \textit{Hare} to receive a small positive utility regardless of the actions of the other agent. An example Stag-Hunt \textit{payoff matrix} (a representation of the agents' utility functions) is seen in Figure~\ref{fig:nfg_generic}a.
%Both agents can also opt to go for the hare, in which case they both will get a small reward regardless of the action of the other player. This game has two pure-strategy Nash equilibria: either both players play $Stag$ (also called the \emph{cooperative equilibrium}), or $Hare$ (the \emph{non-cooperative} equilibrium). Additionally, there is a mixed strategy equilibrium that depends on the exact values in the utility matrix, and can be solved analytically.

%-- for the instance of the game in Figure~\ref{fig:nfg_utilities}, the mixed strategy profile is $((0.6, 0.4),(0.6, 0.4))$ with an expected utility of 2 for each player (see Appendix~\ref{app:stag_hunt_eq} for the computation).
$\blacktriangleright$ \textbf{Extended Public Goods Game} (EPGG) is a mixed-motive simultaneous game that represents the \textit{cooperation vs. competition} social dilemma \citep{orzanacar}. An EPGG is a tuple $\langle N, c, \mathbf{A}, f, u \rangle$, where $N = \{1, \ldots, n\}$ is the set of players, $c_i \in \mathbb{R}_{\geq 0}$ is the amount of coin each player $i$ is endowed with and collected in $c = (c_1, \ldots,c_n)$, and $f \in \mathbb{R}_{\geq 0}$ is the multiplication factor for the lump sum endowment (hence the name `extended', as opposed to the case $f \leq n$). Each player $i\in N$ decides whether to invest in the public good (\textit{cooperate}) or not (\textit{defect}), i.e., $\mathcal{A}_i = \{C, D\}$. The resulting quantity is then evenly distributed among all agents. The utility function for $i$ is defined as $u_i: \mathbf{A} \times \mathbb{R}_{\geq 0} \times \mathbb{R}_{\geq 0}^{n} 
\rightarrow \mathbb{R}$, with:
%\begin{equation}
%\label{eq:utility}
 $ r_i(\mathbf{a}, f, c) = \frac{1}{n} \sum_{j=1}^{n} c_j I(a_j) \cdot f + c_i (1 - I(a_i))$
%\end{equation}
where $\mathbf{a}$ is the action profile, $a_j$ is the $j-$th entry of $\mathbf{a}$, $I(a_j)$ is the indicator function returning 1 if the action of the agent $j$ is cooperative and 0 otherwise, and $c_j$ denotes the $j-$th entry of $c$. Here, EPGG is formulated as a partially observable stochastic game in which $f:=f_t\sim\mathcal{N} ( \mu , \sigma)$, sampled every time step $t$, where $\mathcal{N} ( \mu , \sigma)$ is a normal distribution with mean $\mu$ and variance $\sigma$. Depending on the value of $\mu$, the game is expected to be non-cooperative ($\mu <1$), cooperative ($\mu > n$) or a mix of both (i.e., mixed-motive for $1<\mu<n$). The \textit{expected} Nash equilibrium is determined by $\mu=\mathbb{E}_{f_t\sim\mathcal{N}(\mu,\sigma)}[f_t]$, and can be estimated empirically ($\hat\mu$) by taking the mean of several observations of $f_t$. A generic two-player EPGG payoff matrix is shown in Figure~\ref{fig:nfg_epgg}b.

$\blacktriangleright$ \textbf{Centipede} is a two-player extensive-form game (i.e., the game has several states) in which two agents take turns deciding whether to continue the game (increasing the potential rewards for both) or defect (ending the game and collecting a short-term reward). The game structure incentivizes short-term defection, as each agent risks being defected upon by their partner. Formally, the game begins with a `pot' $p_0$ (a set amount of utility) where each player has two actions, $\mathcal{A}_i=\{Continue,Stop\}$. After each round $t$, the pot increases, e.g., linearly (such as $p_{t+1}\leftarrow p_t+1$) or exponentially (such as $p_{t+1}\leftarrow p_t^2$). If a player plays $Stop$, then it receives $p_t/2+1$ while the other player receives $p_t/2-1$. If both players play $Continue$, they split the pot equally after a certain amount of rounds $t_{max}$, each receiving $p_{t_{max}}/2$. An instance of the game ($t_{max}=6$, $n=2$, $p_0\leftarrow2$, $p_{t+1}=p_t+2$) is given in Figure~\ref{fig:centipede_instance}c.
%An instance of a centipede game with $t_{max}=6$ turns and $n=2$ players is shown in Figure~\ref{fig:nfg_utilities}(c).
%The optimal strategy for the first  player is to stop. This is called the \textit{subgame perfect Nash equilibrium}, one analogue of the Nash equilibrium for extensive-form games (see \cite{maschler2020game} or \cite{leyton2022essentials} for more details). 

$\blacktriangleright$ \textbf{Markov Stag-Hunt} is a grid-world environment inspired by \citet{peysakhovich2018prosocial}, where agents move through a grid and must decide whether to hunt a stag (with risk of penalty if hunting alone) or harvest a plant, with an underlying reward structure similar to the \textit{Stag-Hunt} game. Figure~\ref{fig:msh_env}d visualizes this environment.

\subsection{Probabilistic Logic Shielding}

We summarize \citet{yang2023safe}'s approach known as \textit{Probabilistic Logic Shielding} (PLS), a method for incorporating probabilistic safety in RL, and refer the reader to the original work for a detailed treatment. Unlike earlier rejection-based shields that fully block unsafe actions and limit exploration \citep{alshiekh2018safe}, PLS reduces the probability of unsafe actions proportional to their risk, allowing them to occasionally occur. Under soft constraints, this lets agents learn from unsafe actions.

Assume a model $P$ such that $P(\texttt{safe} | s,a)$ represents the likelihood that taking action $a$ in state $s$ is safe.
% $P$ may not necessarily be a full representation of the underlying MDP, but must encode safety information. 
The safety of $\pi$ is the sum of the disjoint probabilities of the safety of taking each action:
\begin{small}
\begin{equation}
    P_\pi(\texttt{safe}\mid s)=\sum_{a\in\mathcal{A}}P(\texttt{safe}\mid s,a)\cdot\pi(a\mid s)
\end{equation}
\end{small}
By marginalizing the base policy over the safety model, we obtain a reweighted policy that favors safer actions -- this is called \textit{probabilistic shielding}. Formally, %\label{def:pshielding}
given a base policy $\pi$ and a probabilistic safety model $P$, the shielded policy $\pi^+$ is constructed as:
\begin{small}
\begin{equation} \label{eq:pshielding}
    \pi^+(a\mid s)=P_\pi(a\mid s,\mathtt{safe})=\frac{P(\mathtt{safe}\mid s,a)}{P(\mathtt {safe}\mid s)}\pi(a\mid s)
\end{equation}
\end{small}
Succinctly, PLS re-normalizes the action distribution to make unsafe actions less likely. \citet{yang2023safe} chose to implement the model $P$ in ProbLog \citep{de2007problog} for a number of reasons, including that it is easily differentiable and allows easy modeling and planning. This means our safety constraints are to be specified \textit{symbolically} through PL predicates and relations. Within this paper, a \textit{shield} is thus synonymous with a ProbLog program $\mathcal{T}$ (further details in Section~\ref{sec:shield_creation}), which
% that receives two sets of inputs: the first is the action distribution from a policy $\pi(s)$, and the second is a set of probabilistic fact valuations encoding safety-related information that may come from the agent or the environment -- this may differ from the state information used to compute the \textit{base} policy $\pi$.
% \footnote{According to shield classification of \citet{alshiekh2018safe}, PLS can be considered a \textit{preemptive} shield -- it changes the action distribution before the agent takes an action.}. 
% The appropriate ProbLog program $\mathcal{T}$ 
induces a probabilistic measure $\mathbf{P}_\mathcal{T}$. We can (conditionally) query this program to give us various information including the action safety (for each action $a$) as $P(\texttt{safe}|s,a)=\mathbf{P}_\mathcal{T}(\texttt{safe}|a)$, the safety of the whole policy under $s$ as $P_\pi(\texttt{safe}|s)=\mathbf{P}_\mathcal{T}(\texttt{safe})$ and the safe/shielded policy $\pi^+$ as $P_\pi(a|s,\texttt{safe})=\mathbf{P}_\mathcal{T}(a|\texttt{safe})$.

\section{Probabilistic Logic Shields for Multi-Agent Reinforcement Learning}

We present a general SMARL framework in Section~\ref{sec:methods:smarl}, applying (probabilistic) shielding to multi-agent algorithms. Next, in Section~\ref{sec:methods:pltd}, we introduce PLTD, which incorporates logical constraints directly into the TD-learning process. Finally, Section~\ref{sec:shield_creation} outlines the encoding of safety constraints as ProbLog programs.

\subsection{General Framework Description}
\label{sec:methods:smarl}
 A schematic diagram of a parallel SMARL setup is in Figure~\ref{fig:smarl_parallel}. Each agent $p_i$ (indexed with $i\in\{1,...,n\}$) observes a possibly subjective state $s_i^t$ at time $t$, and sends both the computed policy $\pi_i(s^t_i)$ and relevant safety-related information about $s_i^t$ to its shield $\mathcal{T}_i$. Each $\mathcal{T}_i$ is a ProbLog program that is used to compute a safe policy $\pi_i^+(s_i^t)$ according to Eq.~\ref{eq:pshielding} from which an action $a_i^t$ is sampled. Finally, the joint action profile $\vec{a}^t$ is sent to the environment $Env$. After updating its state, $Env$ sends new observations $\mathbf{s}^{t+1}$ and rewards $\vec{\mathbf{r}}^{t+1}$ to the agents, and the cycle continues with $t\leftarrow t+1$ until a terminal state is reached. The \textit{agent-environment-cycle interaction} scheme, used often in turn-based games \citep{terry2021pettingzoo} can likewise be extended naturally to the SMARL framework by having each agent interact with their own shield before sampling an action.

Though SMARL is agnostic to the underlying MARL algorithm, we experiment with two: IQL and IPPO (introduced in Section~\ref{sec:marl}). IQL with parameter sharing will be called \textit{Parameter Sharing IQL (PSQL)}, and IPPO will be called either \textit{Critic Sharing IPPO (CSPPO)} or \textit{Actor-Critic Sharing IPPO (ACSPPO)}, depending on whether the actors and/or critics are shared. In \textbf{Probabilistic Logic SMARL (PL-SMARL)}, at least one of the RL agents uses PLS, optionally sharing policies and parameter updates. PL-SMARL algorithms will be denoted \textit{`Shielded $X$' (S$X$)}; e.g., \textit{`Shielded IQL', (SIQL)} when $X=\text{IQL}$, or \textit{`Shielded CSPPO', (SCSPPO)} when $X=\text{CSPPO}$.

\subsubsection{Safety Guarantees}

Following Definition~\ref{def:safety}, we define relative safety for SMARL:
\begin{definition}[Relative SMARL Safety]\label{def:marlsafety}
    A joint policy $\vec\pi^+=\langle\pi_1^+,\pi_2^+,\ldots,\pi_n^+\rangle$ is defined to be \textbf{at least as safe} as another joint policy $\vec\pi=\langle\pi_1,\pi_2,\ldots,\pi_n\rangle$ if and only if for all agents $i\in N$ with respective shields $\mathcal{T}_i$, it is the case that $P_{\pi^+_i}(\texttt{safe} | s) \geq P_{\pi_i}(\texttt{safe} | s)$ for all reachable states $s\in S$; and is \textbf{strictly safer} if \textit{additionally} for at least one $j\in N$ and $s\in S$, $P_{\pi^+_j}(\texttt{safe} | s) > P_{\pi_j}(\texttt{safe} | s)$.
\end{definition}
\noindent Based on Definition~\ref{def:marlsafety}, we attain the following proposition:
\begin{proposition} \label{proposition:safety}
    A PL-SMARL algorithm with agents $i\in N$ with respective policies $\pi_i^+$ and shields $\mathcal{T}_i$ has a joint shielded policy $\vec\pi^+$ \textbf{at least as safe} as their base joint policy $\vec\pi$, and \textbf{strictly safer} whenever any individual base policy $\pi_j$ violates $\mathcal{T}_j$.
\end{proposition}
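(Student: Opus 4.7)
The plan is to reduce the multi-agent claim to a per-agent statement and then to invoke (or re-derive) the single-agent safety argument from \citet{yang2023safe}. Definition~\ref{def:marlsafety} is purely pointwise in the agent index $i$ and the state $s$: it asks that $P_{\pi^+_i}(\texttt{safe}\mid s)\geq P_{\pi_i}(\texttt{safe}\mid s)$ for every $i$ and every reachable $s$. Crucially, each shield $\mathcal{T}_i$ acts only on agent $i$'s local policy $\pi_i$ via Eq.~\ref{eq:pshielding}, so the shielding transformation factorizes across agents and I never need to reason about the joint action distribution. This collapses Proposition~\ref{proposition:safety} into $n$ independent claims of the form ``shielding $\pi_i$ with $\mathcal{T}_i$ weakly increases its safety probability,'' one per agent.

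For each such claim, I would substitute the shielded policy from Eq.~\ref{eq:pshielding} into the definition of $P_{\pi_i^+}(\texttt{safe}\mid s)$ and expand:
\begin{equation*}
P_{\pi_i^+}(\texttt{safe}\mid s)=\sum_{a}P(\texttt{safe}\mid s,a)\cdot\frac{P(\texttt{safe}\mid s,a)}{P_{\pi_i}(\texttt{safe}\mid s)}\,\pi_i(a\mid s).
\end{equation*}
Treating $X(a):=P(\texttt{safe}\mid s,a)$ as a random variable under the distribution $\pi_i(\cdot\mid s)$, the numerator is $\mathbb{E}_{\pi_i}[X^2]$ while $P_{\pi_i}(\texttt{safe}\mid s)=\mathbb{E}_{\pi_i}[X]$. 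Cauchy--Schwarz (equivalently Jensen's inequality applied to $x\mapsto x^2$) then gives $\mathbb{E}[X^2]\geq(\mathbb{E}[X])^2$, so dividing by $\mathbb{E}[X]>0$ yields $P_{\pi_i^+}(\texttt{safe}\mid s)\geq P_{\pi_i}(\texttt{safe}\mid s)$. For unshielded agents one trivially has $\pi_i^+=\pi_i$, so the weak inequality holds vacuously.

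The strict part is where the main obstacle lies. Jensen's inequality is strict exactly when $X$ is non-constant on the support of $\pi_i(\cdot\mid s)$, i.e., when there exist actions with different per-action safety probabilities that each receive positive mass under $\pi_i$. I would argue that whenever $\pi_j$ truly \emph{violates} $\mathcal{T}_j$ in the sense that $P_{\pi_j}(\texttt{safe}\mid s)<1$ at some reachable $s$ and not every supported action has the same safety value, the Cauchy--Schwarz inequality above is strict at $(j,s)$, yielding $P_{\pi_j^+}(\texttt{safe}\mid s)>P_{\pi_j}(\texttt{safe}\mid s)$, as required by Definition~\ref{def:marlsafety}. The subtle issue to address is the degenerate case in which all supported actions share a common sub-unit safety probability: there shielding has no effect, so I would either (i) note that this case is excluded by the standard PLS assumption that $\mathcal{T}_j$ distinguishes at least one action as safer than another in the support of $\pi_j$, or (ii) sharpen the statement of ``violates'' to rule this case out. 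Handling this corner case cleanly, rather than the inequality chain itself, is the only nontrivial bookkeeping the proof demands.
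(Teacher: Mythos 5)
Your proposal follows the same top-level structure as the paper's proof — decompose the joint claim into $n$ independent per-agent claims (valid because each shield acts only on its own agent's local policy) and then invoke single-agent safety monotonicity — but it differs in two ways worth noting. First, where the paper simply cites Proposition~1 of \citet{yang2023safe} for the weak inequality $P_{\pi_i^+}(\texttt{safe}\mid s)\geq P_{\pi_i}(\texttt{safe}\mid s)$, you re-derive it via the $\mathbb{E}[X^2]\geq(\mathbb{E}[X])^2$ argument; this is exactly the underlying mechanism and makes the proof self-contained. Second, and more substantively, your treatment of the strict part is more careful than the paper's: the paper asserts that whenever some $\pi_j$ does not already satisfy $\mathcal{T}_j$ (i.e.\ $P_{\pi_j}(\texttt{safe}\mid s)<1$ somewhere), the shielded policy is ``by construction'' strictly safer, but you correctly observe that Jensen is strict only when $X(a)=P(\texttt{safe}\mid s,a)$ is non-constant on the support of $\pi_j(\cdot\mid s)$. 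In the degenerate case where every supported action has the same sub-unit safety value $c<1$, Eq.~\ref{eq:pshielding} gives $\pi_j^+=\pi_j$ and the strict inequality fails, so the proposition as literally stated needs either the non-constancy assumption or a sharpened notion of ``violates'' — precisely the repair you propose. This is a genuine gap in the paper's own argument that your proof identifies and handles; the rest of your reasoning matches the intended proof.
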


A proof is provided in the supplementary materials. This is similar to the first case of \citet{elsayed2021safe}'s correctness proof of the safety of factored shields. However, unlike factored shields, PLS does not guarantee \textit{per-step} safety; rather, it ensures that the 
shielded algorithm's safety is higher than that of the base algorithm \textit{in expectation}.
% \textit{expected} safety of the shielded policy is higher than that of the base policy.

\subsection{PL-Shielded Temporal Difference Learning (PLTD)}\label{sec:methods:pltd}
\citet{yang2023safe} define their shielded algorithm, probabilistic logic policy gradients (PLPG), as an extension of policy gradient methods, specifically, PPO \citep{schulman2017proximal}. Thus, to use it with DQN \citep{mnih2013playing}, SARSA \citep{zhao2016deep} (or other TD-learning methods), formalize the use of a ProbLog shield in conjunction with TD-learning. An important consideration for extending PLS to TD-learning is whether the algorithm is \textit{on-policy} or \textit{off-policy}. \citet{yang2023safe} note that for off-policy algorithms to converge, the exploration and learned policies must cover the same state-action space -- this assumption must be handled with care in the design of exploration strategies. We propose both on- and off-policy algorithms that incorporate safety constraints into Q-learning.

\subsubsection{Objective Function}
PLPG includes a \textit{shielded policy gradient} $[\nabla_\theta\log P_{\pi^+}(\mathtt{safe}|s)]$ and a \textit{safety gradient penalty} $[-\nabla_\theta\log P_{\pi^+}(\mathtt{safe}|s)]$. Since TD-methods do not use policy gradients, we must rely on a modified version of the latter to introduce safety constraints into the objective function.

Let $\mathcal{D}$ be the distribution of $d=\langle s^t,a^t,r^t,s^{t+1},a^{t+1}\rangle$ tuples extracted from a history buffer of the agent interacting with the MDP and $s^t$, $a^t$, $r^t$ be the state, taken action, and reward at time $t$. Let the Q-value approximator $Q_\theta(s,a)$ be parameterized by some parameters $\theta$. The off-policy loss (Q-learning based) and on-policy loss (SARSA-based) are augmented using the aforementioned safety penalty term. We refer to these methods as \textit{Probabilistic Logic TD Learning} (PLTD).

\begin{definition}[Probabilistic Logic TD Learning] \label{def:pltd}The PLTD minimization objective is:
\begin{small}
\begin{equation}
\begin{aligned}\label{eq:pltd}
    \mathcal{L}^{Q^+}(\theta) = \mathbb{E}_{d\sim \mathcal{D}}
    \Big[ \Big( r^t + \gamma \mathcal{X}
     - Q_\theta(s^t, a^t) \Big)^2 - \alpha S_P
    \Big]
\end{aligned}
\end{equation}
\end{small}
\noindent where $\mathcal{X}=\max_{a'} Q_\theta(s^{t+1}, a')$ for off-policy, and $\mathcal{X}=Q_\theta(s^{t+1}, a^{t+1})$ for on-policy DQN; $S_P=\log P_{\pi^+}(\mathtt{safe}|s)$ is the safety penalty; and $\alpha\in\mathbb{R}_{\geq 0}$ is the safety coefficient, or the weight of the safety penalty.
\end{definition}
The quantity $[-\log P_{\pi^+}(\mathtt{safe}|s)]$ is interpreted as \textit{the probability that $\pi^+$ satisfies the safety constraints} \citep{yang2023safe}. Under perfect sensor information, PLTD reduces to vanilla TD-learning within the reachable set of safe states. We thus get the following convergence property (following \citet{tsitsiklis1996analysis}):
\begin{proposition}\label{prop:pltdconvergence}
    PLTD, i.e.\ Eq.~\ref{eq:pltd} converges to an optimal safe policy given perfect safety information in all states in tabular and linear function approximation settings.
\end{proposition}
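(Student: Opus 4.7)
The plan is to show that, under perfect safety information, the PLTD objective collapses to the standard TD objective on a well-defined \emph{safe sub-MDP}, so that existing convergence results transfer directly.

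First, I would make precise the notion of perfect safety information as $P(\mathtt{safe}\mid s,a)\in\{0,1\}$ for every reachable $(s,a)$. Under this assumption, Eq.~\ref{eq:pshielding} guarantees that $\pi^+(a\mid s)=0$ whenever $a$ is unsafe in $s$, so the support of $\pi^+$ is contained in the safe action set $\mathcal{A}^{\mathtt{safe}}(s)$. Consequently, starting from any safe state, the trajectory distribution induced by $\pi^+$ is supported entirely on the \emph{safe sub-MDP} $\mathcal{M}^{\mathtt{safe}}$ obtained by deleting unsafe actions (and unreachable states). On this sub-MDP, $P_{\pi^+}(\mathtt{safe}\mid s)=1$ for every $s$, so the penalty $S_P=\log P_{\pi^+}(\mathtt{safe}\mid s)$ vanishes and its gradient with respect to $\theta$ is zero almost surely along the trajectories that actually generate the samples $d\sim\mathcal{D}$.

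Next, I would observe that the remaining expression in Eq.~\ref{eq:pltd} is exactly the standard squared TD error: the off-policy variant ($\mathcal{X}=\max_{a'}Q_\theta(s^{t+1},a')$) coincides with Q-learning restricted to $\mathcal{M}^{\mathtt{safe}}$, while the on-policy variant ($\mathcal{X}=Q_\theta(s^{t+1},a^{t+1})$) coincides with SARSA under $\pi^+$ on $\mathcal{M}^{\mathtt{safe}}$. From here, the tabular case follows from the classical stochastic approximation results of Watkins and Dayan for Q-learning and Singh et al.\ for SARSA, provided the usual conditions hold: Robbins--Monro step sizes ($\sum_t\alpha_t=\infty$, $\sum_t\alpha_t^2<\infty$), infinite visitation of each safe state--action pair, and (for SARSA) greedy-in-the-limit with infinite exploration of $\pi^+$. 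For the linear function approximation case, I would invoke Tsitsiklis and Van Roy's theorem on on-policy TD($\lambda$), whose hypotheses (bounded features, linearly independent basis, uniformly ergodic induced chain on $\mathcal{M}^{\mathtt{safe}}$) are unchanged by restriction to the safe sub-MDP; for off-policy linear Q-learning, convergence is known to require the standard coverage/realizability conditions, which I would state as preconditions rather than re-prove.

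The main obstacle is the bookkeeping needed to make the ``reduction to TD on $\mathcal{M}^{\mathtt{safe}}$'' rigorous: specifically, showing that the sampling distribution $\mathcal{D}$ used to train is itself supported on $\mathcal{M}^{\mathtt{safe}}$ (which must be justified by the fact that behavior is generated by $\pi^+$, or by any exploration policy whose support also respects the shield), and that the optimal policy of $\mathcal{M}^{\mathtt{safe}}$ coincides with the optimal \emph{safe} policy of the original MDP. The latter follows because any policy respecting $\mathcal{T}$ has support inside $\mathcal{A}^{\mathtt{safe}}$ and hence is a policy on $\mathcal{M}^{\mathtt{safe}}$, while conversely every policy on $\mathcal{M}^{\mathtt{safe}}$ is safe by construction; thus the value-maximizer on $\mathcal{M}^{\mathtt{safe}}$ is exactly the optimal safe policy, and convergence of the Q-estimates to the optimal Q-function of $\mathcal{M}^{\mathtt{safe}}$ yields the claim.
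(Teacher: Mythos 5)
Your proposal is correct and follows essentially the same route as the paper's proof: perfect safety information makes $P(\mathtt{safe}\mid s,a)$ binary, so the penalty term $S_P$ vanishes along shielded trajectories, the PLTD loss collapses to the ordinary TD loss on the safe sub-MDP, and the classical tabular (Watkins/SARSA) and linear (Tsitsiklis--Van Roy) convergence results apply, with the limit being optimal over the class of safe policies. The only differences are presentational --- the paper derives the binary-safety fact as a lemma from the ProbLog shield semantics rather than assuming it, and your explicit caveat that off-policy linear Q-learning needs additional coverage/realizability conditions is a point the paper elides.
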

We prove this in the supplementary. For neural Q-networks, global convergence is currently unsolved, though we can adopt common DQN stabilization tricks (summarized in \citep{hessel2018rainbow}). While we observe empirical stability of PLTD, a complete theoretical analysis is left for future work.
\vspace{-0.5\baselineskip}
\subsection{Shield Construction using ProbLog}\label{sec:shield_creation}
A shield in PLS is defined using a ProbLog program \citep{de2007problog}, denoted $\mathcal{T}$. Each consists of three components: ($i$) an annotated disjunction $\Pi_s$ of predicates whose values are set to a base action distribution $\pi$, ($ii$) a set of constraint-related inputs describing the current state $\mathbf{H}_s$, and ($iii$) a set of sentences $\mathcal{KB}$ (for \emph{knowledge base}) that specify the agent's constraints, including a definition for a predicate `\texttt{safe\_next}' (the predicted safety of the next state under $\pi$). Again, `safety' broadly refers to any probabilistic constraint satisfaction goal (ref.\ Sec~\ref{sec:safety}).

An exemplary shield (Shield~\ref{shield:prelim_mixed}) for constraining agent behavior towards the mixed Nash equilibrium in Stag-Hunt games is described as such: The probabilistic valuations of the `\texttt{action($\cdot$)}' predicates are inferred from $\pi$. The `\texttt{sensor($\cdot$)}' predicates are set to the normalized absolute difference between the precomputed mixed Nash strategy and the mean of the agent's historical actions. The predicate `\texttt{safe\_next}' is inferred via ProbLog semantics \cite{cozman2017semantics} and determines the safety of the next state under $\pi$ -- safer states are those where the stated normalized absolute differences are low. For brevity, we describe subsequent shields in-line rather than displaying its full ProbLog source. For details on how we designed the shields for each experiment, we kindly direct the reader to the supplementary.
\begin{lstlisting}[language=Prolog, caption={Shield (\shieldname{mixed}) for mixed \textit{Stag-Hunt} equilibrum.}, label={shield:prelim_mixed}]
% actions
action(0)::action(stag);
action(1)::action(hare).
% sensors
sensor_value(0)::sensor(stag_diff).
sensor_value(1)::sensor(hare_diff).
% safety constraints
unsafe_next :- action(stag), 
               sensor(stag_diff).
unsafe_next :- action(hare), 
               sensor(hare_diff).
safe_next :- \+unsafe_next.
\end{lstlisting}
\vspace{-10pt}
\subsubsection{Deterministic Shields}
Using PLS, agents may be made to be deterministic which can be useful when training agents in environments requiring hard constraints. This can be formulated as the following (proof in the supplementary):
% This can be formulated as the following 
% (proven in the supplementary)
\begin{proposition} \label{proposition:deterministic}
    For any game there exists a shield $\mathcal{T}$ such that the resulting shielded agent deterministically selects a single action $a\in\mathcal{A}$ for each state $s\in S$ if the shielded policy $\pi^+$ computes all other actions $a'\in\mathcal{A}\setminus \{a\}$ as perfectly unsafe.
\end{proposition}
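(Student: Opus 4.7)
The plan is to proceed directly from the probabilistic shielding identity in Eq.~\ref{eq:pshielding}, combined with the normalization requirement that any policy is a probability distribution over $\mathcal{A}$. First I would establish existence of such a shield by explicit construction: for the designated action $a^\star \in \mathcal{A}$ at state $s$, I would exhibit a ProbLog program $\mathcal{T}$ consisting of the annotated disjunction over actions inherited from $\pi$ together with the single rule \texttt{safe\_next :- action($a^\star$).} By the ProbLog semantics of \cite{cozman2017semantics} that $\mathcal{T}$ already appeals to, this yields $P(\texttt{safe}\mid s, a^\star)=1$ and $P(\texttt{safe}\mid s, a')=0$ for every $a'\in\mathcal{A}\setminus\{a^\star\}$, thereby realizing the antecedent of the proposition.

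Second, I would invoke Eq.~\ref{eq:pshielding} to propagate this structure into $\pi^+$. For each $a'\neq a^\star$, the numerator $P(\texttt{safe}\mid s, a')\cdot\pi(a'\mid s)$ vanishes, so $\pi^+(a'\mid s)=0$. Since $\pi^+(\cdot\mid s)$ is a probability distribution over $\mathcal{A}$, the remaining mass must concentrate on $a^\star$, giving $\pi^+(a^\star\mid s)=1$. Sampling from $\pi^+$ therefore deterministically returns $a^\star$, which is precisely the conclusion sought. A symmetric construction handles the per-state case when different $a^\star(s)$ are desired at different $s\in S$: the shield's \texttt{safe\_next} rule is simply conditioned on the state-describing predicates supplied by $\mathbf{H}_s$.

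The main obstacle, and the only genuinely subtle point, is the degenerate case $\pi(a^\star\mid s)=0$. Then $P(\texttt{safe}\mid s)=\sum_{a}P(\texttt{safe}\mid s,a)\pi(a\mid s)=0$ and the right-hand side of Eq.~\ref{eq:pshielding} is $0/0$, so $\pi^+$ is formally undefined. I would resolve this in one of two equivalent ways. The cleanest is to restrict to base policies with full support on $\mathcal{A}$ (as is standard for softmax-parameterized actors used throughout the paper), under which the division is always well-defined. Alternatively, one may extend $\pi^+$ by continuity: for any sequence $\pi_n\to\pi$ with $\pi_n(a^\star\mid s)>0$ the argument above forces $\pi_n^+(a^\star\mid s)=1$ for every $n$, so the canonical limiting definition places all mass on $a^\star$, preserving the deterministic conclusion. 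Either treatment completes the proof.
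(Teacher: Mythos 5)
Your proposal is correct and follows essentially the same route as the paper's proof: an explicit ProbLog construction that marks every action other than the target as unsafe, followed by an application of Eq.~\ref{eq:pshielding} to conclude $\pi^+(a^\star\mid s)=1$. Your explicit treatment of the degenerate case $\pi(a^\star\mid s)=0$ is a small improvement in rigor over the paper's argument, which implicitly divides by $\pi(a_0\mid s)$ and therefore tacitly assumes full support.
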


% \subsection{Constructing Shields}

% The core of PLS is the ability to specify probabilistic constraints, and was done by \citet{yang2023safe} using ProbLog \citep{de2007problog}, a probabilistic logic programming language. Such programs will be known as \textit{shields}\footnote{According to shield classification of \citet{alshiekh2018safe}, PLS can be considered a \textit{preemptive} shield -- it changes the action distribution before the agent takes an action.} and will be denoted using the symbol $\mathcal{T}$. A shield consists of three parts: the action distribution $\pi$ (whose values are input to the shield from the agent), a representation of the current state relevant to the constraint (called `sensors',  also input to the shield from an external source) and the constraints themselves, defining what it means for a certain state to be \textit{safe}. For example in Centipede game, agents can be forced to always play \textit{C} at every time step (regardless of previous actions) until the game terminates and a reward is attained using the following shield:
% %\begin{lstlisting}[language=Prolog, caption={$\mathcal{T}_{PD}$: Shield for \textit{Prisoner's Dilemma}}, label={shield:pd}]
% % actions
% %action(0)::action(cooperate);
% %action(1)::action(defect).

% % safety constraints
% %unsafe_next :- action(defect).
% %safe_next :- \+unsafe_next.
% %\end{lstlisting}
% As a result, it is expected that shielded agents will always continue, as this is the only safe action in any round.

% \begin{lstlisting}[language=Prolog, caption={Simple shield for \textit{Centipede}.}, label={shield:centipede}]
% % actions
% action(0)::action(continue);
% action(1)::action(stop).

% % safety constraints
% unsafe_next :- action(stop).
% safe_next :- \+unsafe_next.
% \end{lstlisting}

\section{Results}

We organize our experiments around key multi-agent phenomena relevant to several open MARL challenges. Rather than focusing on games individually, each section centers on a particular strategic challenge (e.g., equilibrium selection, social dilemmas) and uses one or more games as illustrative testbeds. For the sake of clarity, details are moved to the supplementary material, such as implementation details, shield programs for each experiment, a validation of PLTD in a single-agent experiment, all hyperparameters and training curves. All displayed results are averaged over 5 random seeds to account for variability in training. All reported errors correspond to standard deviations across runs and respective episodes/steps. Our code is open-sourced at \url{https://github.com/satchitchatterji/ShieldedMARL}.

\subsection{Coordination and Equilibrium Selection (Stag-Hunt)}

We use \textit{Stag-Hunt} to test the ability of PLS to guide agents towards normative behaviors in games with multiple Nash equilibria. IPPO serves as the baseline, while SIPPO agents are equipped with a shield constraining the agents towards either the pure cooperative (\shieldname{pure}) or mixed Nash equilibrium (\shieldname{mixed}) --  play $Stag/Hare$ 60\%/40\% of the time with an expected utility of 2.6. \shieldname{pure} simply tells the agent to not play $Hare$, making it a deterministic shield according to Proposition~\ref{proposition:deterministic}. \shieldname{mixed}, used as an example in Section~\ref{sec:shield_creation}, defines safety as the absolute difference between the precomputed mixed Nash strategy and the mean of the agent's historical actions.

\begin{table}[h]
\caption{Results for \textit{Stag-Hunt} with PPO-based agents over the last 50 training episodes. $\bar{r}$ is the mean return and \textit{cooperation} is defined as ${t_{max}^{-1}}\sum_{t=0}^{t_{max}} \mathbf{P}_{\mathcal{T}_{pure}}(\texttt{safe}|s^t)$.}

% \label{tab:sh}
% \centering
% \begin{tabular}{l@{\hspace{6mm}}l@{\hspace{6mm}}l@{\hspace{6mm}}l}
% \toprule
% \textbf{Algorithm} & $\bar{r}$ (train) & $\bar{r}$ (eval) & cooperation \\
% \midrule
% IPPO & 1.99$\pm$0.03 & 1.99$\pm$0.02 & 0.01$\pm$0.01 \\
% SIPPO ($\mathcal{T}_{pure}$) & 4.00$\pm$0.00 & 4.00$\pm$0.00 & 1.00$\pm$0.00 \\
% SIPPO ($\mathcal{T}_{mixed}$) & 2.57$\pm$0.48 & 2.63$\pm$0.43 & 0.58$\pm$0.08 \\
% \bottomrule
% \end{tabular}
% \end{table}

\label{tab:sh}
\centering
\begin{tabular}{l@{\hspace{6mm}}l@{\hspace{6mm}}l@{\hspace{6mm}}l}
\toprule
{Algorithm} & $\bar{r}$ (train) & $\bar{r}$ (eval) & cooperation \\
\midrule
IPPO & 1.99$\pm$0.03 & 1.99$\pm$0.02 & 0.01$\pm$0.01 \\
SIPPO ($\mathcal{T}_{pure}$) & 5.00$\pm$0.00 & 5.00$\pm$0.00 & 1.00$\pm$0.00 \\
SIPPO ($\mathcal{T}_{mixed}$) & 2.57$\pm$0.48 & 2.63$\pm$0.43 & 0.58$\pm$0.08 \\
\bottomrule
\end{tabular}
\end{table}

Table~\ref{tab:sh} shows the results this setting. The IPPO agents converge to the non-cooperative Nash equilibrium, consistently playing \textit{Hare} with a reward of 2 per step. In contrast, agents using $\mathcal{T}_{pure}$ quickly and reliably play \textit{Stag}, earning a higher reward of 5. The unshielded agents never converge to the mixed Nash equilibrium, even though PPO is capable of learning stochastic policies -- small deviations push these agents toward playing pure strategies, often resulting in the non-cooperative equilibrium. However, \textit{shielded} PPO agents using $\mathcal{T}_{mixed}$ successfully adopt the mixed strategy, though with high variability due to this attracting nature of the pure equilibria.

\subsection{Temporal Coordination (Centipede)}

We use \textit{Centipede} to test how well SMARL guides agents to cooperate to achieve large collective long-term rewards, even in scenarios where the dominant strategy suggests defection. A shield was constructed (\shieldname{continue}) to encourage agents to play the game by constraining against early defection. Table~\ref{tab:cent} shows results for pairs of MARL agents learning to play the \textit{Centipede} game. IPPO agents occasionally learn to play \textit{Continue} through all $t_{max}=50$ steps within 500 episodes. In contrast, we see the SIPPO agents playing the full game from the start of training.

\begin{table}
\caption{Results for \textit{Centipede} with PPO- and DQN-based agents over the last 50 training episodes. \textbf{Safety} is defined as $\frac{1}{T}\sum_{t=0}^T \mathbf{P}_{\mathcal{T}_{cent}}(\texttt{safe}|s^t)$ and $R_{\text{ep}} = \sum_{t=0}^{{t_{max}}} r^t$.}
\label{tab:cent}
\centering
\begin{tabular}{l@{\hspace{5mm}}l@{\hspace{5mm}}l@{\hspace{5mm}}l}
\toprule
{Algorithm} & $R_{\text{ep}}$ (train) & $R_{\text{ep}}$ (eval) & Safety \\
\midrule
IPPO & 42.35$\pm$36.62 & 42.83$\pm$35.81 & 0.83$\pm$0.23 \\
SIPPO & 100.50$\pm$0.00 & 100.50$\pm$0.00 & 1.00$\pm$0.00 \\
\midrule
IQL ($\epsilon$-greedy) & 34.62$\pm$46.59 & 34.70$\pm$46.53 & 0.68$\pm$0.23 \\
SIQL ($\epsilon$-greedy) & 100.50$\pm$0.00 & 100.50$\pm$0.00 & 1.00$\pm$0.00 \\
\midrule
IQL (softmax) & 1.73$\pm$1.01 & 30.10$\pm$38.61 & 0.73$\pm$0.21 \\
SIQL (softmax) & 100.50$\pm$0.00 & 100.50$\pm$0.00 & 1.00$\pm$0.00 \\
\bottomrule
\end{tabular}
\end{table}

\begin{figure*}[]
    \centering
\includegraphics[width=\linewidth]{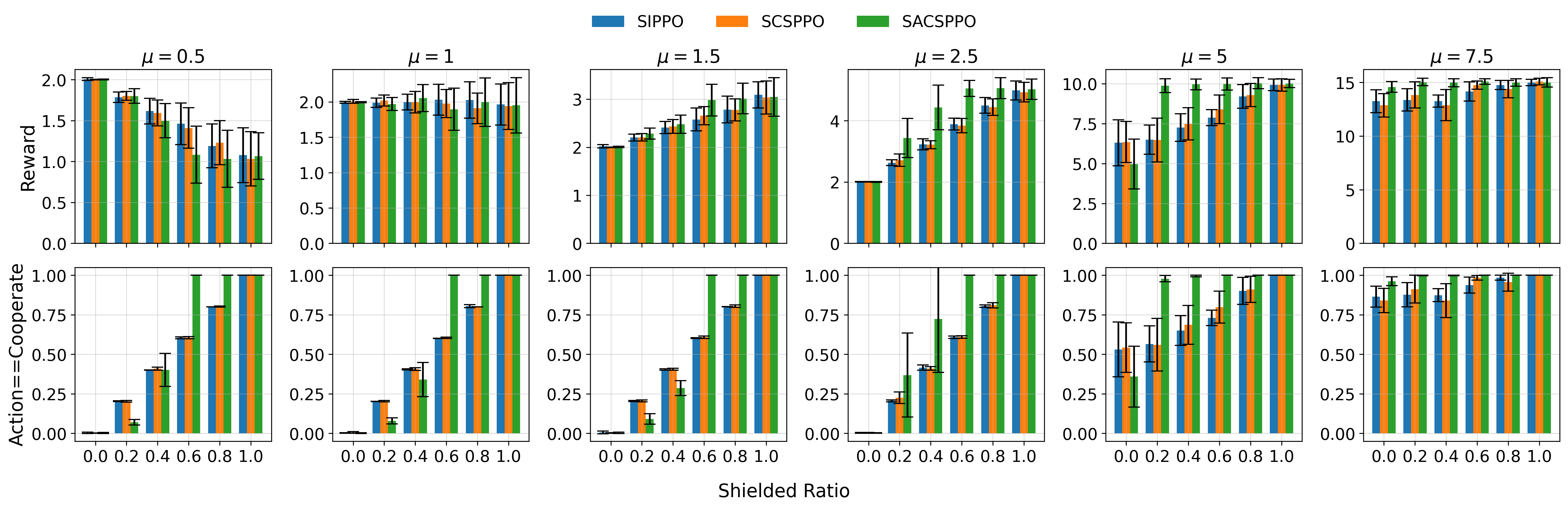}\vspace{-20pt}
    \caption{Results for the final 10 episodes of the 5-player \textit{EPGG} for SIPPO, SCSPPO, and SACSPPO agents with varying $\mu$ and ratios of shielded agents.}
    \label{fig:epgg_many}
\end{figure*}

For DQN-based agents, $\epsilon$-greedy and softmax exploration policies were tested. Unshielded $\epsilon$-greedy agents fail to progress far, as early on in exploration, $\epsilon\approx1$ ($\pi$ is uniform), and thus a 25\% probability of the agents going to the second stage of the game. The probability of reaching some early stage $\ell$ is $\approx0.25^{\ell-1}$. This discourages exploration, as stopping early provides a small but consistent reward, aligning with the Nash/subgame-perfect equilibrium. Unshielded softmax agents perform slightly better, but with high variance -- some agent pairs cooperate while most exit early. This highlights the influence of the choice of exploration strategy, and how using shield can guide desirable behavior -- all SIQL variations learn to play \textit{Continue} consistently.

\begin{figure}[H]
    \centering
    % \begin{minipage}[b]{0.3\textwidth}
    %     \centering
    %     \includegraphics[width=\textwidth]{Experiments/images/epgg/bar_safety_training_epgg.png}
    %     \vspace{0.3em}

    %     \textbf{(a)} Mean training rewards.
    %     \label{fig:epgg_bar_ppo:subfig1}
    % \end{minipage}
    % \hfill
    % \begin{minipage}[b]{0.3\textwidth}
    %     \centering
    %     \includegraphics[width=\textwidth]{Experiments/images/epgg/bar_safety_evaluation_epgg.png}
    %     \vspace{0.3em}

    %     \textbf{(b)} Mean evaluation rewards.
    %     \label{fig:epgg_bar_ppo:subfig2}
    % \end{minipage}
    % \hfill
    % \begin{minipage}[b]{0.3\textwidth}
    %     \centering
    %     \includegraphics[width=\textwidth]{Experiments/images/epgg/bar_safety_safety_epgg.png}
    %     \vspace{0.3em}

    %     \textbf{(c)} Mean safety/cooperation
    %     \label{fig:epgg_bar_ppo:subfig3}
    % \end{minipage}
    \includegraphics[width=\linewidth]{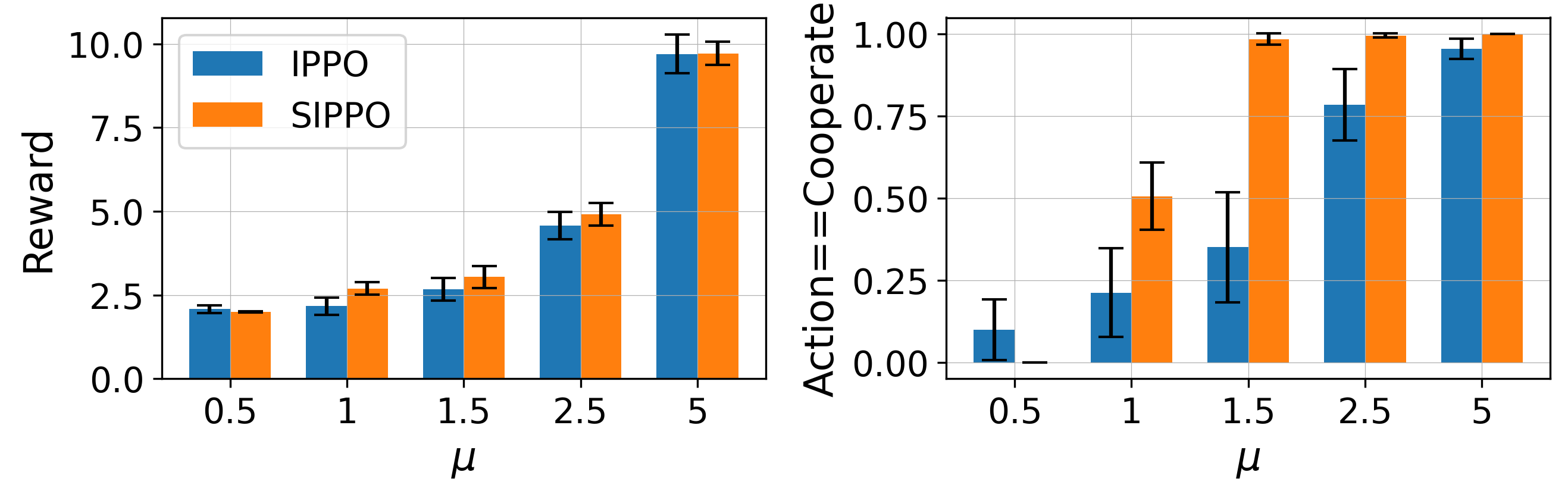}\vspace{-20pt}
    \caption[Results for 2-player \textit{EPGG}.]{Results for the final 10 episodes for the two-player \textit{Extended Public Goods Game} for PLPG-based agents.}
            \label{fig:epgg_bar_ppo}
\end{figure}

\vspace{-20pt}
\subsection{Social Dilemmas in Stochastic, Mixed-Motive Environments (Extended Public Goods Game)}

We use \textit{EPGG} to investigate how PLS can promote cooperation in environments with stochastic social incentives. Our experiments address two central questions: ($i$) \textit{can agents be made to learn strategies that reflect long-term expectations rather than short-term payoffs?}, and ($ii$) \textit{how does the effectiveness of shielding scale when only a subset of agents is shielded, and under different degrees of parameter sharing?} In the first experiment, we assess how shielding can steer agents toward the \textit{expected} Nash equilibrium of the EPGG. In the second, we evaluate how the degree of parameter sharing affects emergent cooperation.

\subsubsection{Learning Long-Term Normative Behavior}

We begin with a 2-player setting to isolate the effect of shielding on equilibrium selection under payoff uncertainty. This controlled setup enables a clear comparison between shielded agents learning long-term behavior versus unshielded agents reacting to stochastic payoffs. Limiting the population size removes confounding factors such as social influence and allows for direct game-theoretic analysis, providing a clean baseline before scaling to larger populations.

To test whether PLS can guide agents toward long-term cooperation, we designed a shield (\shieldname{EPGG}) that directs agents to follow the expected Nash equilibrium, based on an empirical estimate $\hat{\mu}$ of the stochastic payoff multiplier $f_t \sim \mathcal{N}(\mu, 1)$. This contrasts with the unshielded baseline, where agents maximize the instantaneous payoff.

Figure~\ref{fig:epgg_bar_ppo} reports results for $\mu \in \{0.5, 1, 1.5, 2.5, 5.0\}$. Unshielded IPPO agents reliably converge to the instantaneous equilibrium -- defecting when $f_t<1$ and cooperating when $f_t>2$. In contrast, SIPPO agents quickly converge to the desired behavior, leading to more prosocial behavior under uncertainty. We observe that SIPPO agents exhibit lower reward variance and more stable cooperation compared to the baseline, especially in mixed-motive cases ($1 < \mu < 2$). For $\mu = 0.5$, the shielded agents correctly learn to defect, but earn slightly less due to foregoing occasional cooperative gains from high $f_t$. Overall, shielded agents match or exceed the baseline in reward and demonstrate more consistent cooperation.
% \vspace{-20pt}
\subsubsection{Partial Shielding and Parameter Sharing}

To focus on how shield coverage and parameter sharing influence group behavior, we use a simplified shield that always selects the cooperative action. Unlike the previous experiment, this avoids computing expected equilibria and isolates the effect of shielding structure. Importantly, even though the shield is deterministic, shielded agents still receive gradient-based learning signals. This allows safety information to influence shared parameters, enabling unshielded agents to adopt cooperative behavior indirectly.

We run EPGG with $n=5$ agents, where a subset $k\in\{0,\dots,5\}$ of agents are shielded with a deterministic policy that always cooperates (reported as a \textit{shielded ratio}=$k/n$). We compare SIPPO, SCSPPO, and SACSPPO to test how different levels of parameter sharing modulate the influence of the shielded agents on the unshielded population.

Figure~\ref{fig:epgg_many} shows grouped bar charts for $\mu \in \{0.5, 1, 1.5, 2.5, 5, 7.5\}$. Results show a clear trend: increasing the number of shielded agents leads to higher overall cooperation and greater episodic reward in the mixed-motive range ($1 < \mu < 5$). Furthermore, this effect is amplified by parameter sharing: SACSPPO consistently achieves the same or higher cooperation than SCSPPO and SIPPO at the same shielded ratio. In fully cooperative settings ($\mu=7.5$), all configurations converge to high cooperation, but SACSPPO still exhibits the lowest variance. In competitive settings ($\mu=0.5$), cooperation remains low for the baseline algorithms, as expected, increasing with the number of shielded agents -- this results in an increasingly lower mean reward with the benefit of the agents instead cooperating. Notably, in the competitive regime, SACSPPO exhibits the highest level of cooperation among the methods, indicating that shared learning with shielded agents can drive prosocial behavior in unshielded agents -- even when it reduces their individual rewards.

These results demonstrate that probabilistic shields can influence unshielded agents via shared parameters, serving as an effective conduit for prosocial behavior in multi-agent social dilemmas.

% \vspace{-20pt}
\subsection{Spatio-Temporal Coordination with Imperfect Information (Markov Stag-Hunt)}\label{sec:exp:msh}
We use the \textit{Markov Stag-Hunt} environment \citep{peysakhovich2018prosocial} to investigate how PLS-SMARL enables coordination in sequential, stochastic, and partially observable multi-agent settings. This environment extends the normal-form \textit{Stag-Hunt} to a spatial domain where agents must learn when and how to cooperate. Three experiments are conducted: ($i$) we study the effect of shield constraint strength on agent behavior, ($ii$) we evaluate how cooperation scales with population size, (iii) finally we explore how cooperation emerges when only a subset of agents is shielded, highlighting the benefits of shielded parameter sharing.

\subsubsection{Variation in Shield Strength} \label{exp:shield_stength_variation}

\begin{table*}[]
\vspace{-10pt}\caption{Results for 2-player \textit{Markov Stag-Hunt} with shielded and unshielded PPO-based agents. Columns show: total plants harvested, stags killed, penalties from solo hunts, episodic reward ($R_{\text{ep}} = \sum_{t=0}^{t_{max}} r^t$), and mean episodic safety (${t_{max}^{-1}}\sum_{t=0}^{t_{max}} \mathbf{P}_{\mathcal{T}_{(\cdot)}}(\texttt{safe}|s^t)$).}
\label{tab:msh_full_results}
\centering
\begin{tabular}{l@{\hspace{4mm}}l@{\hspace{4mm}}l@{\hspace{4mm}}l@{\hspace{4mm}}l@{\hspace{4mm}}l@{\hspace{4mm}}l}
\toprule
{Algorithm} & $\sum$plants & $\sum$stags & $\sum$penalties & $R_{\text{ep}}$ (train) & $R_{\text{ep}}$ (eval) & Mean safety \\
\midrule
IPPO & 18.74$\pm$4.98 & 0.09$\pm$0.11 & 5.50$\pm$2.04 & 13.71$\pm$4.97 & 16.07$\pm$5.80 & -- \\
SIPPO ($\mathcal{T}_{weak}$) & 18.73$\pm$4.26 & 12.80$\pm$7.41 & 14.25$\pm$3.57 & 68.48$\pm$35.19 & 79.80$\pm$38.33 & 0.96$\pm$0.01 \\
SIPPO ($\mathcal{T}_{strong}$) & 11.33$\pm$4.08 & 77.32$\pm$8.82 & 11.07$\pm$3.25 & 386.86$\pm$46.50 & 393.07$\pm$65.27 & 0.95$\pm$0.01 \\
\bottomrule
\end{tabular}
\end{table*}

To isolate the effect of constraint strength on emergent cooperation, we compare two shields: $\mathcal{T}_{weak}$ and $\mathcal{T}_{strong}$. These were constructed such that $\mathcal{T}_{weak}\subset\mathcal{T}_{strong}$ in terms of their safety constraints. The former constrains agents to cooperate only when both are near a stag, while the latter more strictly promotes cooperation regardless of global position. Both shields are applied in a 2-agent setting using the SIPPO algorithm, with unshielded IPPO as a baseline.

% Two shields, $\mathcal{T}_{strong}$ and $\mathcal{T}_{weak}$, were constructed such that $\mathcal{T}_{strong}\subset \mathcal{T}_{weak}$ in terms of their safety constraints. Broadly, $\mathcal{T}_{weak}$ constrains the agents' behaviors to strongly cooperate when they both are near the stag and are unconstrained otherwise, and $\mathcal{T}_{strong}$ influences with towards cooperation regardless of where they are placed on the grid. 

In this experiment, analyzing just reward or safety does not fully capture cooperative behavior, so the key metrics displayed in Table~\ref{tab:msh_full_results} are: total plant harvests per episode, successful cooperative stag hunts, and unsuccessful attempts to hunt a stag alone. Additionally, we show the total return for each episode, and mean safety for shielded agents. 

As expected, IPPO agents avoid cooperation (thereby avoiding risk) and steadily harvest plants for a small, reliable reward. SIPPO agents using $\mathcal{T}_{weak}$ show moderate improvement, achieving occasional stag hunts while maintaining similar plant collection. Their increased cooperation also results in more penalties, as agents are willing to risk solo hunts. Under $\mathcal{T}_{strong}$, cooperation improves dramatically: agents prioritize stag hunting and harvest fewer plants, yet earn more than five times the episodic reward compared to SIPPO with $\mathcal{T}_{weak}$. This suggests that well-designed stronger shields not only induce safer behavior, but may lead to better coordination and long-term returns.
% $\blacktriangleright$ \textbf{\textit{Rewards:}} IPPO agents follow the same behavioral pattern as in the NFG version of the \textit{Stag-Hunt}, preferring to harvest plants for consistent, low-risk rewards. They avoid cooperating to not risk failed solo stag hunts, favoring harvesting plants. 
% % As noted by \citet{peysakhovich2018prosocial}, adjusting the reward structure may encourage more social behavior. 
% SIPPO pairs gather more reward by learning to be more cooperative.

% $\blacktriangleright$ \textbf{\textit{Behavior:}} SIPPO agents with $\mathcal{T}_{weak}$ show significant improvement in cooperation, achieving an average of around 13 successful stag hunts per episode by 500 episodes, while maintaining IPPO-level plant harvests. However, their willingness to take risks increases, reflected in an uptick in the number of stag penalties over time as the \textit{potential} rewards outweigh risk-aversion. Agents generally act very safely with respect to their own shields. SIPPO agents under $\mathcal{T}_{strong}$ demonstrate even greater cooperation, focusing more on stag hunts and less on plant harvesting, despite only slightly lower stag penalties as the $\mathcal{T}_{weak}$ agents. This implies that they learn better cooperative strategies. Their rewards are significantly higher (by more than a factor of 5) than those of agents using $\mathcal{T}_{weak}$, which are already much greater than the unshielded agents' rewards.

\subsubsection{Scaling Cooperation with Population Size}\label{exp:scaling_msh}

We next examine how cooperation and safety scale with group size when all agents are shielded using $\mathcal{T}_{strong}$. We vary the number of agents $n \in \{2, 3, 4, 5\}$ and compare three unshielded baselines (IPPO, CSPPO, ACSPPO) and three shielded variants (SIPPO, SCSPPO, SACSPPO). Metrics include total plants harvested, stags hunted, solo hunt penalties, mean episodic safety, and per-agent reward.

Figure~\ref{fig:msh_many_agent} shows that unshielded agents (IPPO, CSPPO, ACSPPO) consistently avoid cooperation, harvesting plants for low-risk, low-reward outcomes. Notably, ACSPPO -- despite its fully shared architecture -- performs worst overall, with near-zero safety and minimal rewards, underscoring that parameter sharing alone is insufficient to induce prosocial behavior.

In contrast, all shielded variants show increasing cooperation as population size grows. SACSPPO agents, in particular, demonstrate the strongest cooperative intent: they achieve the highest number of stag hunts, while also incurring more penalties, reflecting increased willingness to take coordination risks. Despite these penalties, SACSPPO yields the highest rewards and very high adherence to the shield's definition of safety constraints. While per-agent rewards tend to decrease with larger populations, this is expected given that key environment parameters -- such as the number of stags and plants, grid size, and episode length -- remain fixed across all settings.

These results demonstrate that PLS scales robustly with population size. Notably, with full parameter sharing, SACSPPO leverages shared gradients and consistent safety signals to generalize cooperation more effectively across agents, even in complex multi-agent settings.
\begin{figure}
    \centering
    \includegraphics[width=\linewidth]{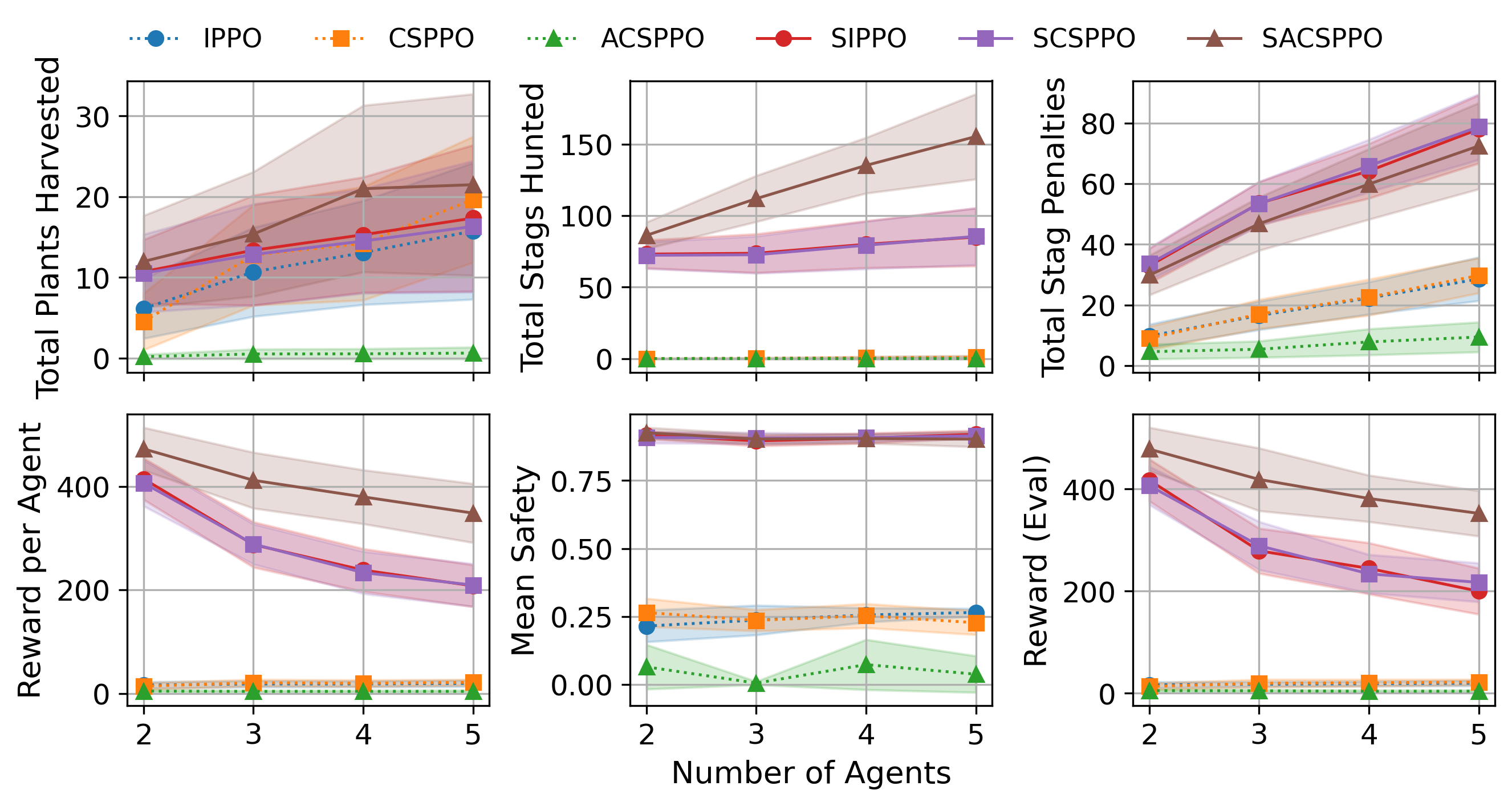}\vspace{-20pt}
    \caption{Results for the final 50 episodes of \textit{Markov Stag-Hunt} with all agents shielded with $\mathcal{T}_{strong}$, for $n \in \{2,3,4,5\}$.}
    \label{fig:msh_many_agent}
\end{figure}
\begin{figure}
    \centering
\includegraphics[width=\linewidth]{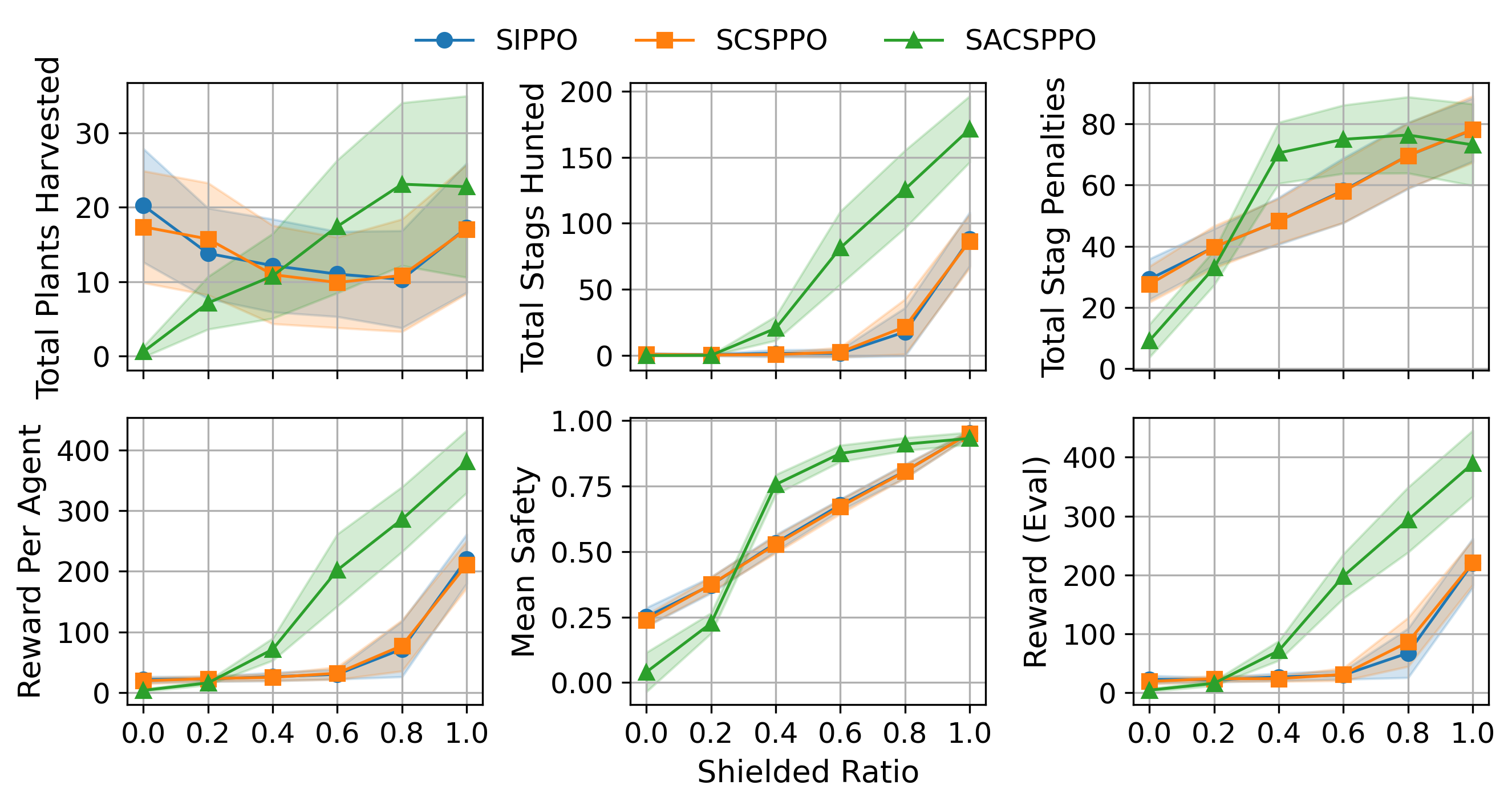}\vspace{-20pt}
    \caption{Results for the final 50 episodes of the \textit{Markov Stag-Hunt} with $5$ agents, varying the number of shielded agents using the strong shield $\mathcal{T}_{strong}$.}
    \label{fig:msh_part_shielding}
\end{figure}
\subsubsection{Partial Shielding and Parameter Sharing} \label{exp:partial_shield}

Finally, we investigate cooperation in partially shielded populations. Fixing the population size at $n=5$, we vary the number of shielded agents $k \in \{0, 1, \dots, 5\}$ and compare SIPPO, SCSPPO, and SACSPPO. All shielded agents use the strong shield $\mathcal{T}_{strong}$, and unshielded agents remain unconstrained. This experiment tests how shielded behavior propagates through shared parameters in populations where only a subset of agents receive explicit safety constraints.
\newline\indent~Figure~\ref{fig:msh_part_shielding} shows that cooperation, as measured by total stags hunted, increases monotonically with the number of shielded agents across all methods. SACSPPO shows the strongest effect: even with just two shielded agents, it significantly outperforms the other methods in cooperative behavior and safety adherence. As more agents are shielded, SACSPPO reaches a high level of cooperation, but also accumulates the most stag penalties -- suggesting increased risk-taking and cooperative intent, even when coordination occasionally fails.

Reward and evaluation performance trend similarly. While all methods improve with greater shield coverage, SACSPPO consistently yields the highest returns, particularly at higher shielded ratios. This indicates that the safety-driven cooperative strategies learned via PLS are not only prosocial but also effective (reward-maximizing). In contrast, SIPPO and SCSPPO agents remain more conservative: they hunt fewer stags, incur fewer penalties, and achieve lower overall rewards.

Plant harvesting does not uniformly decrease with increasing shielded ratio across all methods. While SIPPO and SCSPPO show a modest decline -- suggesting a shift from self-interested strategies to coordinated risk-taking -- SACSPPO exhibits an unexpected increase in plant harvesting as more agents are shielded. Taken together with SACSPPO’s high shield adherence, this may indicate that stronger parameter sharing drives homogenization that improves reward maximization, encouraging agents to opportunistically exploit lower-risk strategies. Alternatively, the behavior could reflect an emergent division of labor, where agents already near the stag focus on hunting while others stabilize returns via plant harvesting. Distinguishing between these hypotheses would require further analysis of agent roles and policy diversity. Importantly, mean safety steadily increases with shield coverage in all methods, but SACSPPO achieves the highest safety levels -- highlighting its superior ability to align unshielded agents with shielded behavior through shared learning.

Together, these results demonstrate that PLS can induce prosociality even under partial shielding, and that shared parameters are a key conduit for propagating safety-aligned behavior to unshielded agents.

\section{Related Work}

In safe (MA)RL, several recent frameworks address safety during learning and execution \citep{gu2024review}. \citet{gu2023safe}, propose \textbf{constrained Markov games} and \textbf{MACPO}, but without formal language semantics. \citet{subramanian2024neuro} integrate \textbf{PL neural networks} as the function approximator within agents to affect interpretable top-down control; however, in our method, the agents may use any function approximator, maximizing potential learning flexibility. \citet{waga2022dynamic} use \textbf{Mealy machines} to approximate a world model to ensure safety, leading to some inherent symbolic interpretability. Other related concepts include \textbf{dynamic shielding}, for instance, \citet{elsayed2021safe} who introduce two frameworks that dynamically enforce safety constraints -- our own method is similar to their \textbf{factored shields}. 
\citet{banerjee2024dynamic}, \citet{zhang2019mamps} and \citet{xiao2023model} are recent works that use the notion of (dynamic) \textbf{model predictive shielding}. Though these show promising results, these do not benefit from the interpretability of formal language semantics.
\citet{melcer2022shield} explore \textbf{decentralized shielding} (similar to factored shields), i.e.\ allowing each agent to have its own shield, reducing computational overhead and enhancing scalability. \citep{melcer2022shield} and \citep{elsayed2021safe} use \textbf{LTL as formal language}, though with a high knowledge requirement of the underlying MDP -- this is in contrast to our use of PLS which requires only safety-related parts of the states. Finally, some methods, such as \citet{wang2024safe}, enforce safety through \textbf{natural language constraints}; in contrast, our approach builds on PLS to provide stronger formal guarantees.

\section{Discussion and Conclusion}
In this article, we introduced Probabilistic Logic Shielded MARL, a novel set of DQN- and PPO-based methods that extend PLS to MARL. Several classes of games were analyzed and shown to benefit from SMARL in terms of safety and cooperation, and theoretical guarantees of these methods were presented.

Despite the benefits implied by our results, a few limitations and extensions must be reflected upon:
% \begin{compactenum}
    $\blacktriangleright$ First, current methods for solving PL programs become computationally expensive as the state and action spaces grow \citep{fierens2015inference}. However, as with factored shields \citep{elsayed2021safe}, our method scales only linearly with the number of agents. %We reiterate the importance of studying two-player games as they are foundational in understanding larger population dynamics. 
    % Thus, future work will consider PLS being applied to larger populations. 
    $\blacktriangleright$  The use of \textit{a-priori} constraints in PLS provides \textit{predefined} safety knowledge, which is straightforward for simple environments but becomes complex in dynamic, real-world systems. This approach limits scalability, relying on expert-defined safety measures. Most existing dynamic shielding methods do not use formal language semantics, and thus a balance must be struck between autonomous shield updating and formal verification.
    $\blacktriangleright$  The complexity of the ProbLog program in PLS impacts performance and behavior, as seen in the Markov Stag-Hunt experiments where the stronger shield promoted more cooperation but at the cost of increased complexity w.r.t.\ hand-designed rules. It may be interesting to examine how varying shield complexity affects computational load and safety outcomes. 
    $\blacktriangleright$  Partial shielding was explored where shielded agents update their parameters with safety gradients, while unshielded agents focus on maximizing rewards, optionally sharing parameters. A weighted shared parameter update may balance safety and reward optimization dynamically. 
    $\blacktriangleright$  Finally, while PL-SMARL aims to enhance safety, its implementation must carefully consider ethical and societal impacts, as hand-designed constraints may introduce biases, under-specified definitions, or system vulnerabilities that current research has yet to fully address.

%%%%%%%%%%%%%%%%%%%%%%%%%%%%%%%%%%%%%%%%%%%%%%%%%%%%%%%%%%%%%%%%%%%%%%%%

%%% Use this environment to include acknowledgements (optional).
%%% This will be omitted in doubleblind mode.

\begin{ack}
This publication is part of the project `Hybrid Intelligence: augmenting human intellect' (\url{https://hybrid-intelligence-centre.nl},) with project number 024.004.022 of the research programme ‘Gravitation’ which is (partly) financed by the Dutch Research Council (NWO).
\end{ack}

%%%%%%%%%%%%%%%%%%%%%%%%%%%%%%%%%%%%%%%%%%%%%%%%%%%%%%%%%%%%%%%%%%%%%%%%

%%% Use this command to include your bibliography file.

\bibliography{mybibfile}

% \clearpage\newpage
\appendix

\section*{Supplementary Material for \textit{Analyzing Probabilistic Logic Shields for Multi-Agent Reinforcement Learning}}
The supplementary material is organized in the following way:
\begin{itemize} [Appendix]
    \item \ref{appsec:proofs}: Theoretical proofs of propositions.
    \item \ref{appsec:envs}: In-depth discussion of environments used, including implementation details, reward functions and settings. 
    \item \ref{appsec:shields}: Details on shields including their ProbLog source codes and process of construction
    \item \ref{appsec:cartsafe_results}: Single-agent experiment used to as an initial indication of PLTD's safety and efficacy.
    \item \ref{appsec:hyperparams}: Table and explanation of all hyperparameters used.
    \item \ref{appsec:training_curves}: Partial training/reward curves.
\end{itemize}
% \onecolumn % optional
\section{Proofs}\label{appsec:proofs}
\subsection{Proof of Proposition~3.2} \label{app:safety}
\begin{proposition}
    A PL-SMARL algorithm with agents $i\in N$ with respective policies $\pi_i^+$ and shields $\mathcal{T}_i$ has a joint shielded policy $\vec\pi^+$ \textbf{at least as safe} as their base joint policy $\vec\pi$, and \textbf{strictly safer} whenever any individual base policy $\pi_j$ violates its shield $\mathcal{T_j}$.
\end{proposition}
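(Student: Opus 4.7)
The definition of relative SMARL safety (Def.~3.1) is stated pointwise in the agent index $i$ and in the state $s$, so the joint claim decomposes into $n$ independent single-agent claims: it suffices to show, for every agent $i$ and every reachable state $s$, that $P_{\pi^+_i}(\texttt{safe}\mid s)\geq P_{\pi_i}(\texttt{safe}\mid s)$, with strict inequality at the state witnessing the violation. Throughout, I can work with a fixed agent $i$ and fixed $s$, and suppress indices. The plan has three parts: an algebraic reduction, a convexity inequality, and a case analysis for the strictness clause.

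First, I would expand $P_{\pi^+}(\texttt{safe}\mid s)$ directly from Eq.~2 and Eq.~1:
\begin{equation*}
P_{\pi^+}(\texttt{safe}\mid s)=\sum_{a}P(\texttt{safe}\mid s,a)\,\pi^+(a\mid s)=\frac{\sum_{a}P(\texttt{safe}\mid s,a)^2\,\pi(a\mid s)}{P_{\pi}(\texttt{safe}\mid s)}.
\end{equation*}
Writing $X(a):=P(\texttt{safe}\mid s,a)$ as a random variable under the law $a\sim\pi(\cdot\mid s)$, this is exactly $\mathbb{E}_\pi[X^2]/\mathbb{E}_\pi[X]$, while $P_{\pi}(\texttt{safe}\mid s)=\mathbb{E}_\pi[X]$. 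The desired inequality $\mathbb{E}_\pi[X^2]/\mathbb{E}_\pi[X]\geq \mathbb{E}_\pi[X]$ is equivalent to $\mathbb{E}_\pi[X^2]\geq (\mathbb{E}_\pi[X])^2$, which is Jensen's inequality for the convex map $x\mapsto x^2$ (equivalently, a one-line Cauchy--Schwarz argument). A small edge case that must be addressed: if $\mathbb{E}_\pi[X]=0$, then $\pi^+$ is undefined as a conditional distribution; one can either exclude such states from reachability or set $\pi^+:=\pi$ by convention, in which case the weak inequality holds trivially.

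Second, the equality case of Jensen's inequality is $X$ being $\pi$-almost surely constant, i.e.\ $P(\texttt{safe}\mid s,a)$ taking the same value across the support of $\pi(\cdot\mid s)$. Thus strict inequality $P_{\pi^+}(\texttt{safe}\mid s) > P_{\pi}(\texttt{safe}\mid s)$ holds whenever the safety values $P(\texttt{safe}\mid s,a)$ differ across actions that $\pi$ plays with positive probability. I would then interpret ``$\pi_j$ violates $\mathcal{T}_j$'' (as used informally in the statement) as: there exists a reachable state $s^\star$ at which $\pi_j$ assigns positive probability to an action whose safety is strictly less than that of some other action also in the support of $\pi_j(\cdot\mid s^\star)$; equivalently, $X$ is non-constant on $\mathrm{supp}\,\pi_j(\cdot\mid s^\star)$. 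Under this reading, the strict inequality holds at $s^\star$ for agent $j$, which fulfills the ``strictly safer'' clause of Def.~3.1.

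The main obstacle is precisely the reading of ``violates $\mathcal{T}_j$'' in the statement: a purely deterministic interpretation (e.g.\ ``there is an unsafe action in the support'') is too weak, since a policy that puts all mass on a single partially-safe action is fixed by the shield ($\pi^+=\pi$) and yields no strict gain. I would therefore state as a remark that the correct and intended hypothesis is non-degeneracy of the safety profile on the support of $\pi_j$, which is exactly the condition required for Jensen strictness. With that clarification, the two clauses of the proposition follow from the two cases (equality vs.\ strict inequality) of Jensen's inequality applied per agent per state, and lift to the joint policy by the pointwise-in-$i$ definition of relative SMARL safety.
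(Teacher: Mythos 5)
Your proof is correct, and it takes a more self-contained route than the paper's. The paper's proof of the weak inequality simply invokes Proposition~1 of \citet{yang2023safe} as a black box ($P_{\pi^+_i}(\texttt{safe}\mid s)\geq P_{\pi_i}(\texttt{safe}\mid s)$ for all $\pi_i$ and $s$), and then lifts it agent-by-agent to the joint policy exactly as you do; you instead rederive that cited fact from Eq.~(1) and Eq.~(2), obtaining $P_{\pi^+}(\texttt{safe}\mid s)=\mathbb{E}_\pi[X^2]/\mathbb{E}_\pi[X]\geq\mathbb{E}_\pi[X]$ via Jensen/Cauchy--Schwarz, which makes the argument verifiable without consulting the external reference and additionally surfaces the degenerate case $\mathbb{E}_\pi[X]=0$ that neither the paper nor the original reference discusses. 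The more substantive divergence is in the strictness clause: the paper argues informally that if some $\pi_j$ ``does not abide by'' $\mathcal{T}_j$ (i.e.\ $P_{\pi_j}(\texttt{safe}\mid s)<1$ somewhere), then ``by construction'' the shielded policy is strictly safer. Your equality-case analysis shows this is not literally true --- if every action in the support of $\pi_j(\cdot\mid s)$ has the same sub-unit safety value, then $\pi^+_j=\pi_j$ at $s$ and no strict gain occurs --- so the hypothesis must be read as non-constancy of $a\mapsto P(\texttt{safe}\mid s,a)$ on the support, exactly the Jensen strictness condition. This is a genuine sharpening of the paper's statement rather than a disagreement with it: your version identifies the precise necessary and sufficient condition for the ``strictly safer'' conclusion, where the paper's sufficient condition is stated too broadly. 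Both approaches agree on the decomposition over agents and states, which is immediate from Definition~3.1.
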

\begin{proof}
    Let $\Pi_i$ be the space of all policies for each agent $i\in N$. We then recall from \citet{yang2023safe}'s Proposition~1 that $P_{\pi^+_i}(\texttt{safe} | s) \geq P_{\pi_i}(\texttt{safe} | s)$, $\forall \pi_i \in \Pi_i \text{ and } \forall s \in S$. Thus, according to the safety specifications in $\mathcal{T}_i$, every $\pi_i^+$ is \textbf{at least as safe as} $\pi_i$, and thus $\vec{\pi}^+$ is \textbf{at least as safe as} $\vec{\pi}$. 
    
    If all base policies $\pi_i\in\vec{\pi}$ already satisfy their respective shields $\mathcal{T}_i$ in all reachable states (i.e., $P_{\pi_i}(safe | s) = 1$ for all $s$), then shielding leaves the policy unchanged: $\pi_i^+ = \pi_i$. Otherwise, if any $\pi_j\in\vec{\pi}$ does not abide by $\mathcal{T}_j$, by construction, the shielded policy will assign higher probability to safe actions in expectation, i.e.\ $P_{\pi^+_j}(\texttt{safe} | s) > P_{\pi_j}(\texttt{safe} | s)$, and the resulting joint policy $\vec{\pi}^+$ will be \textbf{strictly safer} than $\vec{\pi}$.
\end{proof}

%%%%%%%%%%%%%%%%%%%%%%%%%%%%%%%%%%%%%%%%%%%%%%%%%%%%%%%%%%%%%%%%%%%%%%%%%%%%%%%
\subsection{Proof of Proposition~3.4}
First, for completion, we prove a lemma that states that any shield with perfect information (binary information about the valuations of the \verb|sensor(.)| predicates) always yield $P_{\mathcal{T}}(\text{safe}\mid s,a)\in\{0,1\}$. Then we define a more specific form of Proposition~3.4 and show that PLTD converges under perfect sensor information and regular TD-learning assumptions.
\begin{lemma}[Perfect-information shield yields binary safety]
\label{lem:perfect_info}
Let $\mathcal{T}$ be a ProbLog shield that, at a given state $s$, receives action facts \texttt{action($a$)} whose probabilities are the policy $\pi(a\mid s)$, and sensor facts \texttt{sensor($\ell$)} that are \emph{deterministic} in~$s$ (i.e. we have perfect sensor information), i.e., $P_\mathcal{T}\!\bigl(\texttt{sensor}(\ell)=\mathrm{true}\bigr)\in\{0,1\}$ and   $\texttt{sensor}(\ell)=\mathrm{true}\iff
  \ell\ \text{actually holds in }s.$

\noindent Suppose $\mathcal{T}$’s knowledge base contains, for each ground
action~$a$, a clause of the schematic form
\begin{equation}
  \texttt{unsafe\_next} \;:\!-\;
      \texttt{action}(a),\,\texttt{sensor}(\ell_a).    
\end{equation}

\noindent where the literal $\ell_a$ may be one of:
\begin{enumerate}[i.]
  \setlength{\itemsep}{0pt}
  \setlength{\parskip}{0pt}
  \item a state-dependent `hazard' literal, or  
  \item an always-true dummy literal (i.e.\ \texttt{unsafe\_next :- action($a$).}).
\end{enumerate}
Together with the complementary rule $$\texttt{safe\_next} \;:\!-\; \lnot\texttt{unsafe\_next},$$the shield satisfies  
\begin{equation}
  P_{\mathcal{T}}\!
  \bigl(\texttt{safe\_next}\mid \texttt{action}(a)\bigr)
  \;\in\;\{0,1\}\qquad\text{for every }a.
\end{equation}

\noindent Define the \emph{state-wise safe-action set}
\begin{equation}
  A_{\mathrm{safe}}(s)\;=\;
  \bigl\{\,a\mid
        P_{\mathcal{T}}(\texttt{safe\_next}\mid\texttt{action}(a))=1
  \bigr\}.
\end{equation}
Then
\begin{equation}
  P_{\mathcal{T}}(\text{safe}\mid s,a)=
  \begin{cases}
    1,& a\in A_{\mathrm{safe}}(s);\\[4pt]
    0,& a\notin A_{\mathrm{safe}}(s).
  \end{cases}
\end{equation}
\end{lemma}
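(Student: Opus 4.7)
The plan is to reduce the lemma to a straightforward determinism argument within ProbLog's distribution semantics. First I would observe that, after conditioning on $s$ and on the action atom $\texttt{action}(a)$, every ground atom appearing in $\mathcal{T}$ acquires a deterministic truth value: by the perfect-information hypothesis each $\texttt{sensor}(\ell)$ already has probability in $\{0,1\}$, while the annotated disjunction $\Pi_s$ forces every $\texttt{action}(a')$ with $a'\neq a$ to be false with probability one once $\texttt{action}(a)$ is assumed true. This reduces the whole shield, under the conditioning event, to a Boolean program.

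Next I would propagate this determinism through the Horn-clause derivation of $\texttt{unsafe\_next}$. Its defining rules have bodies of the form $\texttt{action}(a')\wedge\texttt{sensor}(\ell_{a'})$; under the conditioning event only rules with $a'=a$ can possibly fire (the others contain a false action literal), and each such rule reduces to the known truth value of $\texttt{sensor}(\ell_a)$, or trivially to $\mathrm{true}$ in the dummy case~(ii). Hence $P_\mathcal{T}(\texttt{unsafe\_next}\mid\texttt{action}(a))\in\{0,1\}$, and the complementary clause defining $\texttt{safe\_next}$ by negation-as-failure transfers this to $P_\mathcal{T}(\texttt{safe\_next}\mid\texttt{action}(a))$. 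Identifying the resulting $\{0,1\}$-valued indicator with membership in $A_{\mathrm{safe}}(s)$, and noting that the shield semantics equates $\texttt{safe}$ with $\texttt{safe\_next}$, then yields the case split stated for $P_\mathcal{T}(\text{safe}\mid s,a)$.

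The main obstacle I foresee is making the conditioning argument around $\Pi_s$ fully rigorous: within ProbLog's distribution semantics one must justify that conditioning on a single atom $\texttt{action}(a)$ is well-defined and indeed collapses the annotated disjunction to a single deterministic choice, rather than reweighting a continuum of worlds in a subtle way. I would handle this by appealing to the standard compilation of annotated disjunctions into mutually exclusive switch clauses, as formalized in \cite{cozman2017semantics}, after which the remaining steps reduce to bookkeeping over finitely many possible worlds and the argument becomes essentially routine.
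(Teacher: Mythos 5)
Your proposal is correct and follows essentially the same route as the paper's proof: condition on \texttt{action($a$)}, observe that the only remaining random facts are the deterministic sensor atoms so the ground program collapses to a Boolean one, and then read off the truth value of \texttt{unsafe\_next} (and hence \texttt{safe\_next}) from the two cases for $\ell_a$. The only difference is that you spell out why the annotated disjunction forces the other \texttt{action($a'$)} atoms false under the conditioning, a point the paper's proof leaves implicit; this is a welcome but minor refinement, not a different argument.
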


\begin{proof}
Condition the ground program on \texttt{action($a$)}. All remaining random facts are the deterministic \texttt{sensor($\ell$)} atoms, so the program is now fully deterministic. Hence the queried probability is $1$ if and only if the rule for \texttt{unsafe\_next} fails.

\smallskip\noindent
\textit{Case (a)}.
If $\ell_a$ is false in $s$ the rule body fails, so  \texttt{safe\_next} holds with certainty.

\smallskip\noindent
\textit{Case (b).}  
If $\ell_a$ is always true, the rule body succeeds, so \texttt{safe\_next} is false with certainty.

Thus the conditional probability is always $0$ or~$1$, and partitioning actions by the value $1$ yields the stated set $A_{\mathrm{safe}}(s)$.
\end{proof}
% ---------------------------------------------------------

\begin{proposition}[PLTD Convergence under Perfect Safety Information]
Let $\mathcal{M}= \langle S,A,P,r,\gamma\rangle$ be a finite MDP with $\gamma<1$ and let $\mathcal{T}$ be a PL shield with perfect sensor information. Assume the agent
\begin{enumerate}
  \item[i.] [GLIE] follows a time-varying policy $\pi_t$ such that (a) $\pi_t(a\mid s)=0$ for all $a\notin A_{\text{safe}}(s)$, and (b) $\pi_t\xrightarrow{t\to\infty}\pi^{+}$ while every safe pair $(s,a)$ is selected infinitely often (e.g.\ $\epsilon_t$-greedy on $A_{\text{safe}}(s)$ with $\epsilon_t\!\downarrow0$, $\sum_t \epsilon_t=\infty$).
  \item[ii.] stores a \emph{tabular} action–value function $Q_\theta$, and
  \item[iii.] updates $\theta$ by minimising the PLTD loss (Eq.~3.3) with a stepsize sequence $\{\eta_t\}$ satisfying $\sum_t\eta_t = \infty$ and $\sum_t\eta_t^{2} < \infty$ (regular TD-learning assumption, \citet{sutton2018reinforcement}).
\end{enumerate}

Then, with probability~$1$, the parameter sequence $\{\theta_t\}$ converges to a fixed point $Q^{\star}_{\mathrm{safe}}$ that is optimal over the set of safe policies, i.e.
\begin{small}
\begin{equation}
\begin{aligned}
  Q^{\star}_{\mathrm{safe}}(s,a)=
  &\max_{\pi:\,\pi(a'\!\mid s)=0\ \forall\,a'\notin A_{\mathrm{safe}}(s)}
\\[2pt]
  &\mathbb{E}_{\pi}\!\Bigl[\textstyle\sum_{k=0}^{\infty}
      \gamma^{k} r_{t+k}\,\bigm|\,s_t=s,\; a_t=a\Bigr].
\end{aligned}
\end{equation}
\end{small}
\end{proposition}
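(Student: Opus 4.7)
The plan is to reduce PLTD under perfect sensor information to classical tabular TD-learning restricted to a \emph{safe sub-MDP}, and then invoke standard stochastic-approximation results. First, I would invoke the perfect-information lemma to conclude that $P_{\mathcal{T}}(\texttt{safe}\mid s,a)\in\{0,1\}$ for every $(s,a)$, so the safety model partitions each action set into $A_{\mathrm{safe}}(s)$ and its complement. Substituting these binary values into Eq.~2 immediately gives $\pi^+(a\mid s)=\pi(a\mid s)/P_\pi(\texttt{safe}\mid s)$ for $a\in A_{\mathrm{safe}}(s)$ and $\pi^+(a\mid s)=0$ otherwise; that is, the shielded policy renormalises the base policy onto the safe action set.

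Second, I would show that the safety penalty vanishes on the shielded policy. Using $P_{\pi^+}(\texttt{safe}\mid s)=\sum_{a} P(\texttt{safe}\mid s,a)\,\pi^+(a\mid s)$ together with the binary property from the previous step, the sum collapses to $\sum_{a\in A_{\mathrm{safe}}(s)}\pi^+(a\mid s)=1$. Hence $S_P=\log 1=0$ identically, and its gradient with respect to $\theta$ is also zero. Consequently, the PLTD loss in Eq.~3.3 reduces exactly to the standard squared-TD-error loss on the executed trajectories.

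Third, I would restrict attention to the sub-MDP $\mathcal{M}_{\mathrm{safe}}=\langle S,A_{\mathrm{safe}},P,r,\gamma\rangle$. Because the behaviour policy $\pi_t$ is supported on $A_{\mathrm{safe}}(s)$ by GLIE assumption (i), all sampled transitions lie in $\mathcal{M}_{\mathrm{safe}}$, and only the entries $Q_\theta(s,a)$ with $a\in A_{\mathrm{safe}}(s)$ are ever updated. For the on-policy SARSA variant the bootstrapped target $Q_\theta(s^{t+1},a^{t+1})$ uses an action drawn from $\pi_{t+1}$, which is itself safe, so no further argument is needed. For the off-policy Q-learning variant, I would argue that the $\max_{a'}$ inside $\mathcal{X}$ can equivalently be taken over $A_{\mathrm{safe}}(s^{t+1})$: either because the shield is applied consistently to the target-side policy query, or because one initialises $Q_\theta$ so that entries for unsafe pairs remain below $\max_{a\in A_{\mathrm{safe}}(s)}Q^\star_{\mathrm{safe}}(s,a)$, making the unrestricted $\max$ attain the same value as the restricted one. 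Under either reading, the update is precisely the classical Q-learning or SARSA update on $\mathcal{M}_{\mathrm{safe}}$, and under the Robbins--Monro stepsize hypothesis (iii) together with infinite GLIE visitation, standard convergence theorems (Watkins--Dayan for Q-learning, Singh et~al.\ for on-policy GLIE SARSA, and Tsitsiklis's general asynchronous stochastic-approximation theorem) yield $Q_\theta\to Q^\star_{\mathrm{safe}}$ with probability one on safe state--action pairs.

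The main obstacle lies in the off-policy sub-case of the third step: in principle, a never-updated unsafe Q-value could be inflated by initialisation and distort the $\max$ in the Bellman backup at successor states, even though the safe entries form a self-contained fixed-point system. The cleanest resolution is to adopt, as a modelling convention of PLS, that the shield also governs the target-side policy query, which keeps the reduction to Q-learning on $\mathcal{M}_{\mathrm{safe}}$ exact; alternatively, an initialisation or pessimistic-masking convention for unsafe entries suffices. Once this is pinned down, the remainder is routine stochastic-approximation bookkeeping following the disappearance of the safety penalty.
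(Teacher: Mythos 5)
Your proposal is correct and follows essentially the same route as the paper's proof: perfect sensor information makes the safety probabilities binary (the paper's Lemma on perfect-information shields), hence $P_{\pi^+}(\texttt{safe}\mid s)=1$ and the penalty $S_P$ vanishes, reducing PLTD to classical tabular TD-learning on the safe sub-MDP, after which Watkins--Dayan / GLIE-SARSA convergence applies. Your explicit treatment of the off-policy $\max_{a'}$ over never-updated unsafe entries is in fact more careful than the paper, which dismisses restricting the maximisation to $A_{\mathrm{safe}}(s)$ as ``a minor notational change''; your proposed resolution (shielding the target-side query, or a pessimistic initialisation of unsafe entries) is exactly the convention needed to make that step rigorous.
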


% \begin{proof}[Sketch]
% \textbf{1. Penalty vanishes.} By Lemma~\ref{lem:perfect_info}, a shield with deterministic (perfect) sensors produces binary safety probabilities. Hence we may treat $\mathcal T$ as a \emph{perfect-information shield}, i.e.\ $P(\text{safe}\mid s,a)=1$ if $a\in A_{\text{safe}}(s)$ and $0$ otherwise, where $A_{\text{safe}}(s)=\{a\mid P(\text{safe}\mid s,a)=1\}$. Under this shield $P_{\pi^{+}}(\text{safe}\mid s)=1$ in every visited state, so the safety-penalty term $\log P_{\pi^{+}}(\text{safe}\mid s)$ in Eq.~(3) equals~$0$. Hence the PLTD loss reduces to the ordinary squared TD error.

% \medskip
% \noindent
% \textbf{2. Behaviour/target match.} Because the agent both \emph{acts} with and \emph{updates toward} $\pi^{+}$, the distribution required for TD convergence (no off-policy mismatch) holds.

% \medskip
% \noindent
% \textbf{3. Standard TD theory.} With a finite MDP, tabular $Q_\theta$, bounded rewards, and Robbins–Monro stepsizes, classical results for Q-learning (off-policy) and SARSA (on-policy) \citep[e.g.][]{watkins1992q,sutton2018reinforcement} guarantee almost-sure convergence to the action-value function of the behaviour policy, here $\pi^{+}$.

% \medskip
% \noindent
% \textbf{4. Optimality over safe policies.} Because $\pi^{+}$ never selects unsafe actions, the resulting limit $Q^{\star}_{\mathrm{safe}}$ is optimal over the admissible (safe) policy set, completing the proof.
% \end{proof}

\begin{proof}
Fix a finite MDP $\mathcal{M}=\langle S,A,P,r,\gamma\rangle$ with
$\gamma\!<\!1$ and a ProbLog shield~$\mathcal{T}$ whose sensor atoms are
deterministic in the current state~$s$; by
Lemma~\ref{lem:perfect_info} the shield therefore induces \emph{binary}
state–action safety probabilities.  Throughout, let
$A_{\text{safe}}(s)\subseteq A$ denote the set of $s$-safe actions singled out
in the lemma.

\paragraph{1.  The PLTD loss collapses to the ordinary TD loss.}
For any state–action pair $(s,a)$,
\begin{equation}
P_{\mathcal{T}}\bigl(\text{safe}\mid s,a\bigr)
  \;=\;\mathbf{1}\!\{a\in A_{\text{safe}}(s)\}.
\end{equation}
Consequently $P_{\pi^+}(\text{safe}\mid s)=1$ for every $s$ that can be
encountered under the \emph{behaviour policy} $\pi^+$ (assumption~i).
Thus the safety penalty
$S_P=\log P_{\pi^+}(\text{safe}\mid s)$ in
Eq.~3.3 vanishes identically and the PLTD objective reduces to
\begin{equation}
  \mathcal{L}^{Q^+}(\theta)
   \;=\;\mathbb{E}_{d\sim\mathcal D}\!\bigl[\!
     \bigl(r^t+\gamma\mathcal X-Q_\theta(s^t,a^t)\bigr)^2\bigr],
\end{equation}
namely the usual squared one-step TD error (with
$\mathcal X$ equal to $\max_{a'}Q_\theta(s^{t+1},a')$ in the off-policy
variant or $Q_\theta(s^{t+1},a^{t+1})$ in the on-policy variant).

\paragraph{2.  Behaviour–target consistency.}
The agent \emph{acts} with~$\pi^+$ \emph{and} evaluates TD targets under
$\pi^+$, so there is no off-policy mismatch; the TD recursion is
on-policy in the SARSA setting and uses the same transition-probability
matrix in the Q-learning setting.

\paragraph{3.  Almost-sure convergence of the tabular TD iterates.}
Because $S$, $A$ and therefore the post-shield state–action space
$
  \{(s,a)\mid a\in A_{\text{safe}}(s)\}
$
are finite, rewards are bounded, and the stepsizes
$\{\eta_t\}$ satisfy $\sum_t\eta_t=\infty$ and
$\sum_t\eta_t^{2}<\infty$ (assumption~iii), the classical stochastic-approximation proofs of \citet{watkins1992q} for Q-learning
or \citet{sutton2018reinforcement} for SARSA apply \emph{verbatim}.  A
minor notational change is that the maximisation (in the off-policy
case) is taken only over safe actions, but the contraction argument is
identical.  Under the usual further condition that every safe
state–action pair is visited infinitely often (assumption i) the parameter sequence
$\{\theta_t\}$ converges with probability 1 to the unique fixed point
$Q^{\pi^+}$ of the Bellman operator corresponding to~$\pi^+$.

\paragraph{4.  Optimality of the limit among safe policies.}
By construction $\pi^+$ places zero probability on unsafe actions and is
\emph{greedy} with respect to $Q^{\pi^+}$ over each safe-action set
$A_{\text{safe}}(s)$:
\begin{equation}
  \pi^+(a\mid s)>0\;\Longrightarrow\;
   a\in\operatornamewithlimits{argmax}_{a'\in A_{\text{safe}}(s)}Q^{\pi^+}(s,a').
\end{equation}
Standard policy-evaluation/-improvement arguments (e.g.\ \citet{sutton2018reinforcement}) now imply that
$Q^{\pi^+}$ coincides with the \emph{optimal} action-value function over
the restricted policy class
$
  \Pi_{\text{safe}}\;=\;\{\pi\mid
    \pi(a'\mid s)=0\text{ whenever }a'\notin A_{\text{safe}}(s)\}.
$
We denote this limit by $Q^{\star}_{\text{safe}}$ and $\forall (s,a)$ obtain
\begin{equation}
  Q^{\star}_{\text{safe}}(s,a)
  \;=\;\max_{\pi\in\Pi_{\text{safe}}}
    \mathbb{E}_{\pi}\Bigl[\sum_{k=0}^{\infty}\gamma^{k}r_{t+k}
      \,\bigm|\,s_t=s,\;a_t=a\Bigr],
\end{equation}

\paragraph{5.  Putting it together.}
Steps~1–4 establish that
$Q_\theta$ converges to $Q^{\star}_{\text{safe}}$ as claimed.

\vspace{-\baselineskip}
\end{proof}

\begin{remark} As stated in the main paper, for tabular and linear function-approximation settings, Lemma \ref{lem:perfect_info} together with the projected TD results of \citet{tsitsiklis1996analysis} yields convergence of PLTD. For deep neural Q-networks, global convergence is currently not known; in that regime we can adopt the usual stabilisation tricks (target network, replay buffer, Double-DQN). Empirically, we can show that PLTD remains stable across tasks, but a complete theoretical treatment is left for future work.
\end{remark}

\subsection{Proof of Proposition~3.5} \label{app:propproof}

\begin{proposition}
    For any game there exists a shield $\mathcal{T}$ such that the resulting shielded agent deterministically selects a single action $a\in\mathcal{A}$ for each state $s\in S$ if the shielded policy $\pi^+$ computes all other actions $a'\in\mathcal{A}\setminus \{a\}$ as perfectly unsafe.
\end{proposition}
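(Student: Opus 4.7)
The plan is to explicitly construct a shield $\mathcal{T}$ via a ProbLog program and then invoke the probabilistic shielding identity in Eq.~\ref{eq:pshielding} to force the resulting $\pi^+$ to deterministically play the designated action $a$. Concretely, for each state $s$ I would fix the target action $a(s)$ and add to the knowledge base, for every $a' \in \mathcal{A}\setminus\{a(s)\}$, an unconditional rule declaring the non-target action unsafe, namely \texttt{unsafe\_next :- action($a'$)}, together with the standard closure \texttt{safe\_next :- \textbackslash+unsafe\_next}. By case (ii) of Lemma~\ref{lem:perfect_info} (the ``always-true dummy literal'' case), these clauses yield $P_\mathcal{T}(\mathtt{safe\_next}\mid\mathtt{action}(a'))=0$ for every non-target action $a'$, whereas the absence of any such clause for $a(s)$ leaves $P_\mathcal{T}(\mathtt{safe\_next}\mid\mathtt{action}(a(s)))=1$.

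Next, I would substitute these binary safety probabilities into the shielding identity. The numerator $P(\mathtt{safe}\mid s,a')$ vanishes for every $a'\neq a$, so $\pi^+(a'\mid s)=0$ for all non-target actions. Because $\pi^+$ is a probability distribution over $\mathcal{A}$, the remaining mass must collapse onto $a$, yielding $\pi^+(a\mid s)=1$ and hence deterministic selection of $a$ in every state $s$, as required. The construction is modular across states, so the same argument applies uniformly to all $s\in S$.

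The one subtle point --- and the place I would be most careful --- is the well-definedness of the denominator in Eq.~\ref{eq:pshielding}, namely $P(\mathtt{safe}\mid s) = \sum_{a''\in\mathcal{A}} P(\mathtt{safe}\mid s,a'')\,\pi(a''\mid s) = \pi(a\mid s)$, which equals $0$ if the base policy assigns no mass to the target action. The proposition's phrasing (``the shielded policy $\pi^+$ computes all other actions as perfectly unsafe'') already presupposes that $\pi^+$ is well-defined, so I would inherit the standard assumption that $\pi$ has full support (e.g.\ softmax actor outputs or an $\epsilon$-regularised policy), under which the construction above produces the claimed deterministic shield. Under that mild assumption the proof becomes a direct computation rather than a deep argument; the nontrivial content is really the shield construction plus an appeal to the perfect-information lemma.
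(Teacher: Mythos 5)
Your proposal is correct and follows essentially the same route as the paper's proof: construct a ProbLog shield whose knowledge base declares every non-target action unconditionally unsafe, observe that $P(\mathtt{safe}\mid s,a')=0$ for $a'\neq a$ and hence the denominator collapses to $\pi(a\mid s)$, and conclude $\pi^+(a\mid s)=1$. Your explicit caveat about the denominator vanishing when the base policy puts zero mass on the target action is a point the paper's proof silently assumes away, so that extra care is welcome but does not change the argument.
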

% \begin{lemma}
%     For a game with perfect safety information, an agent will have a pure strategy $a$ if the shielded policy $\pi^+$ computes all other actions $a'\in\mathcal{A}$ as having $\pi^+(a'\mid s)=0$.
% \end{lemma}
\begin{proof}
Assume a game $\mathcal{E}$ that has actions $\mathcal{A}$ for a given agent and $|\mathcal{A}|=n_\mathcal{A}$. This agent has a policy $\pi(s)=\langle a_0,\vec{a}_{-a_0}\rangle\in\mathbb{R}^{n_\mathcal{A}}_{\geq0}$, where $\vec{a}_{-a_0}$ is the tuple of action values except $a_0$, such that $\sum\pi(s)=1$. We recall from Definition ... that for any agent with a policy $\pi$, a safe policy $\pi^+$ is computed as:
\begin{equation*}
    \pi^+(a\mid s)=P_\pi(a\mid s,\mathtt{safe})=\frac{P(\mathtt{safe}\mid s,a)}{P(\mathtt {safe}|s)}\pi(a\mid s)
\end{equation*}

We now construct a shield $\mathcal{T}$ in ProbLog. Let \verb|action(a0)| be a probabilistic fact in $\mathcal{T}$ that represents $\pi(a_0|s)$ and with constraints $\mathcal{KB}$ as follows:
\begin{lstlisting}
action(0)::action(a0);
... % optional: other actions

% optional: sensor inputs

% safety specification
unsafe_next :- action(X), X\=a0.
safe_next :- \+unsafe_next.
\end{lstlisting}

Note that the shield is state independent. Using the semantics of PL, we see that $P(\texttt{safe}| s,a')=0$ for $a'\neq a_0$, i.e.\ there is no state in which the safety of any other action apart from $a_0$ at all safe. Now, we can compute $\pi^+(a_0| s)$ and determine that $P(\texttt{safe}|s)=1-\sum\vec{a}_{-a_0}=a_0=\pi(a_0|s)$, leaving us with $\pi^+(a_0|s)=P_\pi(\texttt{safe}| s, a_0)=1$. Thus, we get a safe deterministic policy (pure strategy) $\pi^+(s)=\langle 1, 0, ...\rangle$, sampling from which, the agent will always play $a_0$.\end{proof}

% You can have as much text here as you want. The main body must be at most $8$ pages long.
% For the final version, one more page can be added.
% If you want, you can use an appendix like this one.  

% The $\mathtt{\backslash onecolumn}$ command above can be kept in place if you prefer a one-column appendix, or can be removed if you prefer a two-column appendix.  Apart from this possible change, the style (font size, spacing, margins, page numbering, etc.) should be kept the same as the main body.
%%%%%%%%%%%%%%%%%%%%%%%%%%%%%%%%%%%%%%%%%%%%%%%%%%%%%%%%%%%%%%%%%%%%%%%%%%%%%%%
\section{Environment Specifics} \label{appsec:envs}

\begin{figure}[h]
    \centering
    \includegraphics[width=0.3\linewidth]{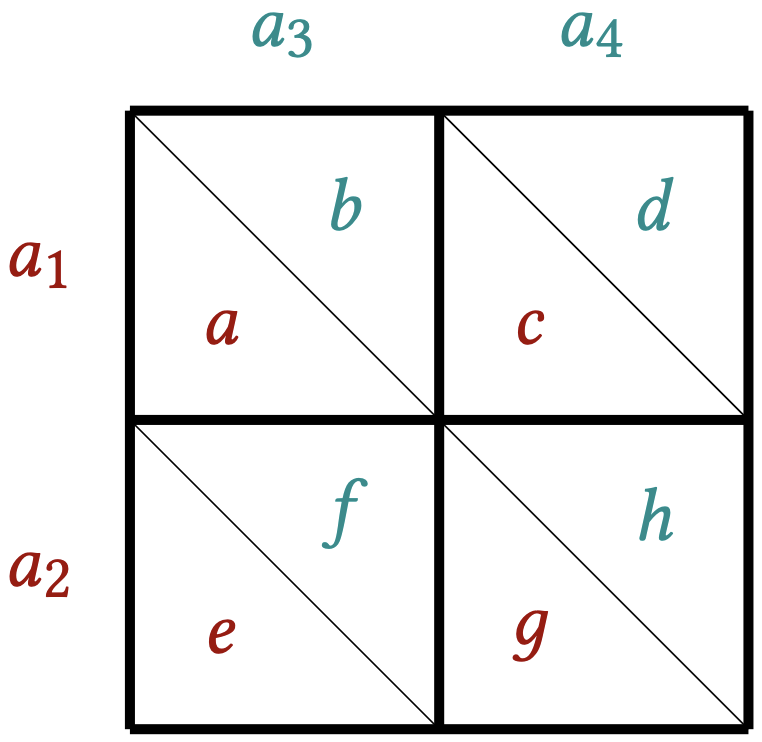}
    \caption{Generic payoff matrix}
    \label{fig:app:nfg_utilities}
\end{figure}

\subsection{Repeated NFGs (Stag-Hunt)}
In these experiments both agents choose an action simultaneously and receive a reward determined by the payoff matrix. With reference to the generic payoff matrix in \ref{fig:app:nfg_utilities}, any game with a payoff matrix with the following structure of payoffs is considered an instance of the \textit{Stag-Hunt}: $a=b>d=e\geq g=h> c=f$. Each episode is a series of 25 repeated games. Thus the agents must learn to maximize the discounted reward attained over the whole episode \citep{axelrod1980effective}. The three Nash equilibria for the \textit{Stag-Hunt} game instance as in Figure~2 (main paper) are $(Stag, Stag)$, $(Hare, Hare)$ and a mixed Nash equilibrium where each agent plays \textit{Stag} 60\% of the time and \textit{Hare} 40\% of the time.

\subsection{Centipede Game}

The centipede game is a turn-based game that has the following description: the game begins with a `pot' $p_0$ (a set amount of payoff). Each player has two actions, either to \textit{continue} or to \textit{stop}. After each round $t$, the pot increases, for example linearly ($p_{t+1}\leftarrow p_t+1$) or exponentially ($p_{t+1}\leftarrow p_t^2$). If any player decides to \textit{stop} at their turn, they receive a utility of $p_t/2+1$ and the other player gets a utility of $p_t/2-1$. If both players continue after a set number of rounds $t_{max}$, both players split the pot equally, receiving $p_{t_{max}}/2$. An instance of a centipede game with $t_{max}=6$ turns and $n=2$ players is shown in Figure~\ref{fig:cat_game}. The optimal strategy here is for the very first player to stop. This is called the \textit{subgame perfect Nash equilibrium}, one analogue of of the Nash equilibrium for extensive-form games.

\begin{figure}[h]
    \centering
    \includegraphics[width=1\linewidth]{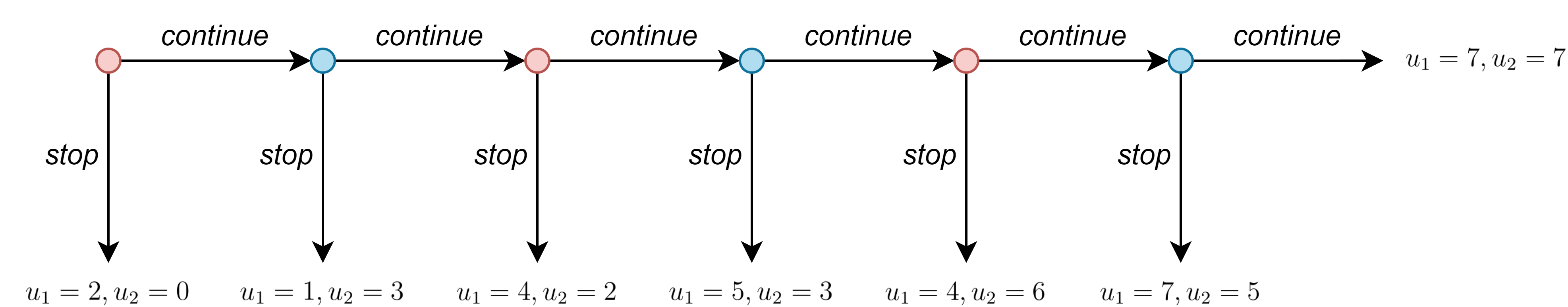}
    \caption[Example \textit{Centipede} Game Instance.]{\textit{Centipede} game for 2 players with $p_0=2$ and growth rule $p_{t+1}\leftarrow p_t+2$. Red nodes represent player 1's decision node and blue ones are player 2's. The leaves represent the utilities received by each player. The first move is at the root of the tree.}
    \label{fig:cat_game}
\end{figure}

\subsubsection{Environment description}
\citet{prakash2021} provides a good starting point with respect to implementation. However, in order to make the structure of the game homogeneous with the other environments experimented with\footnote{This is not strictly required but makes the implementation of the Python environment, the interaction with agents, and the shield more straightforward.}, a simple modification was made that turns the centipede game into a repeated simultaneous game with changing utilities each round.

This is done by combining the two agents' turns into a 2-player 2-action game, where the player who goes \rowcolor{first} is the \rowcolor{row} player and the \columncolor{second} player is the \columncolor{column} player. Let these actions be \textit{S} (for \textit{Stop}) and \textit{C} (for \textit{Continue}). If and only if both agents choose \textit{C}, a new game is created with utilities derived from the next pair of turns, with no reward being given to either player. This transformation is general and can be applied to any \textit{Centipede} game instance. Figure~\ref{fig:nfg_cat_game} shows an example of this -- a repeated NFG version of Figure~\ref{fig:cat_game}. It follows that we can also verify that the Nash solution (more specifically the subgame-perfect Nash equilibrium) is for the \rowcolor{first (row)} player to play \textit{S} right away regardless of what the \columncolor{second (column)} player plays (this is their dominant strategy).

\newcommand{\catnfgi}{\smallnfgame{S C S C $2$ $2$ $1$  $0$ $0$ $0$ $3$ $0$}}
\newcommand{\catnfgii}{\smallnfgame{S C S C $4$ $4$ $3$  $0$ $0$ $0$ $5$ $0$}}
\newcommand{\catnfgiii}{\smallnfgame{S C S C $4$ $4$ $6$ $7$ $0$ $0$ $5$ $7$}}
\begin{figure}
    \centering
    \includegraphics[width=0.9\linewidth]{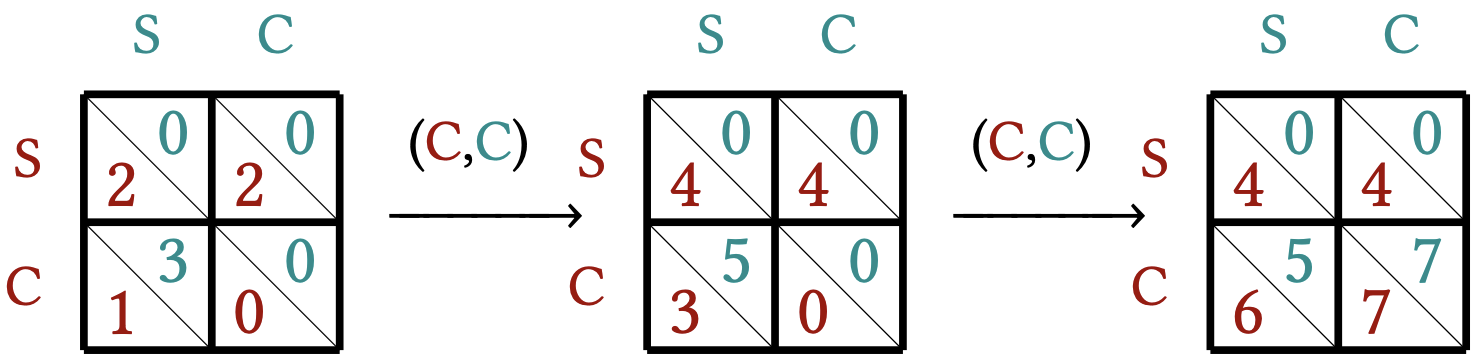}
    \caption[Repeated Simultaneous Game Form of \textit{Centipede}.]{Repeated simultaneous game form of \textit{Centipede} shown in Figure~\ref{fig:cat_game}.}
    \label{fig:nfg_cat_game}
\end{figure}

\subsubsection{Experimental conditions}
Two agents play this game. The one that has the first move is randomised at the start of each episode. The initial pot is set to $p_0=1$ and is updated for each successful \textit{Continue} action as $p_{t+1}\leftarrow p_{t}+2$. If an agent plays \textit{Stop}, they receive a reward of $p_t/2+1$ and the other receives $p_t/2-1$. The maximum number of steps are set to $t_{max}=50$, with the final rewards being equal for both agents at $p_{t_{max}}/2$.

\subsection{Two-Player Extended Public Goods Game}

The \textit{Extended Public Goods Game} is an environment where each player $i$ has an initial amount of money $c_i$ and can either choose to \textit{Cooperate} (put their money into the pot) or \textit{Defect} (keep the money). The money in the pot is multiplied by a value $f$, and the new total is distributed evenly to all players. The Nash equilibrium for a single game depends on the value of $f$, and thus the Nash strategy for each agent for a full game is not necessarily dependent on the expected value of $f$.

\begin{figure}[h]
    \centering
    \includegraphics[width=0.4\linewidth]{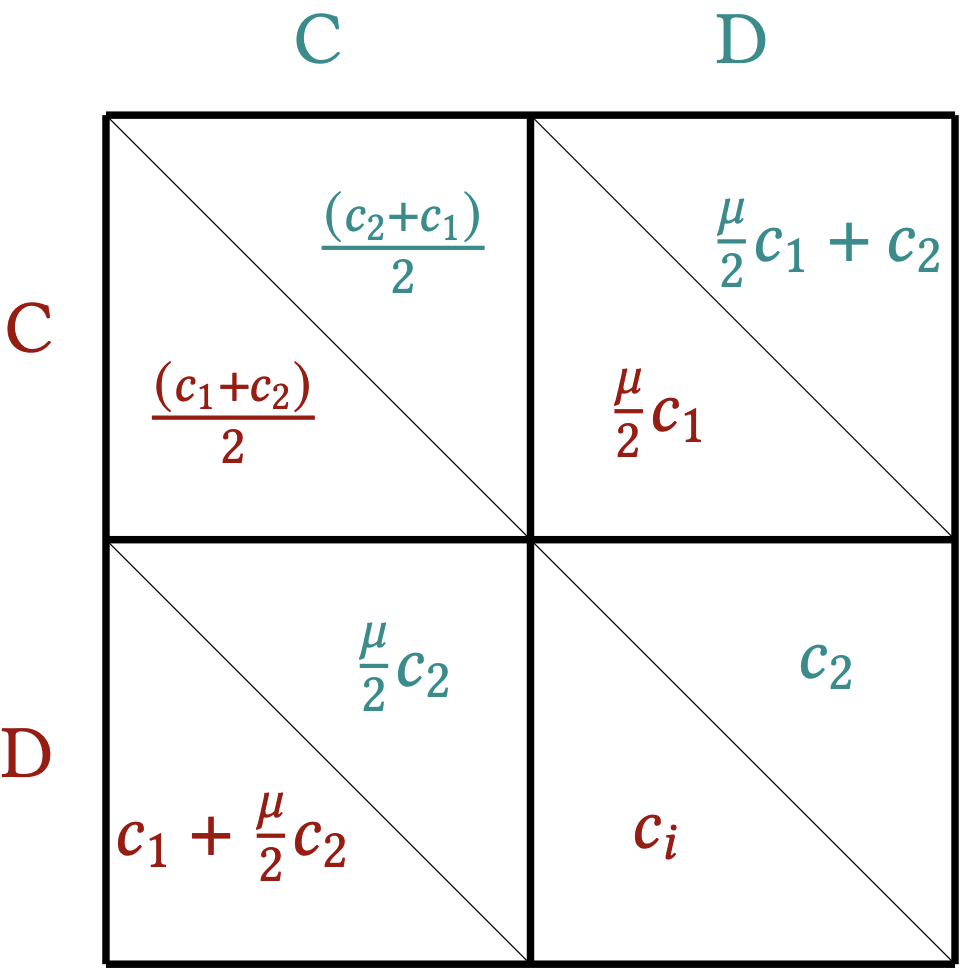}
    \hspace{0.05\linewidth}
    \includegraphics[width=0.4\linewidth]{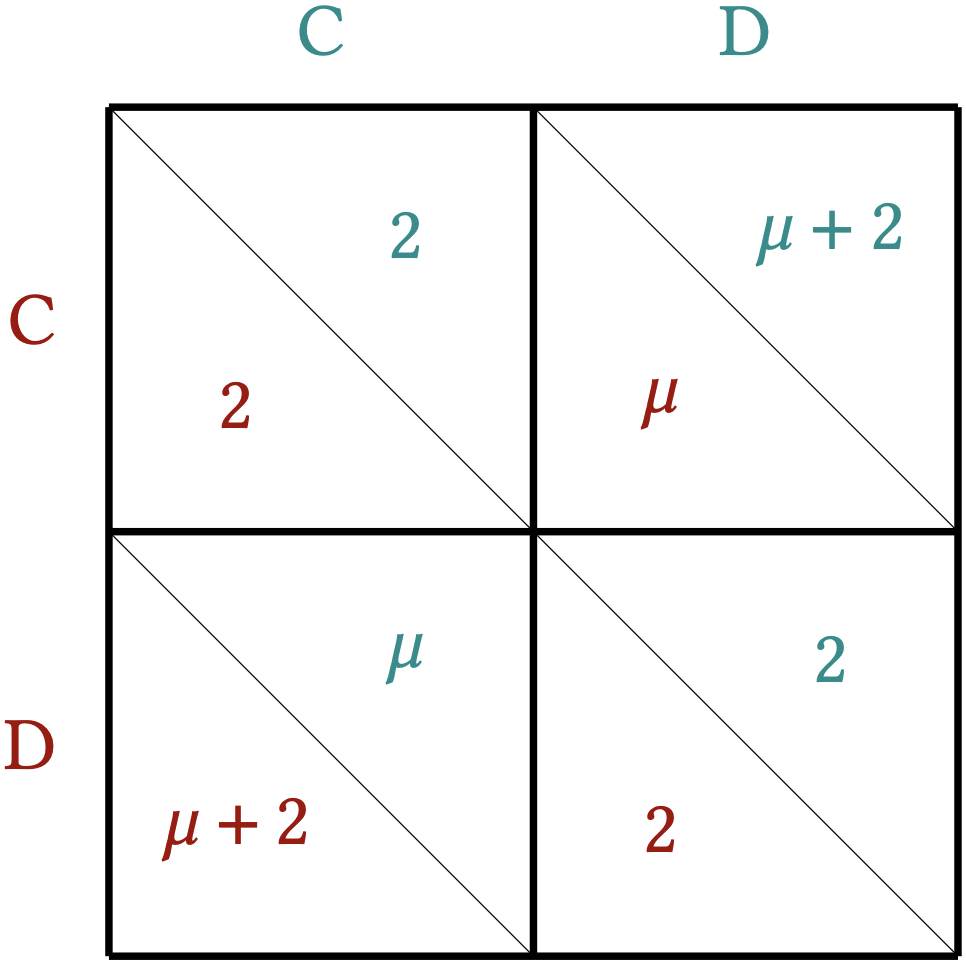}
    \caption[Expected Payoff Matrices for \textit{Extended Public Goods Games}.]{Expected payoff matrix for a two-player EPGG with general endowments $c_1$ and $c_2$ (left) and the payoff when $c_1=c_2=2$ (right). For both, $\mu=\mathbb{E}_{f\sim\mathcal{N}(\mu,\sigma)}[f]$.}
    \label{fig:expected_payoff_epgg}
\end{figure}

Assume a game with $n=2$ agents. At each $t$, the expected payoffs can be computed a-priori with respect to taking an action with the knowledge of $f$ being passed to the agents. Thus, perfectly rational agents maximize their expected reward by learning to cooperate conditioned on the instantaneous $f$ they receive. It might be interesting to try and coerce the agents to take actions based on the \textit{expected} value of $f$ instead of instantaneous ones -- indeed, if we know the distribution from which $f$ is sampled (e.g. $f\sim\mathcal{N}(\mu,\sigma)$ for some $\mu$ and $\sigma$) and we want the agents to converge to a singular action throughout the episode (i.e.\ not the time step-wise Nash equilibrium), then the expected payoff matrix can be derived as Figure~\ref{fig:expected_payoff_epgg}. Here we can see that the \textit{expected} Nash equilibrium over the population of $f$ is determined by $\mu=\mathbb{E}_{f\sim\mathcal{N}(\mu,\sigma)}[f]$ and can be estimated empirically by taking the mean of several observations of $f$.

\subsubsection{Experimental conditions}
Two agents play this game. The agents observe the action of the other agent taken at the previous time step and the current sample of the multiplication factor $f_t\sim \mathcal{N}(\mu, \sigma)$. It is expected for rational agents to defect when $f_t<1$ and cooperate when $f_t>2$ and have mixed incentives otherwise. The game is played for 500 episodes consisting of 25 games each. At each time step, a new $f_t$ is sampled. In separate experiments $\mu\in\{0.5,1.5,2.5,5.0\}$. The standard deviation of the distribution for $f_t$ is kept at $\sigma=1$ always. The initial endowment for each agent is $c_i=c_j=2$.

\subsection{Markov Stag-Hunt}
\citet{peysakhovich2018prosocial} describe a few grid environments that have the property that the optimal policies for agents reflect the structure of the \textit{Stag-Hunt} game -- principally, there are two strategies that the agents can converge to, either take on more risk by cooperating but attain higher rewards, or take on lower risk but get guaranteed small rewards. An optimal strategy may even be some mix of the two, depending on the details of the environment in question.  

One of these environments, which they term the \textit{Markov Stag-Hunt} environment has been adapted here to show that adding local probabilistic logic shields to MARL agents can help them converge to a global behavior that corresponds to either of the \textit{underlying} pure Nash equilibria of choice -- either the cooperative or the non-cooperative one. It should be stated that the original code for \textit{Markov Stag-Hunt} from \cite{peysakhovich2018prosocial} is not publicly available, and thus the environment created here is a custom approximation based on the sparse description in their paper. Thus, their results should not be directly compared ours.

\subsubsection{Environment description}
The adapted \textit{Markov Stag-Hunt} environment used within this paper can be described in general as such: $n$ agents are placed in a square grid world of size $g\times g$. The agents have the following action set at each time step to move through the grid with respect to a global direction system: \{\texttt{UP}, \texttt{DOWN}, \texttt{LEFT}, \texttt{RIGHT}, \texttt{NONE}\}. There are $n_{stag}$ stags placed in the environment, as well as $n_{plant}$ plants, each occupying a single position each on the lattice. The agents can have either local or global observations, with or without a `self' token, either as a 3D tensor or flattened one-hot encoded vector depending on the needs and configuration of the experiment.

An agent attempts to `hunt' a stag if is on the same grid position as the stag, and `harvests' a plant if it is on the same position of a plant. A stag is successfully hunted if there are $n_{hunt}\geq n_{hunt\_min}$ agents that hunt the same stag at the same time step and each receive a large positive reward $r_{stag}$, otherwise, the agents are unsuccessful and receive a penalty (negative reward) $r_{pen}$. An agent can harvest a plant alone, and will receive a small reward of $r_{plant}<r_{stag}$. Once a plant is harvested or a stag is attempted to be hunted, they reappear in another part of the grid. The stag also has an optional behaviour of moving towards the nearest agent at each time step with probability $p_{move}$. The grid exists for a certain amount of time steps $t_{max}$.

It is intuitive that if the agents learn to cooperate and attack the stag, they end up with a higher expected reward, with the risk of getting a penalty if the cooperation is inadequate. They may also learn an anti-cooperative strategy of only harvesting plants, which has a low reward but no penalty. This is a similar outcome to the NFG version of the Stag-Hunt, but in a far more complex environment. The first strategy is \textit{high risk - high reward}, and the second is \textit{low risk - low reward}.

\subsubsection{Experimental conditions}
In the experiments conducted, the grid was set to a size of $5\times5$ with $n=2$ agents. The number of stags and plants were respectively set to $n_{stag}=1$ and $n_{plant}=2$. In order to have a successful hunt, both agents need to hunt the stag at the same time ($n_{hunt\_min}=2$). The rewards were respectively set to $r_{stag}=10$, $r_{pen}=-2$ and $r_{plant}=2$. The stag was set not to move ($p_{move}=0$) as there is ample stochasticity in the environment already for the sake of initial experimentation (stemming from the random positions and reappearances of the plants and the stag and from the behavior of the agents themselves). Finally, the environment lasted $t_{max}=200$ steps.

The agents observe the grid as a concatenated (flattened) vector consisting of the set of one-hot encoded vectors for each position on the grid to describe what is in contained in them: either ($i$) nothing, ($ii$) a stag, ($iii$) a plant, ($iv$) the observing agent, ($v$) the other agent, or ($vi$) both agents. Thus, the full input is a vector of size is $5\times 5\times 6=150$.
\section{Details on Shield Construction}\label{appsec:shields}

% \subsection{Prisoner's Dilemma}

% Since in general, RL agents optimise their own reward (towards the Nash equilibrium), they would learn to defect. Thus, a shield can be built that forces the agent to cooperate regardless of what the other agent did in the previous time step:
% \begin{lstlisting}[language=Prolog, caption={Shield for \textit{Prisoner's Dilemma}}]
% % actions
% action(0)::action(cooperate);
% action(1)::action(defect).

% % sensors
% sensor_value(0)::sensor(other_cooperate).
% sensor_value(1)::sensor(other_defect).

% % safety constraints
% unsafe_next :- action(defect).
% safe_next :- \+unsafe_next.
% \end{lstlisting}

% Within the literature accounting the \textit{Prisoner's Dilemma}, such an agent is known as an \textit{always cooperate} agent . By computing the safety of an action under this shield we can monitor whether it was a cooperate or defect action due to the safety constraint (the cooperative action has a safety of 1 and defect has a safety of 0). Variations on this simple shield structure will be used often for other experiments as well in this thesis. 

% Also notice that the information that is passed into the sensors are not being used at all in this shield -- these are redundant and we can simplify the shield as follows:

% \begin{lstlisting}[language=Prolog, caption={Simplified shield (\shieldname{coop}) for \textit{Prisoner's Dilemma}}, label={shield:pd}]
% % actions
% action(0)::action(cooperate);
% action(1)::action(defect).

% % safety constraints
% unsafe_next :- action(defect).
% safe_next :- \+unsafe_next.
% \end{lstlisting}

\subsection{Stag-Hunt}

For the pure equilibria, {cooperation} would refer to collaborating with the other agent to hunt the stag and \texttt{defect} refers to an agent individually going for the hare instead.
\begin{lstlisting}[language=Prolog, caption={Shield (\shieldname{pure}) for pure Nash equilibrum for \textit{Stag-Hunt}}, label={shield:pure}]
% actions
action(0)::action(stag);
action(1)::action(hare).

% safety constraints
unsafe_next :- action(hare).
safe_next :- \+unsafe_next.
\end{lstlisting}

For the mixed equilibria, we would need a more sophisticated shield, given the current limitations of the existing implementation of \citet{yang2023safe}. One way of doing this is to add soft constraints to the actions based on how far they differ from the mixed equilibrium. As a proxy for the policy of the agent, a set of the latest $h$ historical actions of the agents can be collected -- this is termed a \textit{buffer}. A \textbf{mean policy} $\hat{\pi}_h$ at the current time $t$ can then be derived from these actions, iterating over all $a'\in\mathcal{A}$:
\begin{equation}
    \hat{\pi}_h(a'\mid s) = \frac{1}{h}\sum_{i\in\{1,...,h\}} \mathbb{I}\left(a^{t-i}=a'\right)
\end{equation}

Let an \textit{a-priori} normative policy be ${\pi}^*$, say, the mixed Nash strategy. If $\hat{\pi}_h$ is too distant from the expected mixed strategy with respect to some divergence measure, then we can re-normalize the input policy using Definition~2 to be more similar to ${\pi}^*$. The divergence measure should be bounded between $[0,1]$ in order to feed it into the ProbLog shield program. For \textit{Stag-Hunt}, let such an input be the absolute difference between $\hat{\pi}_h$ and ${\pi}^*$:
\begin{align}
    \texttt{sensor(stag\_diff)} &:= |{\pi}^*(\text{\textit{Stag}}\mid s)-\hat{\pi}_h(\text{\textit{Stag}}\mid s)|\\
    \texttt{sensor(hare\_diff)} &:= |{\pi}^*(\text{\textit{Hare}}\mid s)-\hat{\pi}_h(\text{\textit{Hare}}\mid s)|
\end{align}
% \newpage
We can then construct a shield as follows:

\begin{lstlisting}[language=Prolog, caption={Shield (\shieldname{mixed}) towards mixed Nash equilibrum for \textit{Stag-Hunt}}, label={shield:mixed}]
% actions
action(0)::action(stag);
action(1)::action(hare).

% sensors
sensor_value(0)::sensor(stag_diff).
sensor_value(1)::sensor(hare_diff).

% safety constraints
unsafe_next :- action(stag), sensor(stag_diff).
unsafe_next :- action(hare), sensor(hare_diff).
safe_next :- \+unsafe_next.
\end{lstlisting}

The constraints that define the value of the \texttt{unsafe\_next} predicate can be interpreted as \textit{it is unsafe to take action stag proportional to the value of sensor(stag\_diff)} (and analogously for playing \textit{Hare}). Thus, the higher the difference between the Nash strategy and the historical policy for each agent, the more the policy is normalized. The policy is not normalized at all when \texttt{sensor(stag\_diff)} and \texttt{sensor(hare\_diff)} are both zero. In this case, the overall policy is perfectly safe.

For the experiments discussed below, the buffer length is set to $h=50$. Additionally, under the payoff matrix in Figure~\ref{fig:app:nfg_utilities}, the \textit{a priori} mixed Nash equilibrium can be computed such that $\pi^*(\text{\textit{Stag}}\mid s)=0.6$ and $\pi^*(\text{\textit{Hare}}\mid s)=0.4$ for both agents. The evaluation safety is computed based on $\mathcal{T}_{pure}$ as it directly reflects the actions of the agents (with $\mathbf{P}_{\mathcal{T}_{pure}}(\text{\textit{Stag}}\mid s)=1$ and $\mathbf{P}_{\mathcal{T}_{pure}}(\text{\textit{Hare}}\mid s)=0$).

\subsection{Centipede Game}

Since the agents have two actions, we can use a shield to force the agents to always play \textit{C} at every time step (regardless of previous actions) until the game terminates and a reward is attained using the following shield.
\begin{lstlisting}[language=Prolog, caption={Simple shield \shieldname{continue} for \textit{Centipede}}, label={shield:centipede}]
% actions
action(0)::action(continue);
action(1)::action(stop).

% safety constraints
unsafe_next :- action(stop).
safe_next :- \+unsafe_next.
\end{lstlisting}
As a result, it is expected that shielded agents will always continue, as this is the only safe action in any round.

\subsection{Extended Public Goods Game}

Since, at each time step $t$, there is effectively a new simultaneous game with 2 actions, we can dictate the actions of the agents with a simple shield such as the one used for the \textit{Centipede Game}. Thus, it is trivial to create a shield that dictates the actions of the agents to have a certain policy such as \textit{always play cooperate}, regardless of inputs.

Let us instead create a shield that that uses a model $\hat{\mu}$ of the true $\mu$ estimated over training so far. Additionally, we can use the standard deviation of the multiplication factors so far, $\hat{\sigma}$ to model the uncertainty the agent has with respect to the current $f_t$ compared to $\hat{\mu}$. This can be done online without the need of a buffer using Welford's method \citep{welford1962note}. 
% by setting $\hat{\mu}_{new}\leftarrow\hat{\mu}_{old}+(f_t-\hat{\mu}_{old})/t$. 
We then compare $\hat{\mu}$ to the predetermined Nash outcomes and decide if it is high enough to warrant defecting -- if $\hat{\mu}\geq1$, \textit{cooperate}, else \textit{defect}. In Shield~\ref{shield:pgg}, this is set as the (Boolean) value of the fact \texttt{sensor(mu\_high)} -- its value is set to 1 if $\hat{\mu}\geq1$ else 0. The agents will err on the side of a cooperative strategy when $1<\hat{\mu}<2$, where the rational strategy would be mixed.

We then might want to include uncertainty to increase exploitation if it has high certainty about the environment (align the agent more strongly with the inductive bias induced via the shield), and contrariwise, increase exploration if there is high uncertainty (guide the agent more weakly). We must thus construct a smooth distance measure $d$ between $f_t$ and $\hat{\mu}$ bounded between $[0,1]$ such that $d(f_t,\hat{\mu})=1$ when $f_t=\hat\mu$ and $d(f_t,\hat{\mu})=0$ when $|f_t-\hat{\mu}|\rightarrow\infty$. One of many possibilities is described here. First, we compute the $z$-score of the current $f_t$ with respect to the estimated parameters so far:
\begin{equation}
    z(f_t\mid \hat{\mu}, \hat{\sigma}) = \frac{f_t-\hat{\mu}}{\hat{\sigma}}
\end{equation}
We then use the CDF of a standard normal distribution to translate this into a probability value $\Phi(z(f_t\mid \hat{\mu}, \hat{\sigma}))$. Subtracting 0.5 centers $\Phi(z(f_t\mid\hat{\mu}, \hat{\sigma}))$ around 0 for $z(f_t\mid \hat{\mu}, \hat{\sigma})=0$. Doubling the absolute value and subtracting from 1 inverts the scale to make it 1 at the mean and symmetrically 0 at extreme tails. Thus the final distance function is:
\begin{equation}
    d(f_t,\hat{\mu}) = 1-2\cdot\left|\Phi(z(f_t\mid \hat{\mu}, \hat{\sigma}))-0.5\right|
\end{equation}

\noindent In Shield~\ref{shield:pgg} below, the value of the fact \texttt{sensor(f\_certainty)} is thus set to $d(f_t,\hat{\mu})$. Thus using the inputs defined above, the shield is constructed as follows:
\begin{lstlisting}[language=Prolog, caption={Shield (\shieldname{EPGG}) for EPGG capturing uncertainty.}, label={shield:pgg}]
% actions
action(0)::action(cooperate);
action(1)::action(defect).

% sensors
sensor_value(0)::sensor(mu_high).
sensor_value(1)::sensor(f_certainty).

% safety constraints
unsafe_next :- \+action(cooperate), sensor(mu_high), sensor(f_certainty).
unsafe_next :- \+action(defect), \+sensor(mu_high), sensor(f_certainty).
safe_next :- \+unsafe_next.
\end{lstlisting}

\noindent The two constraints on lines 10 and 11 essentially say, \textit{it is unsafe if cooperate is not played and $\hat\mu>1$ with some certainty about $f_t$} (and analogously for \textit{defect}).

\subsection{Markov Stag-Hunt}

Let us focus on constructing a shield that might help the agents learn how to cooperate and hunt a stag together. Similar shields can be constructed for non-cooperative behavior too, however, it is much harder for the agents to converge to the cooperative behavior in the first place.

Since the environment is rather complex, there are a myriad of ways this goal may be achieved, and two are described here. The inputs to the shield are the policy and some sensors that describe relative positions of the stag and the other agent:

\begin{lstlisting}[language=Prolog, caption={Partial shield for \textit{Markov Stag-Hunt.}}]
% actions
action(0)::action(left);
action(1)::action(right);
action(2)::action(up);
action(3)::action(down);
action(4)::action(stay).

% sensors
sensor_value(0)::sensor(left).
sensor_value(1)::sensor(right).
sensor_value(2)::sensor(up).
sensor_value(3)::sensor(down).
sensor_value(4)::sensor(stag_near_self).
sensor_value(5)::sensor(stag_near_other).
\end{lstlisting}

The value of the sensor predicates \texttt{sensor(stag\_near\_self)} and \texttt{sensor(stag\_near\_other)} is 1 when the stag is adjacent to the acting agent and the other agent respectively, and 0 otherwise. The sensors with literals named after the cardinal directions are binary and represent the relative position of the stag with respect to the agent -- for example, \texttt{sensor(left)} is 1 when the stag is strictly to the left of the acting agent and 0 otherwise.

A first shield can be constructed that influences the actions of the agents such that they strongly tend to go towards the stag, wait for the other agent, and then hunt the stag together. Such a shield can be implemented as such:
\begin{lstlisting}[language=Prolog, caption={Strong shield ($\mathcal{T}_{strong}$) for \textit{Markov Stag-Hunt}.}, label={shield:msh_strong}]
% actions & sensors go here
...

% define movement towards the stag
go_towards_stag :- action(Dir), sensor(Dir).
% define a state where both agents are near the stag
stag_surrounded :- sensor(stag_near_self), 
                   sensor(stag_near_other).

% 1) it is unsafe to not go towards the stag if it not near
unsafe_next :- \+go_towards_stag, 
               \+sensor(stag_near_self).
% 2) it is unsafe to not wait, the stag is near without another agent
unsafe_next :-  \+action(stay), sensor(stag_near_self), 
                \+sensor(stag_near_other).
% 3) it is unsafe to not hunt, the stag is near and there is another agent
unsafe_next :- \+go_towards_stag, stag_surrounded.

% combine all unsafe conditions to get safety
safe_next :- \+unsafe_next.
\end{lstlisting}

This shield will be referred to as the \textit{strong shield} in this environment, since it significantly affects the policy of the agent regardless of its position on the grid.

However, in general, depending on the environment, we may not know how to construct a strongly shielded agent or we may wish to constrain the agent less in order for it to learn optimal policies in a more end-to-end fashion. Thus, in these cases, a shield can be constructed that only significantly effects the agent's policy in a small number of cases. For \textit{Markov Stag-Hunt} another shield created is the following:

\begin{lstlisting}[language=Prolog, caption={Weak shield ($\mathcal{T}_{weak}$) for \textit{Markov Stag-Hunt}.}, label={shield:msh_weak}]
% actions & sensors go here
...

% define movement towards the stag
go_towards_stag :- action(Dir), sensor(Dir).
% define a state where both agents are near the stag
stag_surrounded :- sensor(stag_near_self), 
                   sensor(stag_near_other).

% it is unsafe not to hunt stag and it is surrounded
unsafe_next :- \+go_towards_stag, stag_surrounded.
safe_next :- \+unsafe_next.

% if the stag is not nearby, any action is fine (safety=1)
safe_next :- \+sensor(stag_near_self).
\end{lstlisting}

This shield will be called the \textit{weak shield} as it affects the policy of an agent that is in the neighborhood of the stag. It only guides the agent to hunt the stag when another agent is nearby, and otherwise allows the policy to explore freely. 

The set of safety-related sentences $\mathcal{BK}$ for the weak shield can be argued to be a subset of that of the strong shield ($\mathcal{BK}_{weak}\subset\mathcal{BK}_{strong}$). An additional line (line 14 in the weak shield) has to be added to ensure that the behaviour of the agent over all the sensor space is well-defined, and to allow any policy to be safe in the complement of $\mathcal{BK}_{weak}$ with respect to $\mathcal{BK}_{strong}$ (i.e. the sentences $\mathcal{BK}_{strong}\setminus\mathcal{BK}_{weak}$).

\section{Single-Agent Experiment: CartSafe} 
\label{appsec:cartsafe_results}

\begin{figure*}[h!]
    \centering
    \begin{minipage}[b]{0.32\textwidth}
        \centering
        \includegraphics[width=\textwidth]{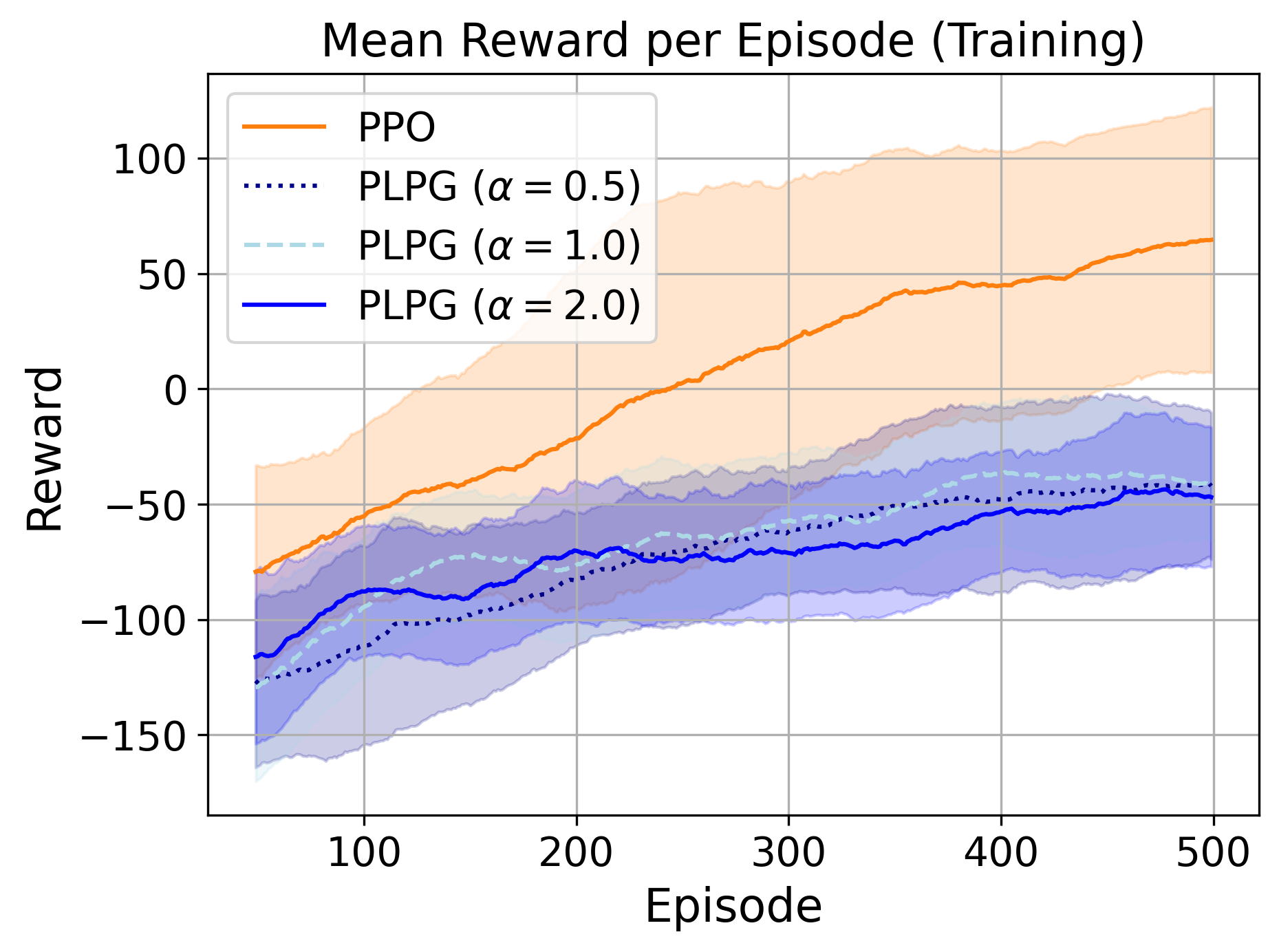}
        \vspace{0.3em}

        \textbf{(a)}
        \label{fig:subfig1}
    \end{minipage}
    \hfill
    \begin{minipage}[b]{0.32\textwidth}
        \centering
        \includegraphics[width=\textwidth]{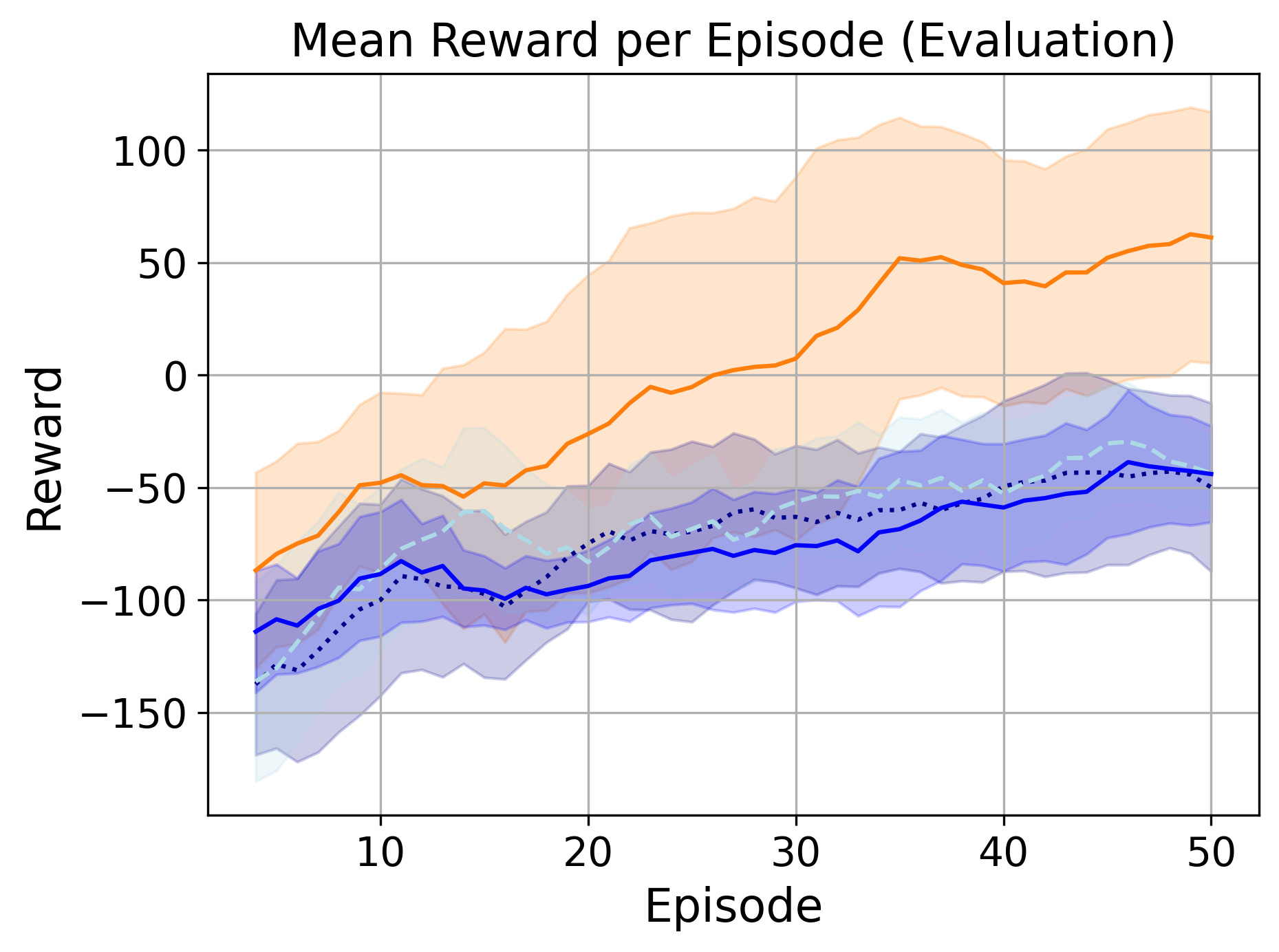}
        \vspace{0.3em}

        \textbf{(b)}
        \label{fig:subfig2}
    \end{minipage}
    \hfill
    \begin{minipage}[b]{0.32\textwidth}
        \centering
        \includegraphics[width=\textwidth]{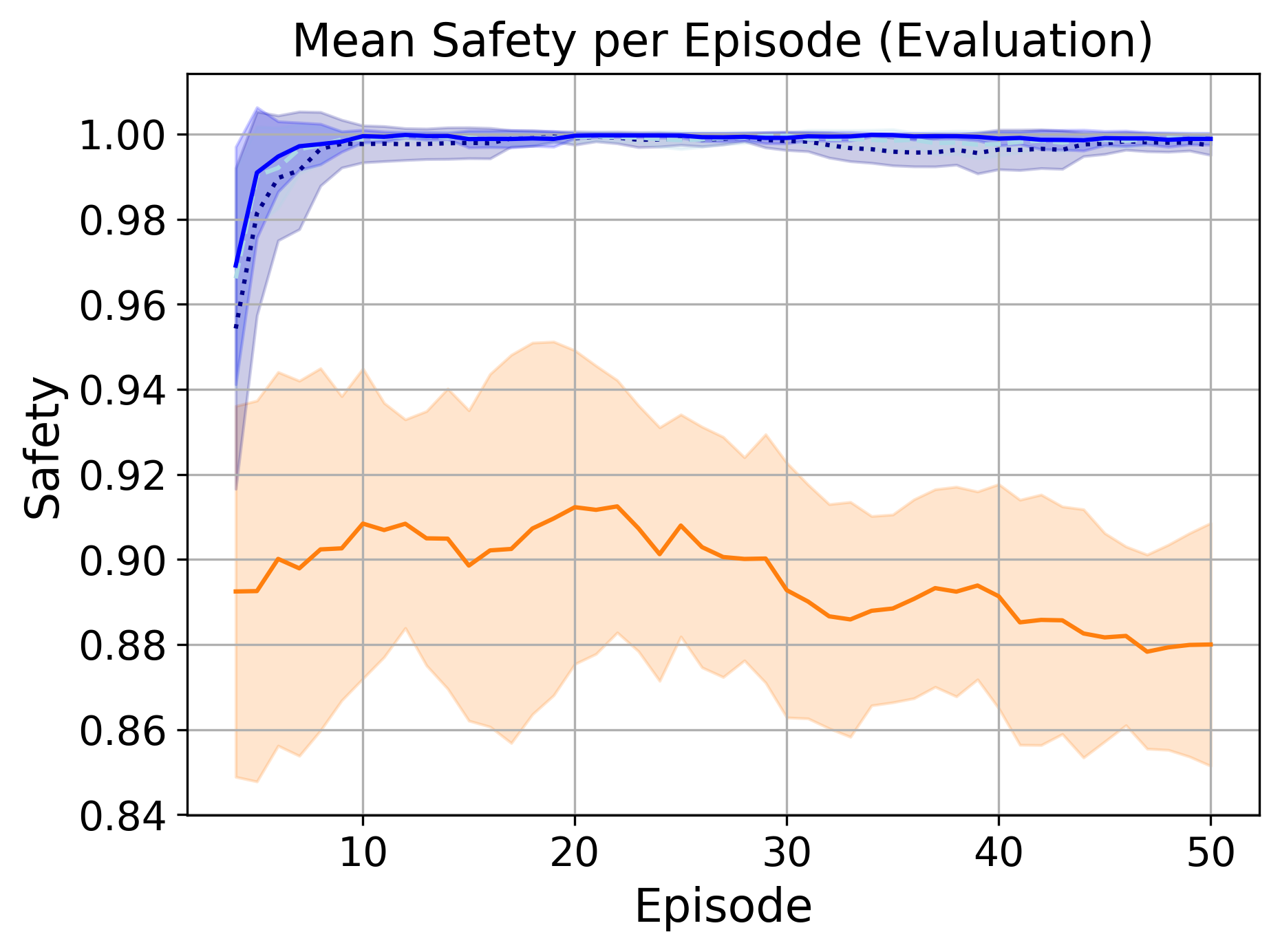}
        \vspace{0.3em}

        \textbf{(c)}
        \label{fig:subfig3}
    \end{minipage}

    \caption[Single-agent Results for \textit{CartSafe} (PLPG agents).]{Training, evaluation and safety results for \textit{CartSafe} for PPO-based agents. The lines represent the mean and the shadow the standard deviation over 5 seeds. Results are smoothed using a rolling average with window size 50.}
    \label{fig:cartsafe_ppo}
\end{figure*}
\begin{figure*}[h!]
    \centering

    % First row: Off-policy, epsilon-greedy
    \begin{minipage}[b]{0.32\textwidth}
        \centering
        \includegraphics[width=\textwidth]{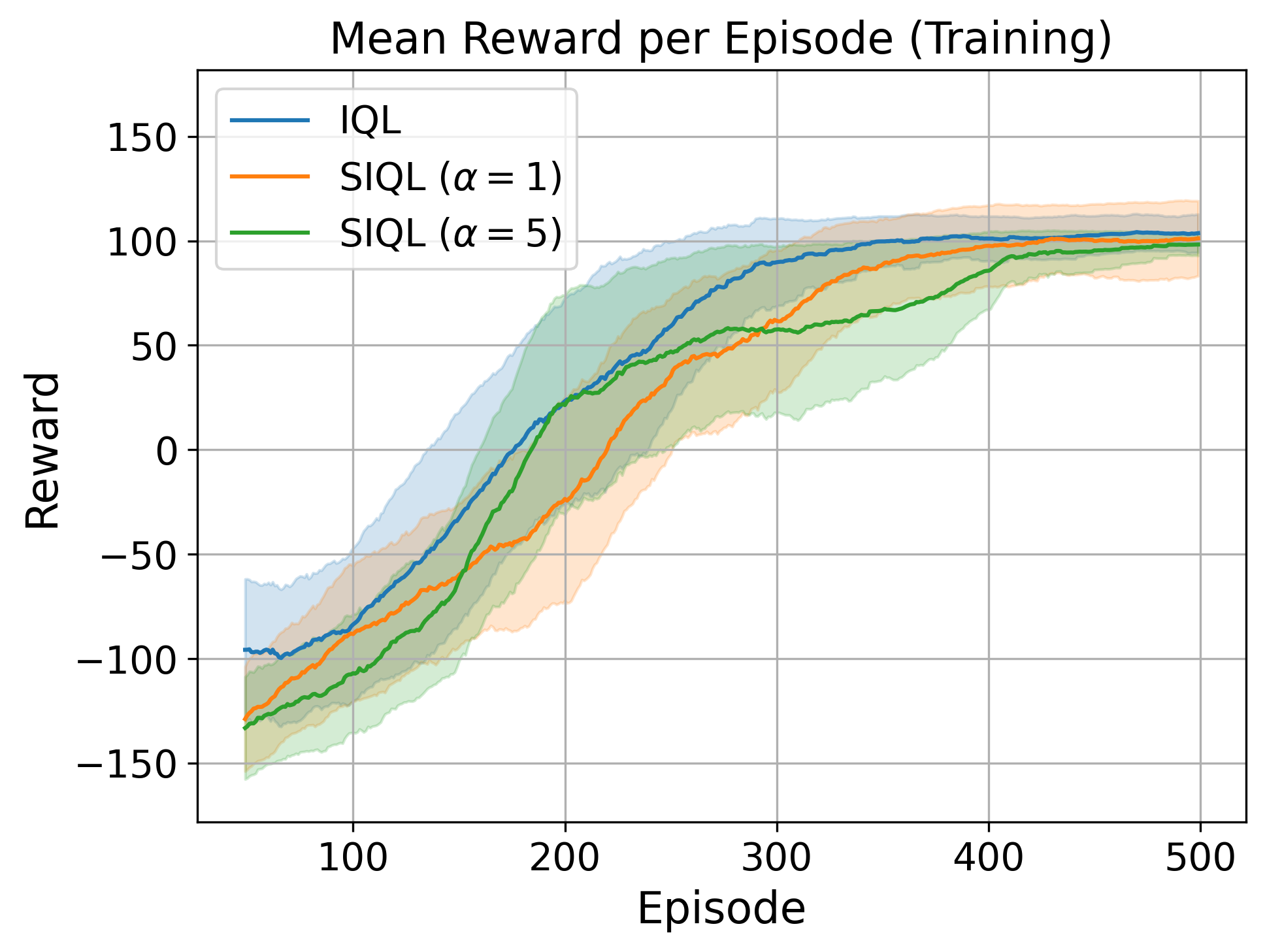}
        \vspace{0.3em}

        \textbf{(a)}
        \label{fig:cartsafe_dqn:sub1}
    \end{minipage}
    \hfill
    \begin{minipage}[b]{0.32\textwidth}
        \centering
        \includegraphics[width=\textwidth]{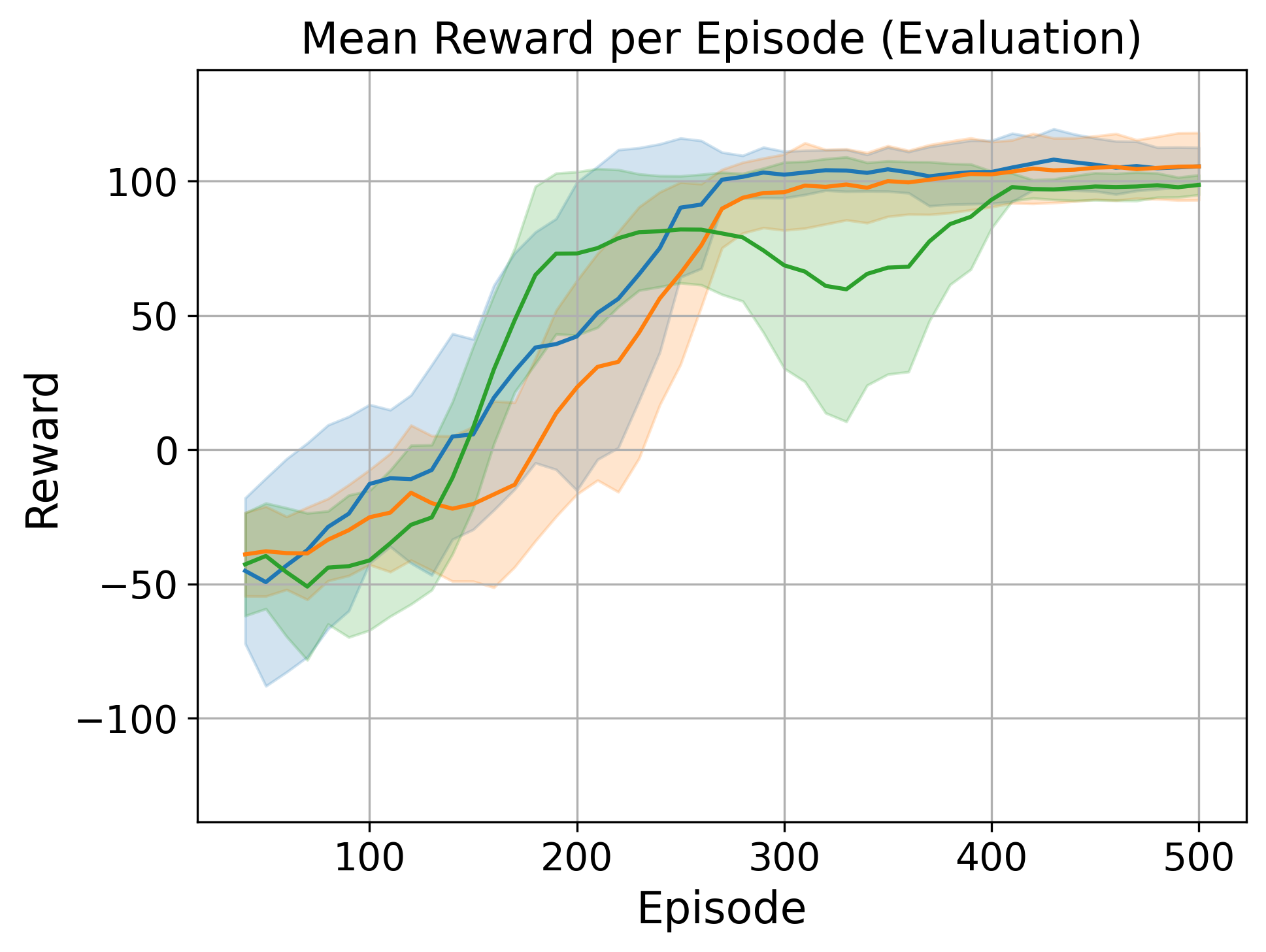}
        \vspace{0.3em}

        \textbf{(b)}
        \label{fig:cartsafe_dqn:sub2}
    \end{minipage}
    \hfill
    \begin{minipage}[b]{0.32\textwidth}
        \centering
        \includegraphics[width=\textwidth]{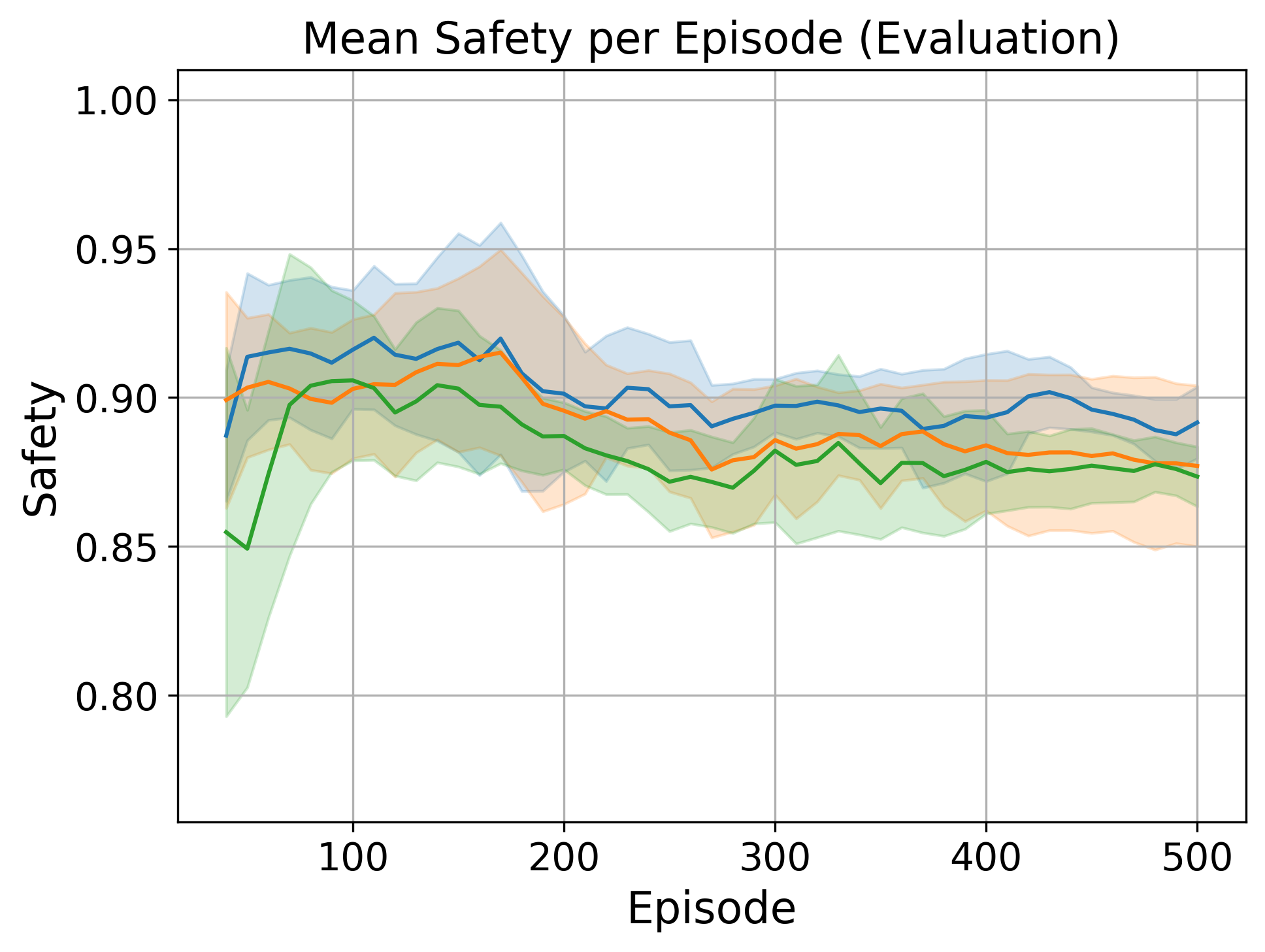}
        \vspace{0.3em}

        \textbf{(c)}
        \label{fig:cartsafe_dqn:sub3}
    \end{minipage}

    \vspace{10pt}

    % Second row: On-policy, epsilon-greedy
    \begin{minipage}[b]{0.32\textwidth}
        \centering
        \includegraphics[width=\textwidth]{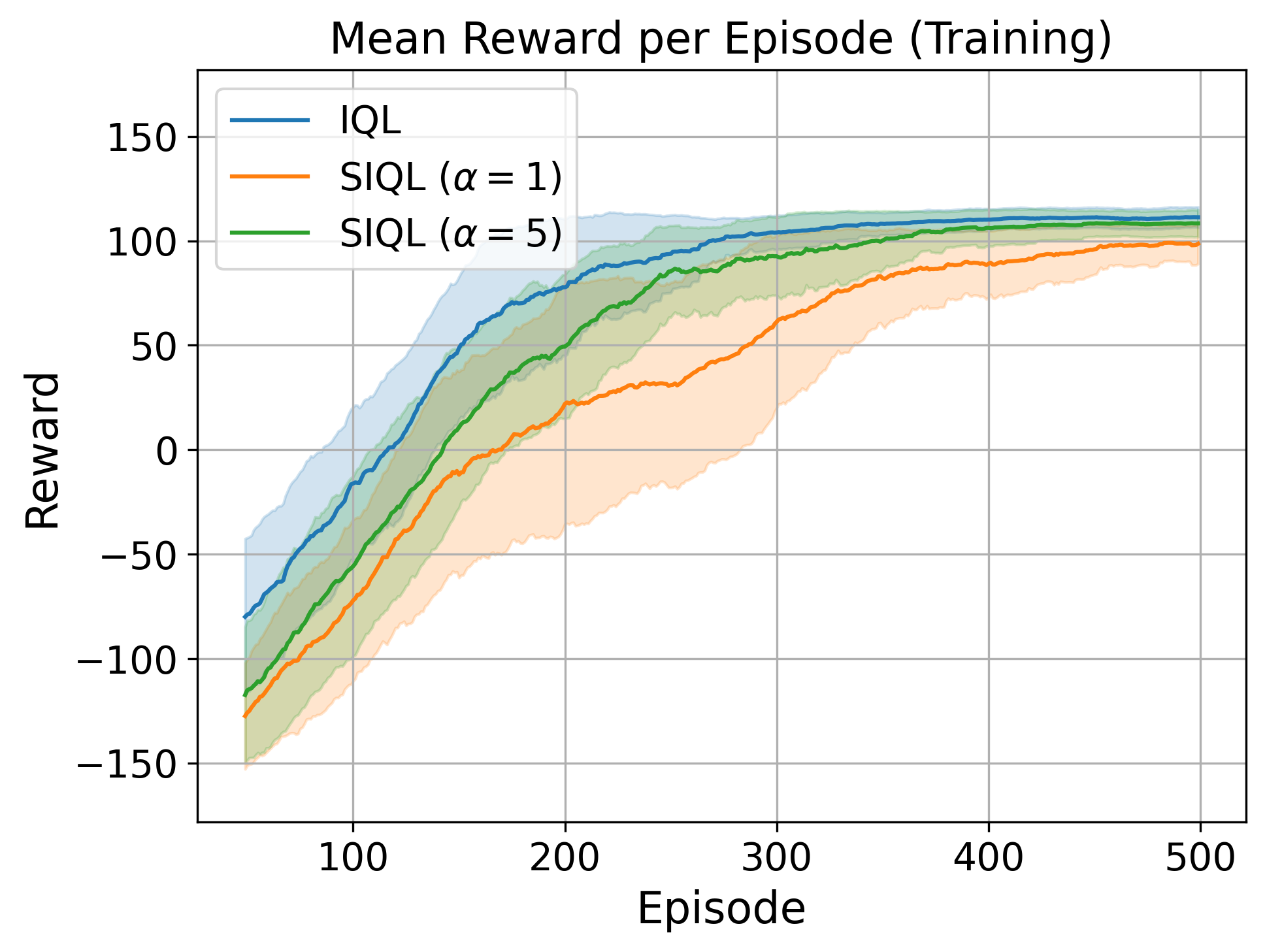}
        \vspace{0.3em}

        \textbf{(d)}
        \label{fig:cartsafe_dqn:sub4}
    \end{minipage}
    \hfill
    \begin{minipage}[b]{0.32\textwidth}
        \centering
        \includegraphics[width=\textwidth]{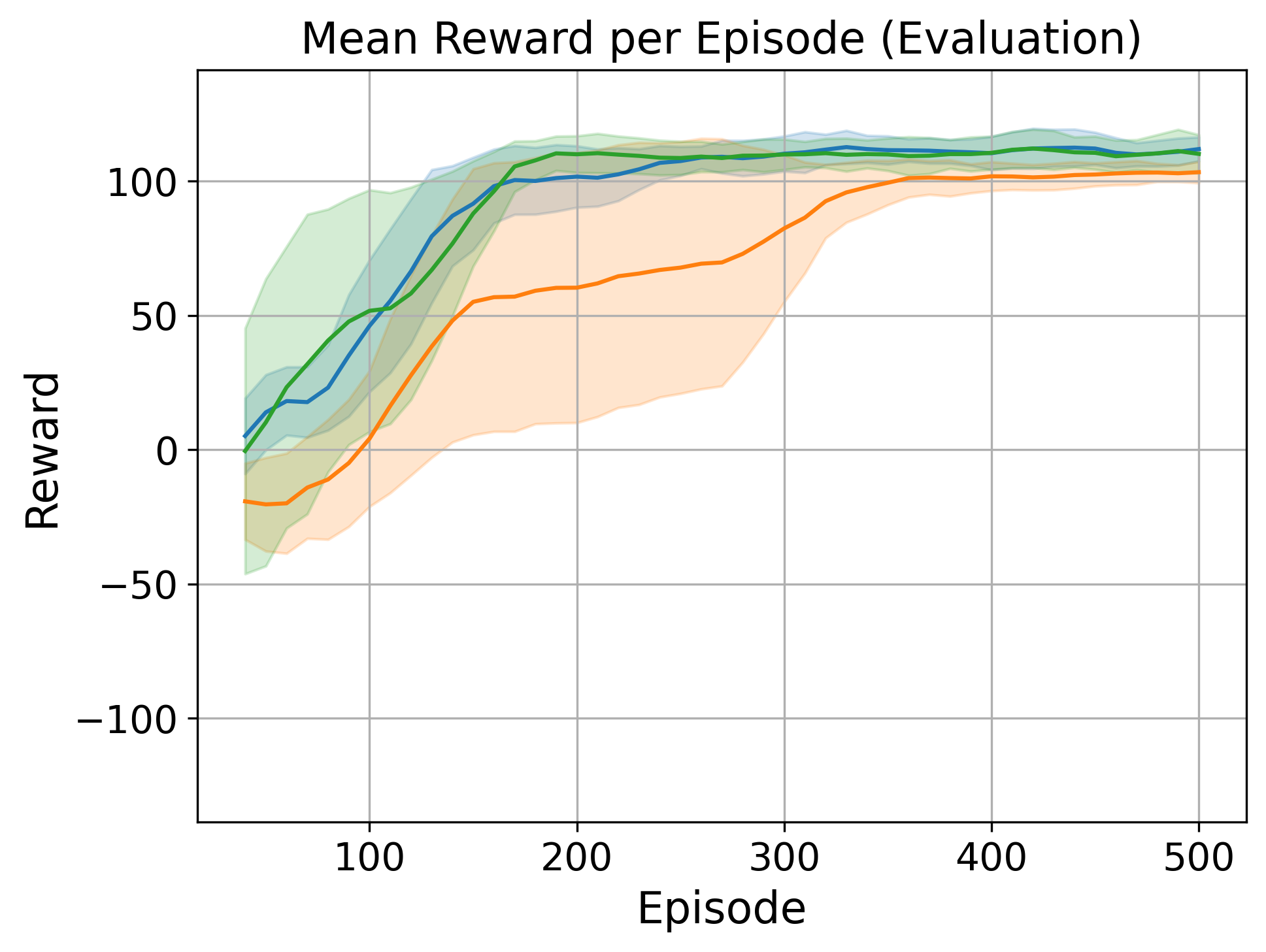}
        \vspace{0.3em}

        \textbf{(e)}
        \label{fig:cartsafe_dqn:sub5}
    \end{minipage}
    \hfill
    \begin{minipage}[b]{0.32\textwidth}
        \centering
        \includegraphics[width=\textwidth]{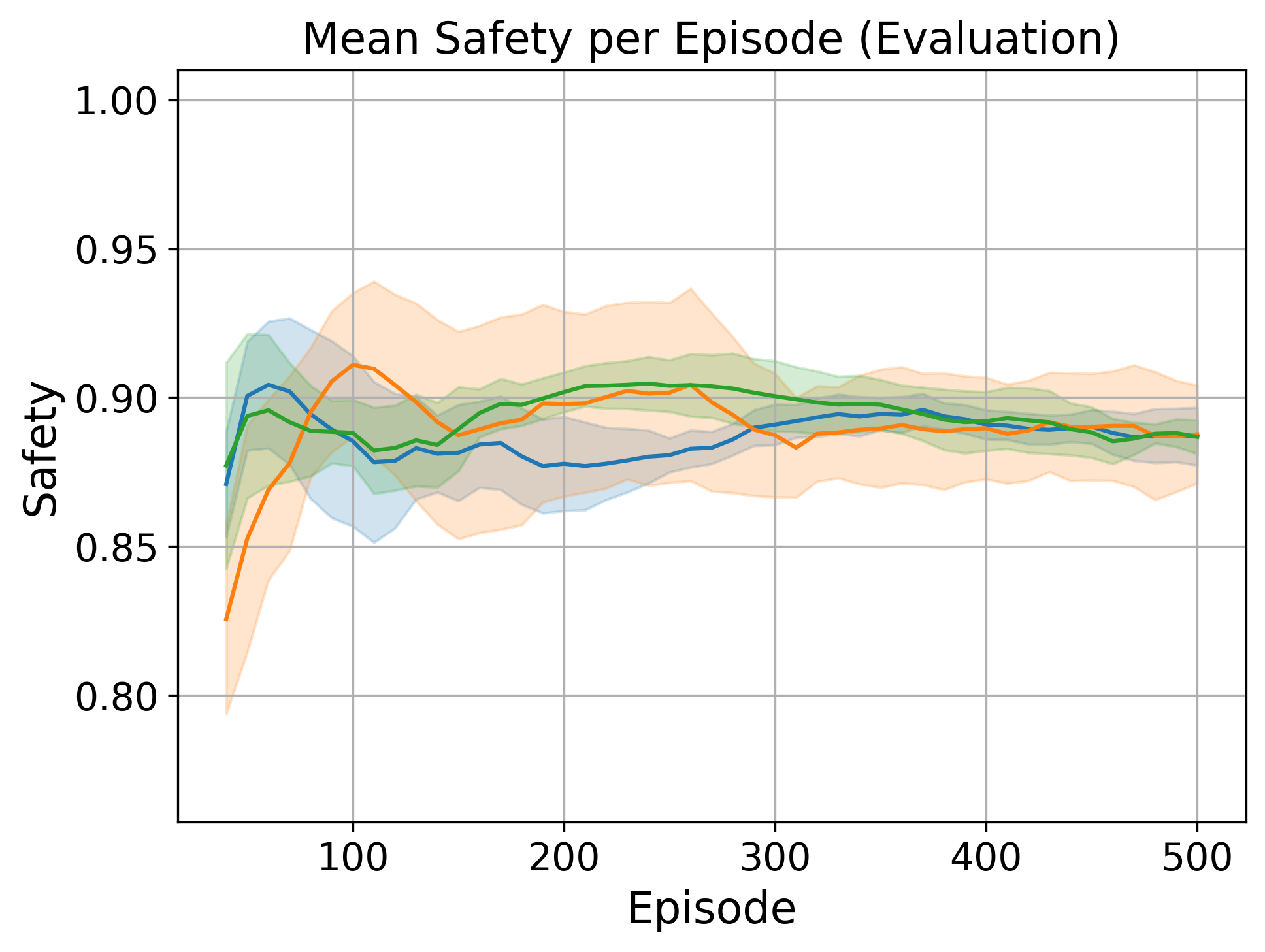}
        \vspace{0.3em}

        \textbf{(f)}
        \label{fig:cartsafe_dqn:sub6}
    \end{minipage}

    \vspace{10pt}

    % Third row: Off-policy, softmax
    \begin{minipage}[b]{0.32\textwidth}
        \centering
        \includegraphics[width=\textwidth]{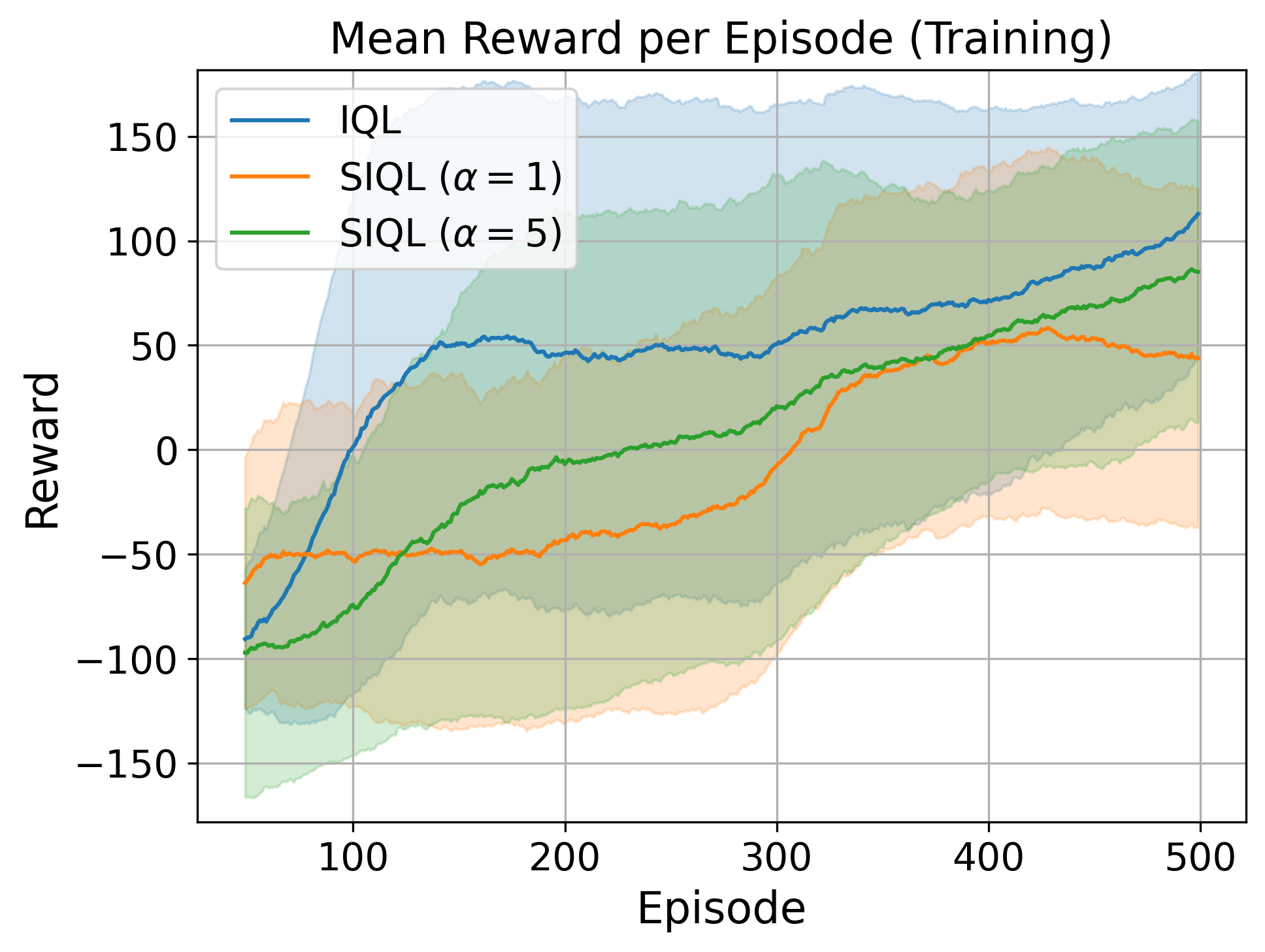}
        \vspace{0.3em}

        \textbf{(g)}
        \label{fig:cartsafe_dqn:sub7}
    \end{minipage}
    \hfill
    \begin{minipage}[b]{0.32\textwidth}
        \centering
        \includegraphics[width=\textwidth]{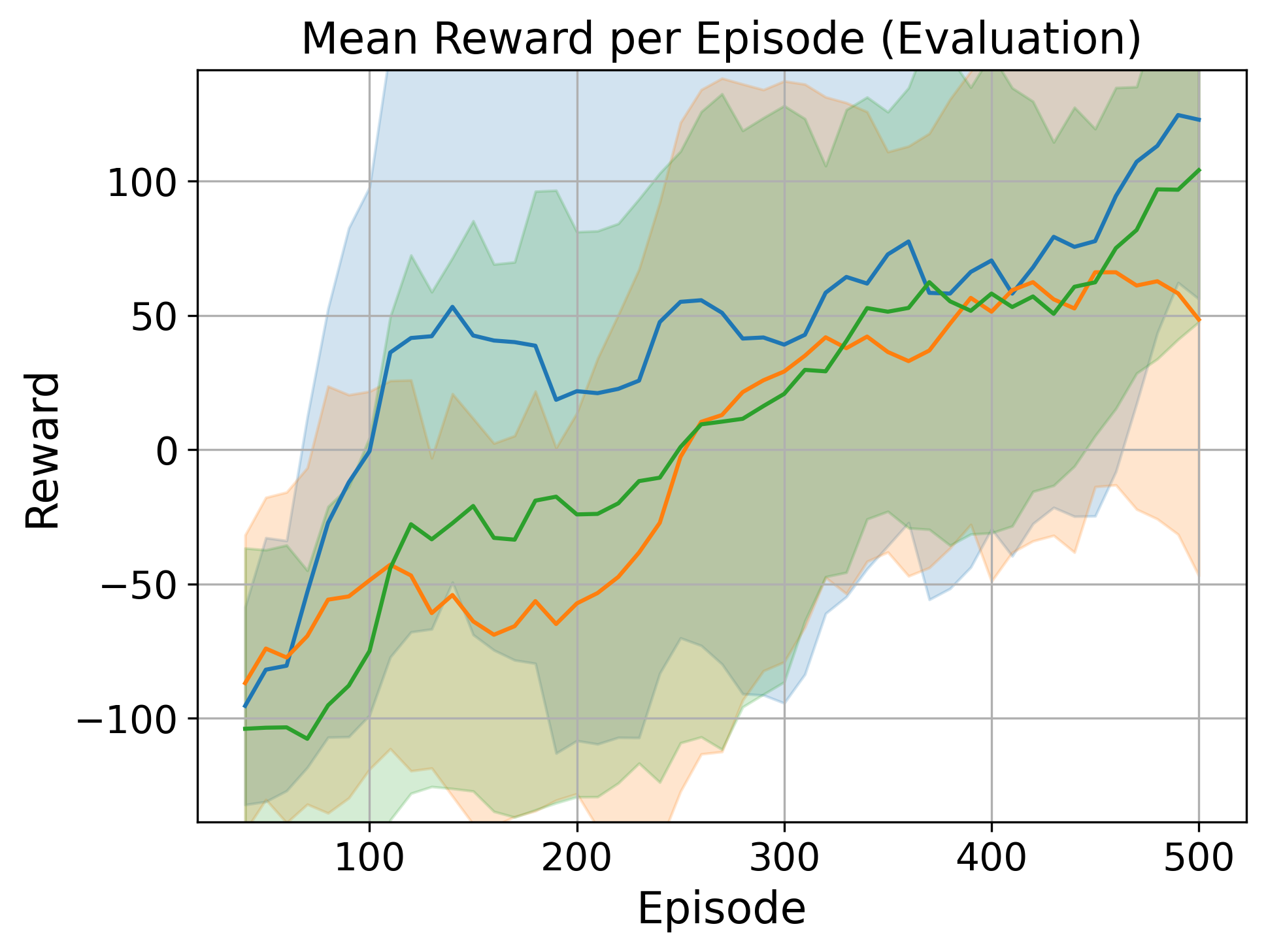}
        \vspace{0.3em}

        \textbf{(h)}
        \label{fig:cartsafe_dqn:sub8}
    \end{minipage}
    \hfill
    \begin{minipage}[b]{0.32\textwidth}
        \centering
        \includegraphics[width=\textwidth]{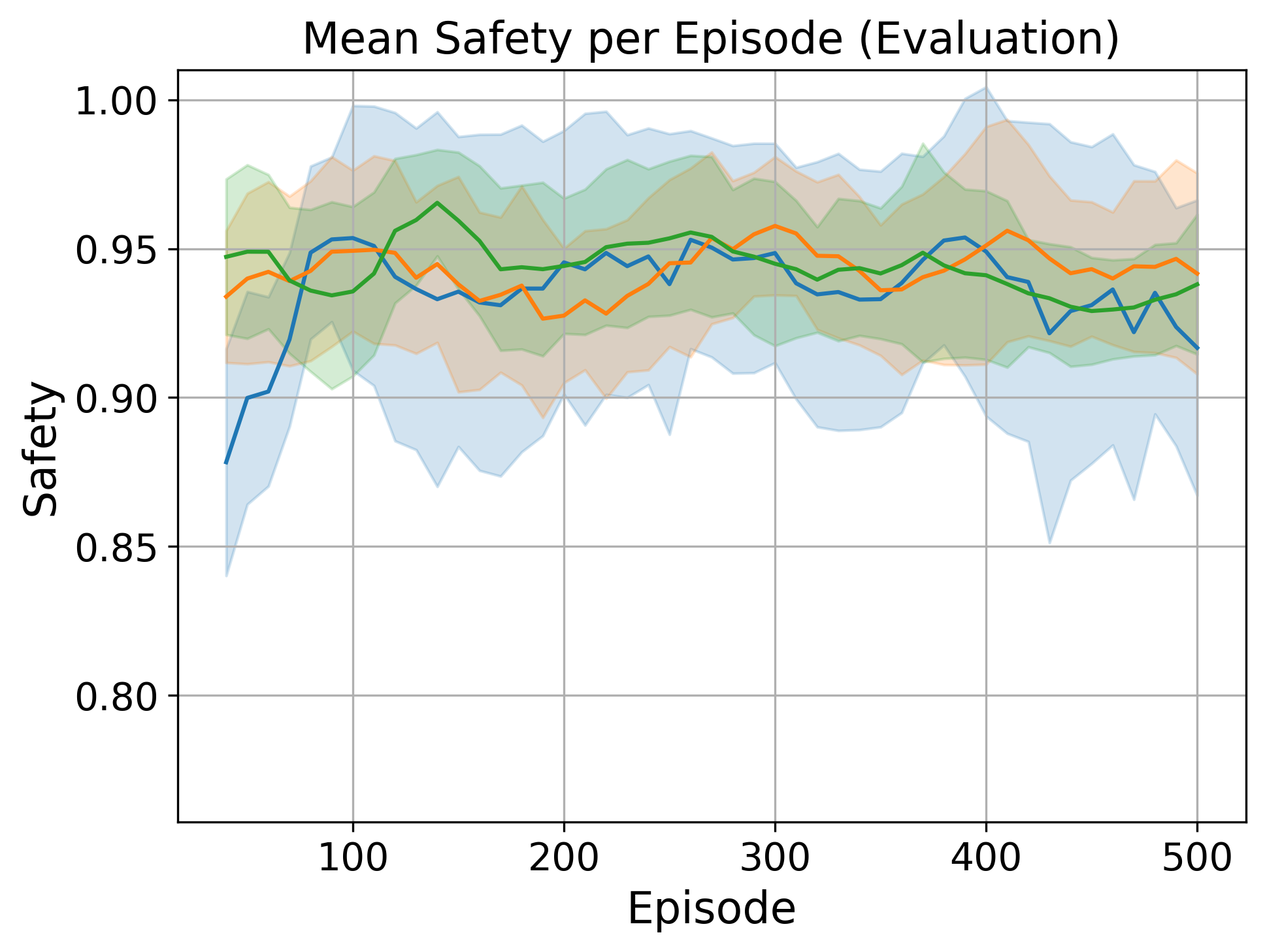}
        \vspace{0.3em}

        \textbf{(i)}
        \label{fig:cartsafe_dqn:sub9}
    \end{minipage}

    \vspace{10pt}

    % Fourth row: On-policy, softmax
    \begin{minipage}[b]{0.32\textwidth}
        \centering
        \includegraphics[width=\textwidth]{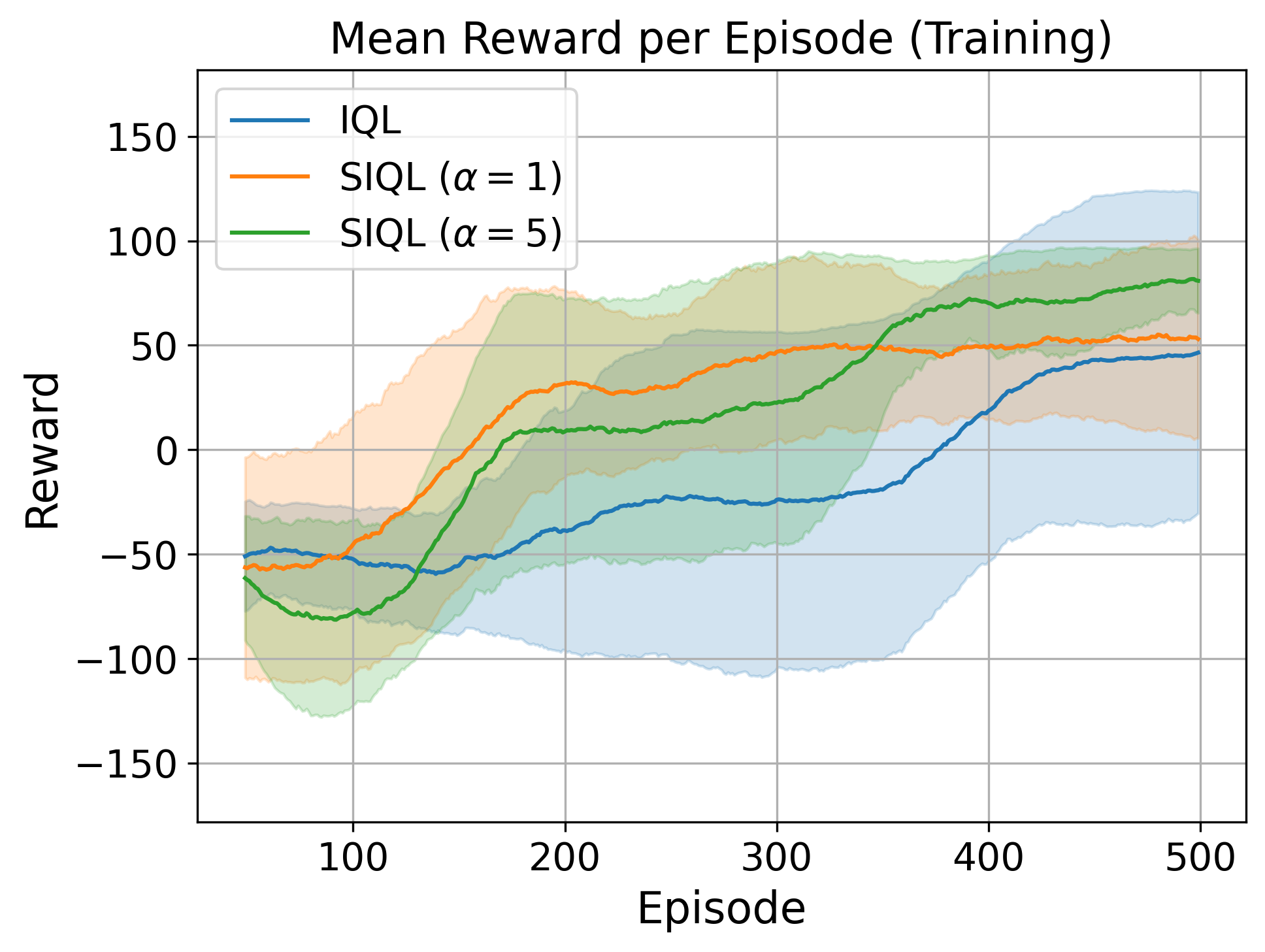}
        \vspace{0.3em}

        \textbf{(j)}
        \label{fig:cartsafe_dqn:sub10}
    \end{minipage}
    \hfill
    \begin{minipage}[b]{0.32\textwidth}
        \centering
        \includegraphics[width=\textwidth]{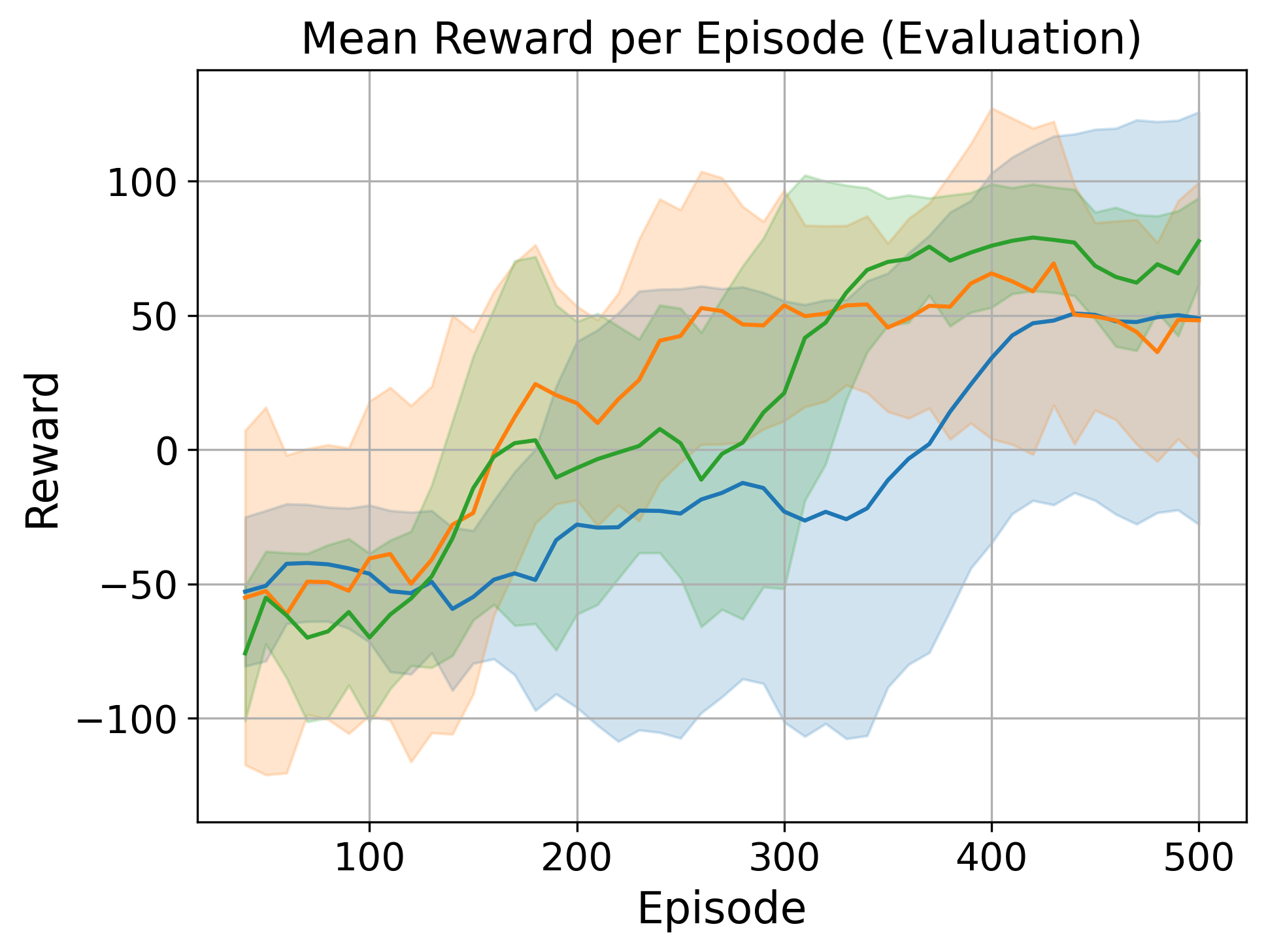}
        \vspace{0.3em}

        \textbf{(k)}
        \label{fig:cartsafe_dqn:sub11}
    \end{minipage}
    \hfill
    \begin{minipage}[b]{0.32\textwidth}
        \centering
        \includegraphics[width=\textwidth]{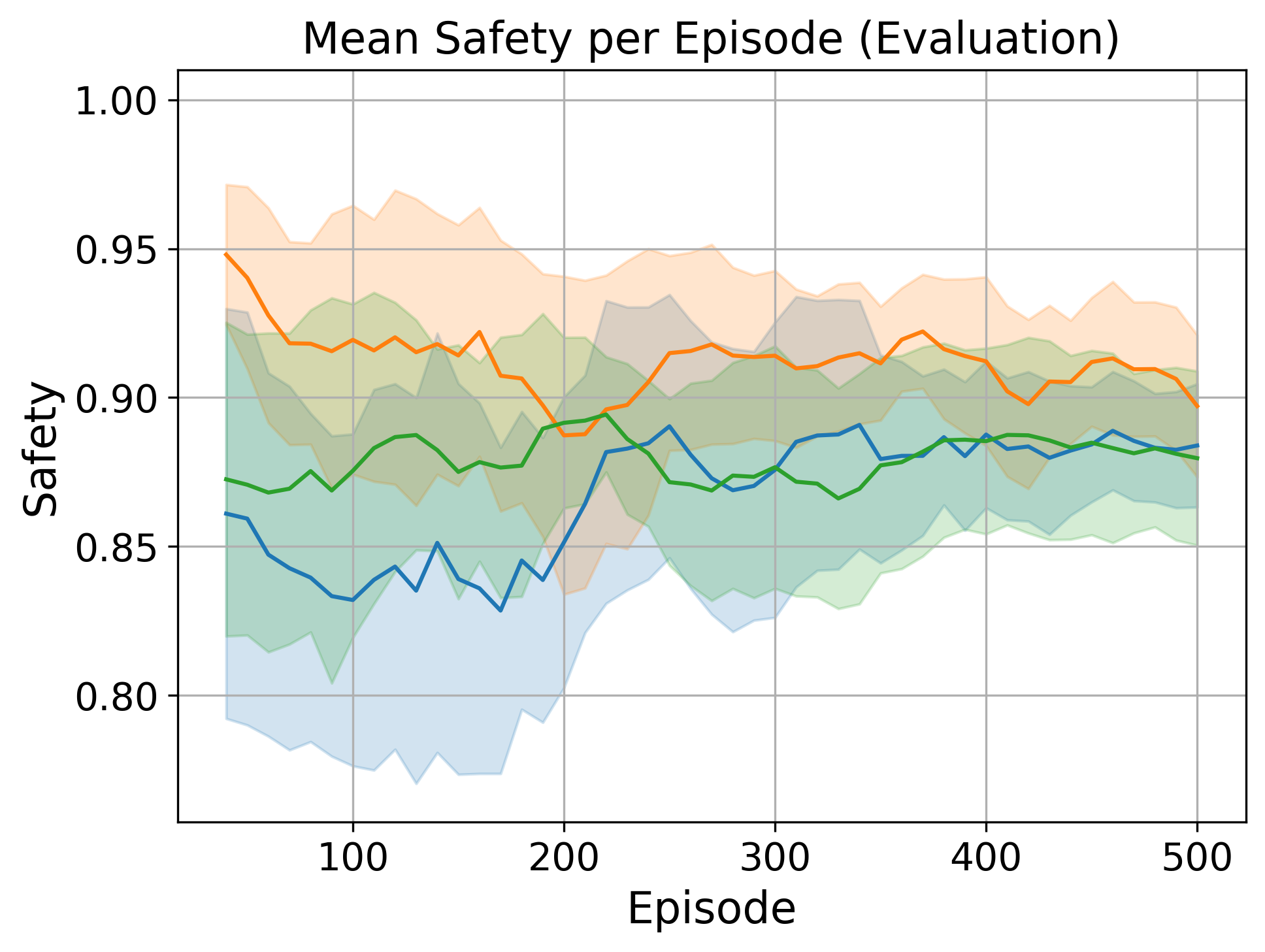}
        \vspace{0.3em}

        \textbf{(l)}
        \label{fig:cartsafe_dqn:sub12}
    \end{minipage}

    % Figure caption
    \caption[Single-agent Results for \textit{CartSafe} (PLTD agents).]{Training, evaluation and safety results for \textit{CartSafe} for on- and off-policy DQN-based agents with different exploration strategies: \textit{top row:} off-policy, $\epsilon$-greedy; \textit{top middle row:} on-policy, $\epsilon$-greedy; \textit{bottom middle row:} off-policy, softmax; and \textit{bottom row:} on-policy, softmax. The lines represent the mean and the shadow the standard deviation over 3 seeds.}
    \label{fig:cartsafe_dqn}
\end{figure*}

Before testing the two shielded multi-agent DQN-based algorithms (SIQL and SPSQL), it is imperative to first test whether or not PLTD (ProbLog-shielded DQN) works as well as vanilla DQN for environments with constraints, and how it compares to PLPG (ProbLog-shielded PPO). Additionally, it would be useful to verify that the custom PLPG code works for single agent settings before attempting multi-agent experiments.

\textit{CartSafe} is modification of OpenAI Gym's classical control problem \textit{CartPole} \citep{brockman1606.01540} with simple constraints inspired by \citet{jemaw2019}. The problem has the following elements: the environment consists of a rod attached to a box or cart at one end and is allowed to freely swing around the attachment point. This setup exists in a bounded 2-dimensional real space with a set width and an active physics simulation. At each time step, the agent has two actions: accelerate the cart to the left or to the right. The agent observes the cart's position, velocity, the angular velocity of the rod and the angle of the rod in radians. The goal is to actively balance the rod upright without leaving the bounds of the environment. The agent receives a reward at each time step based on the angle of the pole ($\theta_{pole}$) with respect to vertical: $r^t=1$ if $-15^\circ\leq\theta_{pole}\leq15^\circ$ and $r^t=-1$ otherwise. The environment terminates when a maximum number of steps ($t_{max}$) are reached or the cart moves off screen. The additional constraints that are added to this setup are such that the cart must not enter the region bounded by the leftmost and rightmost quarters of the environment -- thus the agent must learn to balance the rod near the center of the screen.

The purpose of this experiment is trifold: 
\begin{enumerate}
    \item To test whether the PLPG and ProbLog shield implementations adequately work on a common RL task with a continuous state space;
    \item To verify the ideas of PLTD introduced in Section 3.2 and its implementation to see whether it would help (or at least does not hurt) the ability of a DQN agent converging to a good and safe policy;
    \item To explore the possibilities in terms of exploration and on/off-policy strategies to see how they affect performance and safety of PLTD agents.
\end{enumerate}

\subsection{Experimental conditions}
The maximum length of each episode was set to $t_{max}=200$ with 500 training episodes. The agents are evaluated every 10 episodes. Each algorithm was re-run with 5 different seeds each and their aggregated results are discussed below. The PPO and DQN parameters were minimally hand-tuned and can be found in Table~\ref{table:hyperparams} in Appendix~\ref{appsec:hyperparams}. The PPO agents were shielded with safety penality coefficient $\alpha\in\{0.5,1.0,2.0\}$. The DQN-based algorithms were tested by varying the following aspects: on-policy vs.\ off-policy, $\epsilon$-greedy vs. softmax exploration (paired with greedy and softmax exploitation policies respectively) and whether they are shielded (SIQL with safety coefficient $\alpha\in\{1.0,5.0\}$) or not (IQL).

% \begin{figure}[h]
%     \centering
%     \fbox{
%     \includegraphics[width=0.3\linewidth]{Experiments/images/cartsafe/cartsafe_env.png}}
%     \caption[Starting Configuration for \textit{CartSafe}.]{Example starting configuration for \textit{CartSafe}. The `cart' is the black box that the agent controls, the brown rectangle is a `pole' that pivots around the purple attachment point. The regions to the right and left of the red vertical lines are the constrained regions ($\pm1$ unit away from the center, with a maximal width of the environment at 4.8 units).}
%     \label{fig:cartsafe_env}
% \end{figure}

\subsection{Shield construction}
A simple shield that aims to satisfy the constraints outlined in the task description can be constructed as follows:
\begin{lstlisting}[language=Prolog, caption={Shield for soft constraint satisfaction for \textit{CartSafe}.}, label={shield:cartsafe}]
% actions
action(0)::action(left);
action(1)::action(right).

% sensors
sensor_value(0)::sensor(cost).
sensor_value(1)::sensor(xpos).
sensor_value(2)::sensor(left).
sensor_value(3)::sensor(right).

% constraints
unsafe_next :- action(X), sensor(X), 
               sensor(cost), sensor(xpos).
safe_next :- \+unsafe_next.
\end{lstlisting}

The actions correspond to moving the cart to the left or right. The sensors here correspond to (in order) whether or not the current state is within a constrained region (Boolean value), the distance from the center on the $x$-axis normalized between 0 and 1 (such that it is 0 when the cart is in the middle, and 1 when it is at the right or left edges of the screen), and whether the cart is on the left side or right side of the screen (Boolean value). The constraint can be interpreted as \textit{it is unsafe if an action is taken towards some side X, the cart is on side X, the current state is in the constrained region, and the x-position is sufficiently large.}

This means that if the cart is within the permissible central region of the environment, the safety probability of the next action is computed as $(\texttt{sensor(cost)}=0)\implies(\texttt{safe\_next}=1)$, and the policy remains unchanged. If the cart is too far right, outside the permissible region, it is unsafe to move the cart towards the right. The \textit{right} action gets progressively more unsafe as the cart gets further to the right. The safe policy will proportionally favor the left action as a result. Analogous behavior occurs if the cart is too far left instead of right.

Note that this also means that the agent is \textit{already} within the constrained region (specifically, at the edge of the region) when the shield takes effect -- this can be easily solved by instead having a predicate that looks at whether the \textit{next} step will be impermissible given an action, rather than whether is \textit{currently} is in such a state. However, this should not significantly alter the behavior of the agent in this environment.

Note that $\texttt{sensor(xpos)}$ is the \textit{only} non-Boolean input to this shield. If it not placed in the body of the constraint in line 12, since \texttt{sensor(X)} and \texttt{sensor(cost)} are purely Boolean, the safe policy computation will result in a deterministic behavior in the constrained regions -- \textit{go left if the cart is in the right-hand constrained region with probability 1}, and vice-versa for the left-hand constrained region. Thus, the behavior of the agent act safely immediately and deterministically. The inclusion of $\texttt{sensor(xpos)}$ allows us to experiment with soft constraints -- here, it enforces the idea that it gets linearly worse when travel deeper into the constrained regions.

\subsection{Results: PLTD (Shielded DQN)}

Figure~\ref{fig:cartsafe_dqn} displays training, evaluation and safety results of a variation of PLTD agents. As a reminder, these agents vary in the following aspects: on-policy vs.\ off-policy, $\epsilon$-greedy vs. softmax exploration (paired with greedy and softmax exploitation policies respectively) and whether they are shielded (SIQL with safety coefficient $\alpha\in\{1,5\}$) or not (IQL).

\paragraph{\textit{Shielding}:} First, we may look at whether adding a shield significantly alters the behavior of the agent in this setting. The rightmost column shows the mean safety per episode computed with the CartSafe shield (Shield~\ref{shield:cartsafe}). We see that in general, adding a shield to an agent with either of the tested safety penalty values does not significantly affect its safety, \textit{ceteris paribus}. However, all safety values tend to be quite high, generally above 0.9 with not much improvement after the first $\approx100$ episodes. We also note that the shielded agents do not do \textit{worse} than the unshielded ones. This may hint at the safety constraints being relatively simple to accomplish given the environment goals and algorithms. Thus there is little to no trade-off between reward and safety in this task.

\paragraph{\textit{$\epsilon$-Greedy vs.\ Softmax}:} When looking at the difference between the exploration and exploitation strategies, we see quite a large difference in the variation of results when trained multiple times -- the $\epsilon$-greedy-based algorithms tend to more reliably converge to a good policy (converging around reward $\approx100$), whereas the softmax policies tend to have a much larger standard deviation with less smoothly increasing curves, to policies that generally attain less reward per episode.

\paragraph{\textit{On vs.\ Off-Policy}:} For the $\epsilon$-greedy runs, it is noticeable that the on-policy versions of the algorithms all tends to converge faster than the off-policy ones, reaching around reward~$\approx$~50 in 100 episodes during evaluation versus 200 episodes for off-policy DQN. We do not see the same trend for the softmax policies, it seems that that it does reduce the variance across seeds significantly. There does not seem to be a significant difference in safety between these two conditions. One outlier in this regard is that the off-policy agents with softmax policies are slightly but significantly more safe than all the others.

\paragraph{\textit{PLTD vs.\ PLPG}:}
Based on the experiments conducted and discussed, we see that PLPG and PLTD agents act somewhat differently in this setting. None of the PPO-based agents attain the same reward as the DQN-based ones. Adding shields to these agents has a stronger influence the PLPG agents than the PLTD ones -- this is not unexpected, as the PLTD agents attain safety information both from the safety penalty term and the safe policy gradient, whereas safety information for PLTD agents is only input through the safety penalty. The PLTD agents tend to do better in terms of rewards, but worse in terms of safety, which is on par with the baseline of the unshielded PLTD agents or the vanilla PPO agent.

\subsection{Results: PLPG (Shielded PPO)}
Figure~\ref{fig:cartsafe_ppo} shows training and evaluation results for four agents that have a PPO-based learning scheme. The first agent is vanilla (unshielded) PPO (the orange line) and the other three are PLPG agents (blue) with varying safety penalties, $\alpha\in\{0.5,1,2\}$.

We see that for the training reward graphs (showing the total rewards attained per agent per episode), there is a relatively high variation in rewards, but the evaluation graphs show a clear upward trend of the agents learning to balance the pole. The variation is likely due to (at least in part) the choice of hyperparameters. The PPO agent shows better performance based on reward, but with much higher variation, and it does not seem like the algorithms have yet converged after 500 episodes. This trend holds for the evaluations as well (occurring after every 10 episodes) -- the PPO agent have better evaluation rewards but with more variation. 

When it comes to evaluating safety, we do see a very significant difference -- the unshielded PPO agent has a final mean evaluation safety of around 0.88 (learning to balance the pole generally but not always in the center), but all other PLPG agents converge quickly to a nearly perfect safe policy within 10 training episodes across seeds. Increasing the safety penalty $\alpha$ does not seem to make a significant difference for any of the shielded algorithm results.

\section{List of Hyperparameters}\label{appsec:hyperparams}

\begin{table*}[!h]
\centering
\begin{tabular}{|c|c|c|c|c|c|c|}
\hline
\textbf{Algorithm} & \textbf{Parameter} & \textbf{CartSafe} & \textbf{Stag-Hunt} & \textbf{Centipede} & \textbf{EPGG} & \textbf{Markov Stag-Hunt} \\
\hline\hline
\multirow{9}{*}{\textbf{DQN}} & Epochs & 1 & x & 1 & x & x \\
\cline{2-7}
& $\gamma$ & 0.9 & x & 0.99 & x & x \\
\cline{2-7}
& Buffer size & 10000  & x & 512 & x & x \\
\cline{2-7}
& Batch size & 128 & x & 128& x & x \\
\cline{2-7}
& lr & 0.001 & x & 0.001 & x & x \\
\cline{2-7}
& $\epsilon$-decay & 0.99996 & x & 0.9972 & x & x \\
\cline{2-7}
& $\epsilon_{min}$ & 0.01 & x & 0.01 & x & x \\
\cline{2-7}
& $\alpha$ & $\{1.0,5.0\}$ & x & 1.0 & x & x \\
\cline{2-7}
& $\tau$ & 1.0 & x & 1.0 & x & x \\
\hline\hline
\multirow{9}{*}{\textbf{PPO}} & Epochs & 10  & 10 &  10 & 10 & 10 \\
\cline{2-7}
& $\gamma$ & 0.9 & 0.99 & 0.99 & 0.99 & 0.99 \\
\cline{2-7}
& Buffer size & 400 & 50 & 100 & 50 & 100 \\ % update timestep
\cline{2-7}
& $\epsilon$-clip & 0.1 & 0.1 & 0.15 & 0.1 & 0.1 \\
\cline{2-7}
& lr (actor) & 0.001 & 0.001 & 0.001 & 0.001 & 0.001 \\
\cline{2-7}
& lr (critic) & 0.001 & 0.001 & 0.001 & 0.001 & 0.001 \\
\cline{2-7}
& VF coef 0.5 & 0.5 & 0.5 & 0.5 & 0.5 & 0.5 \\
\cline{2-7}
& S coef 0.01 & 0.01  & 0.01 & 0.01 & 0.01 & 0.01 \\
\cline{2-7}
& $\alpha$ & $\{0.5,1.0,2.0\}$  & 1.0 & 1.0 & 1.0 & 1.0 \\
\hline
\end{tabular}
\caption{Hyperparameters for experiments. }
\label{table:hyperparams}
\end{table*}

Table~\ref{table:hyperparams} describes the hyperparameters used in the underlying RL algorithms for all independent MARL experiments. The hyperparameters are:
\begin{itemize}
    \item \textbf{Epochs:} The number of epochs that the neural networks are trained per training call.
    \item {$\mathbf{\gamma}$:} The discount factor.
    \item \textbf{Buffer size:} The number of latest steps (states and actions taken) contained in the history buffer.
    \item \textbf{Batch size:} (DQN) The batch size sampled from the buffer per training epoch.
    \item \textbf{lr/lr (actor)/lr (critic):} Learning rate for Q network (DQN), actor/critic networks (PPO). All networks were optimised via Adam \citep{kingma2014adam}, with all other hyperparameters left as in PyTorch's default implementation \citep{paszke2019pytorch}.
    \item \textbf{$\epsilon$-decay:} (DQN) $\epsilon$-decay factor $\delta$ for $\epsilon$-greedy exploration according to the following formula, where $t$ is the number of exploration steps taken so far:
    \begin{equation}
        \epsilon\leftarrow\delta^t
    \end{equation}
    \item \textbf{$\epsilon_{min}$:} (DQN) Minimum value of $\epsilon$ for $\epsilon$-greedy exploration.
    \item \textbf{$\tau$:} (DQN) Exploration factor for softmax exploration.
    \item \textbf{VF coef:} (PPO) Value-function loss coefficient.
    \item \textbf{S coef:} (PPO) Entropy loss coefficient.
    \item \textbf{$\epsilon$-clip}: (PPO) Clipped policy update ratio range. 
    \item \textbf{$\alpha$:} Safety penalty coefficient (cf.\ Definition~3.3 and \citet{yang2023safe}'s Definition 5.1).
\end{itemize}
Additionally, all DQN agents used neural networks with 2 hidden layers of 64 neurons each with a ReLU activation in between each layer, and all PPO agents used the same for actor and critic networks, except using the hyperbolic tangent instead of ReLU between layers.

% % \newpage
\section{Training Curves} \label{appsec:training_curves}
The training curves for \textit{2-player Extended Public Goods Game} (Figure~\ref{fig:epgg_results}) and
% \textit{Centipede} (Figure~\ref{fig:centipede_results}),
\textit{2-player Markov Stag-Hunt} (Figures~\ref{fig:msh_full_results} and \ref{fig:msh_part_results}) are given below. All remaining experiment training curves will be provided at the request of the reader.
\begin{figure*}[h!]
    \centering

    % Row 1: mu = 0.5
    \begin{minipage}[b]{0.3\textwidth}
        \centering
        \includegraphics[width=0.9\textwidth]{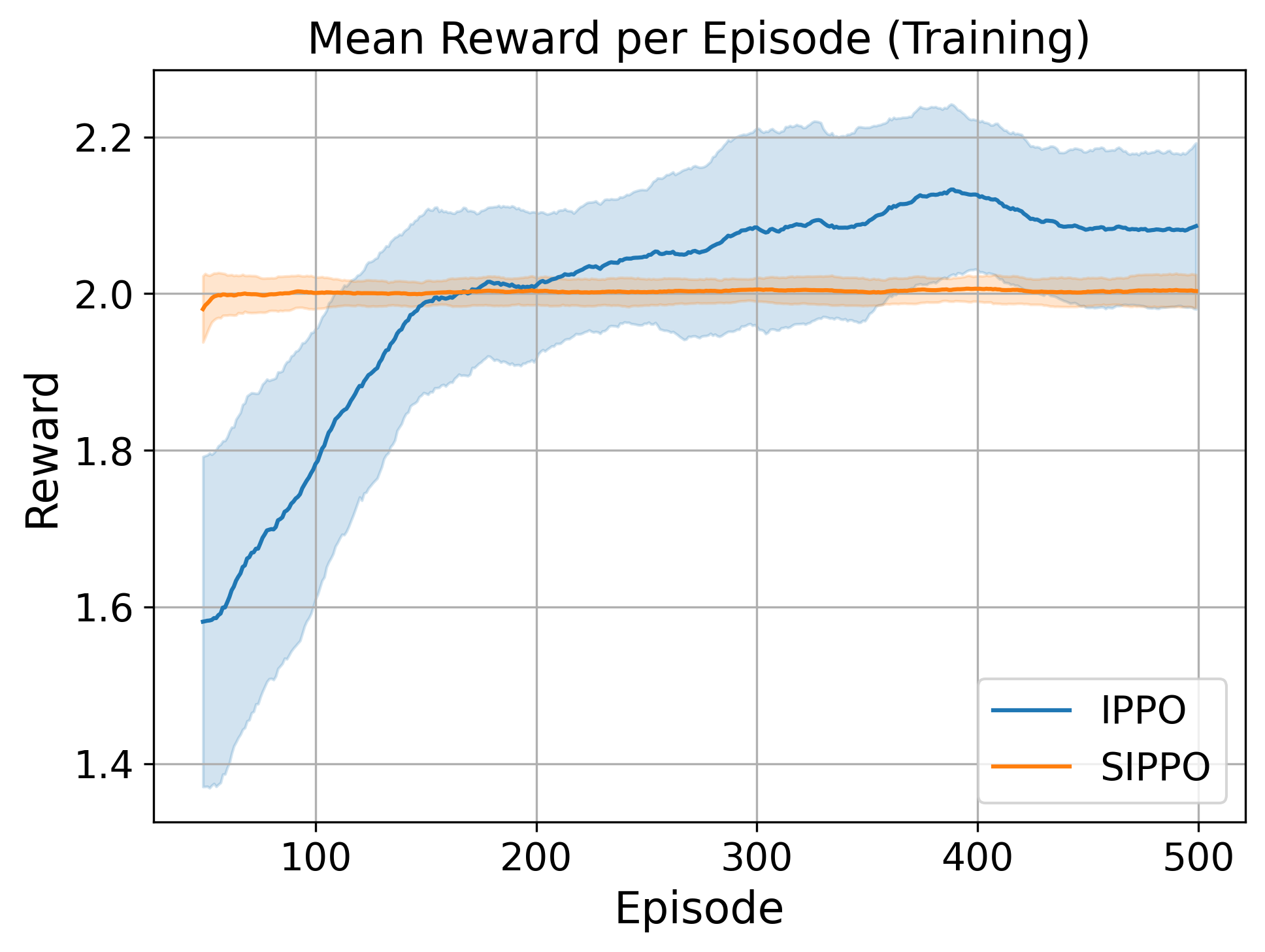}
        \vspace{0.3em}

        \textbf{(a)} Training ($\mu=0.5$)
        \label{fig:epgg_results:sub1}
    \end{minipage}
    \hfill
    \begin{minipage}[b]{0.3\textwidth}
        \centering
        \includegraphics[width=0.9\textwidth]{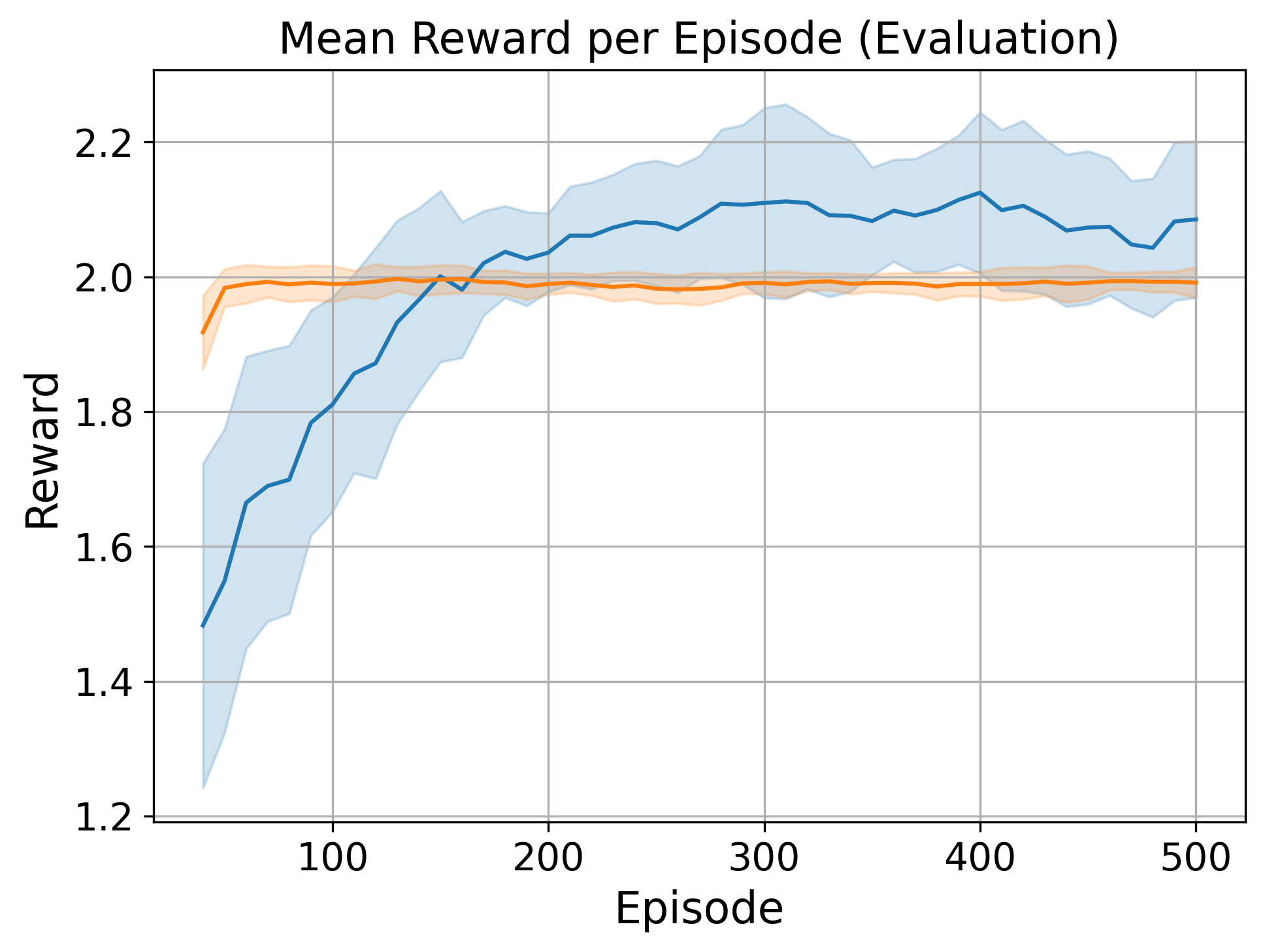}
        \vspace{0.3em}

        \textbf{(b)} Evaluation ($\mu=0.5$)
        \label{fig:epgg_results:sub2}
    \end{minipage}
    \hfill
    \begin{minipage}[b]{0.3\textwidth}
        \centering
        \includegraphics[width=0.9\textwidth]{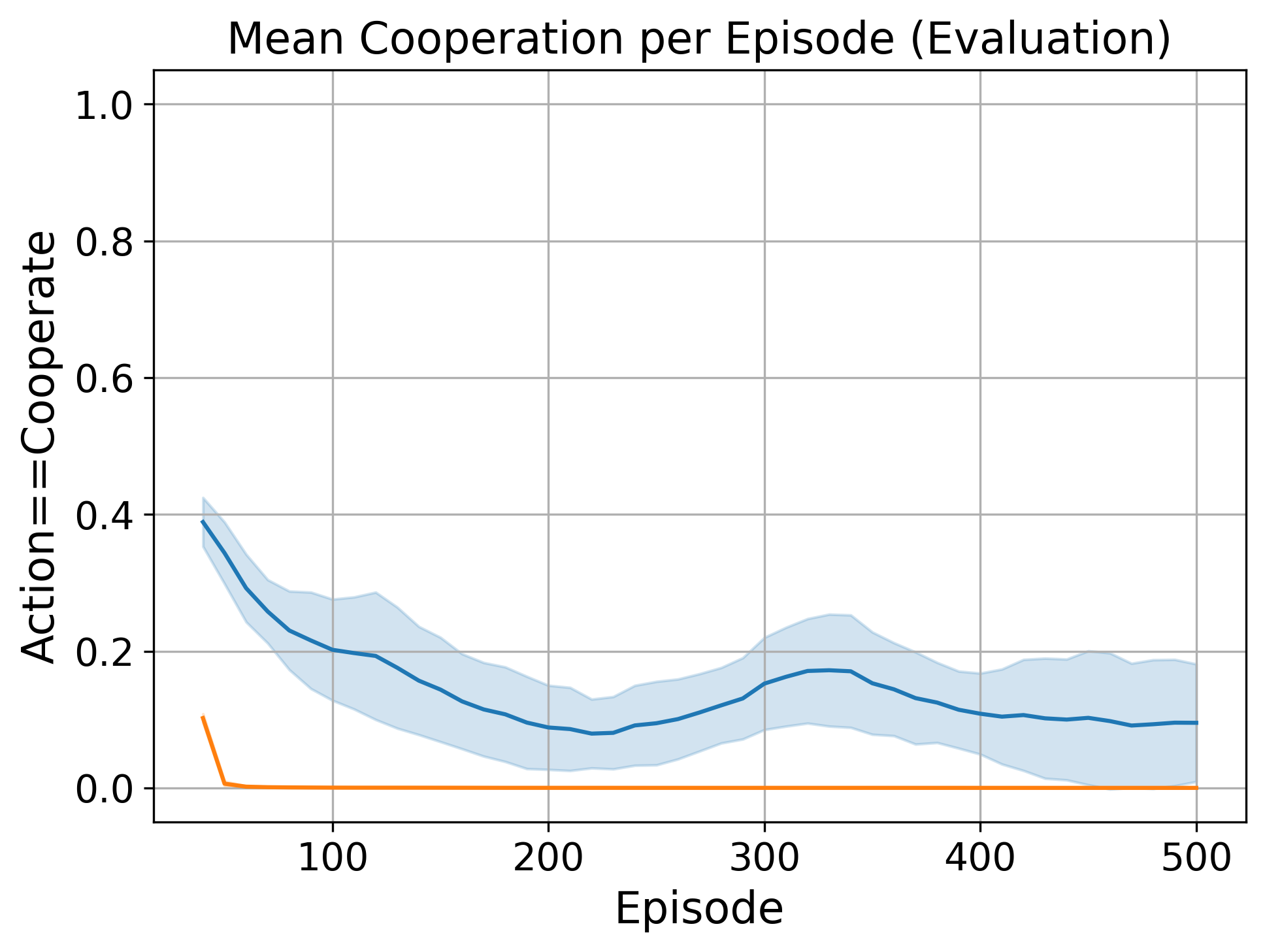}
        \vspace{0.3em}

        \textbf{(c)} Safety ($\mu=0.5$)
        \label{fig:epgg_results:sub3}
    \end{minipage}

    \vspace{5pt}

    % Row 2: mu = 1.0
    \begin{minipage}[b]{0.3\textwidth}
        \centering
        \includegraphics[width=0.9\textwidth]{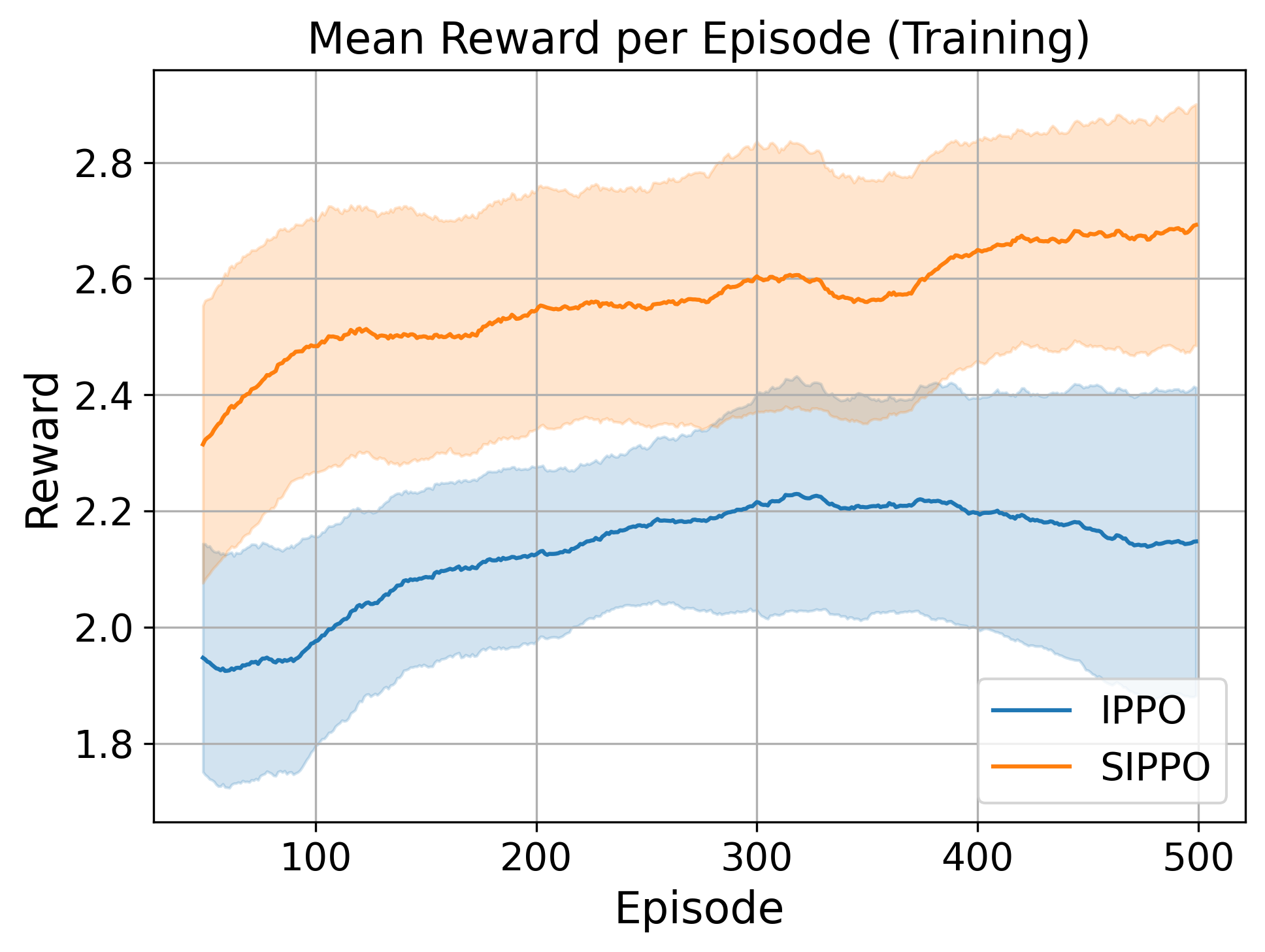}
        \vspace{0.3em}

        \textbf{(d)} Training ($\mu=1.0$)
        \label{fig:epgg_results:sub4}
    \end{minipage}
    \hfill
    \begin{minipage}[b]{0.3\textwidth}
        \centering
        \includegraphics[width=0.9\textwidth]{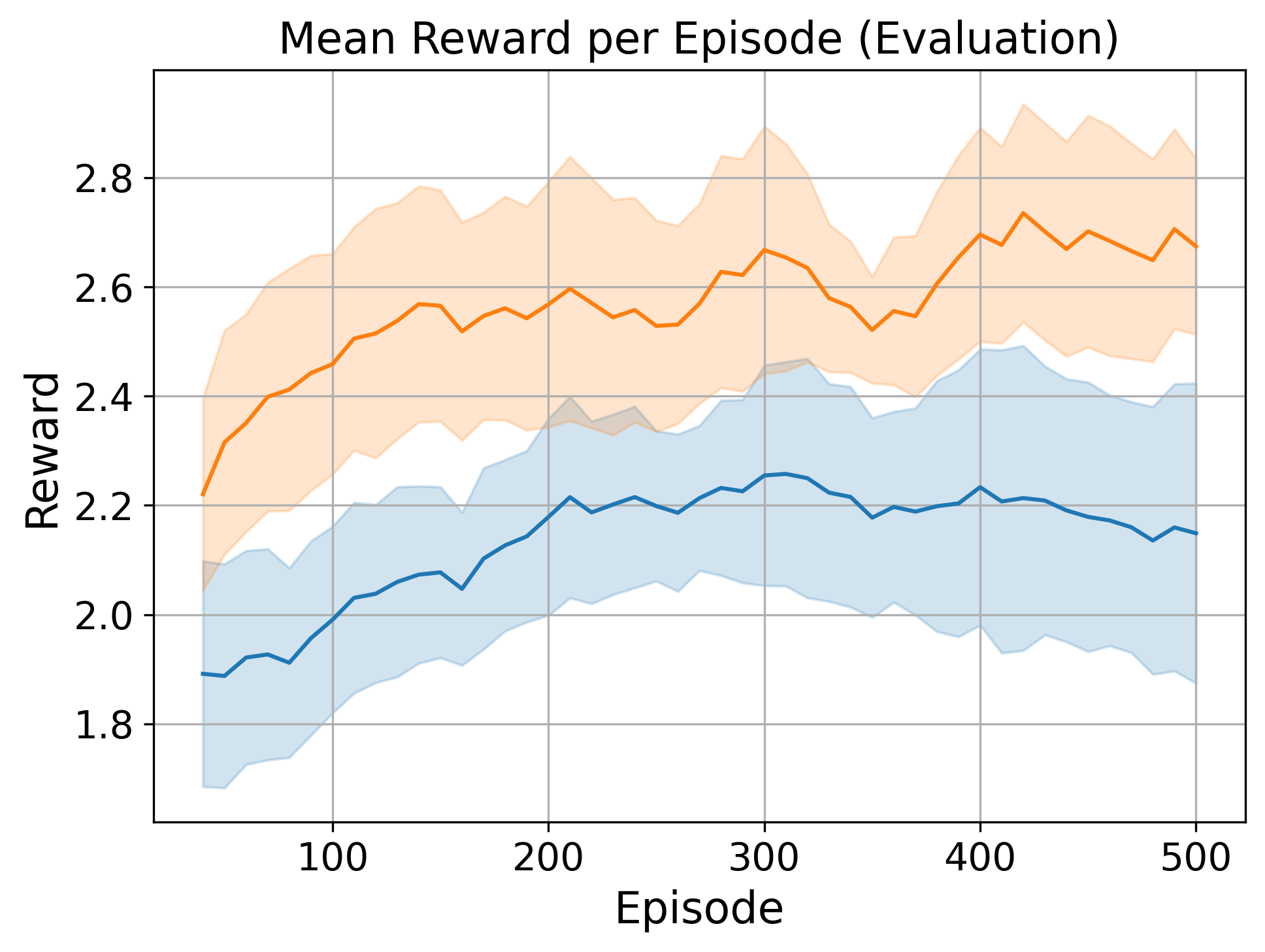}
        \vspace{0.3em}

        \textbf{(e)} Evaluation ($\mu=1.0$)
        \label{fig:epgg_results:sub5}
    \end{minipage}
    \hfill
    \begin{minipage}[b]{0.3\textwidth}
        \centering
        \includegraphics[width=0.9\textwidth]{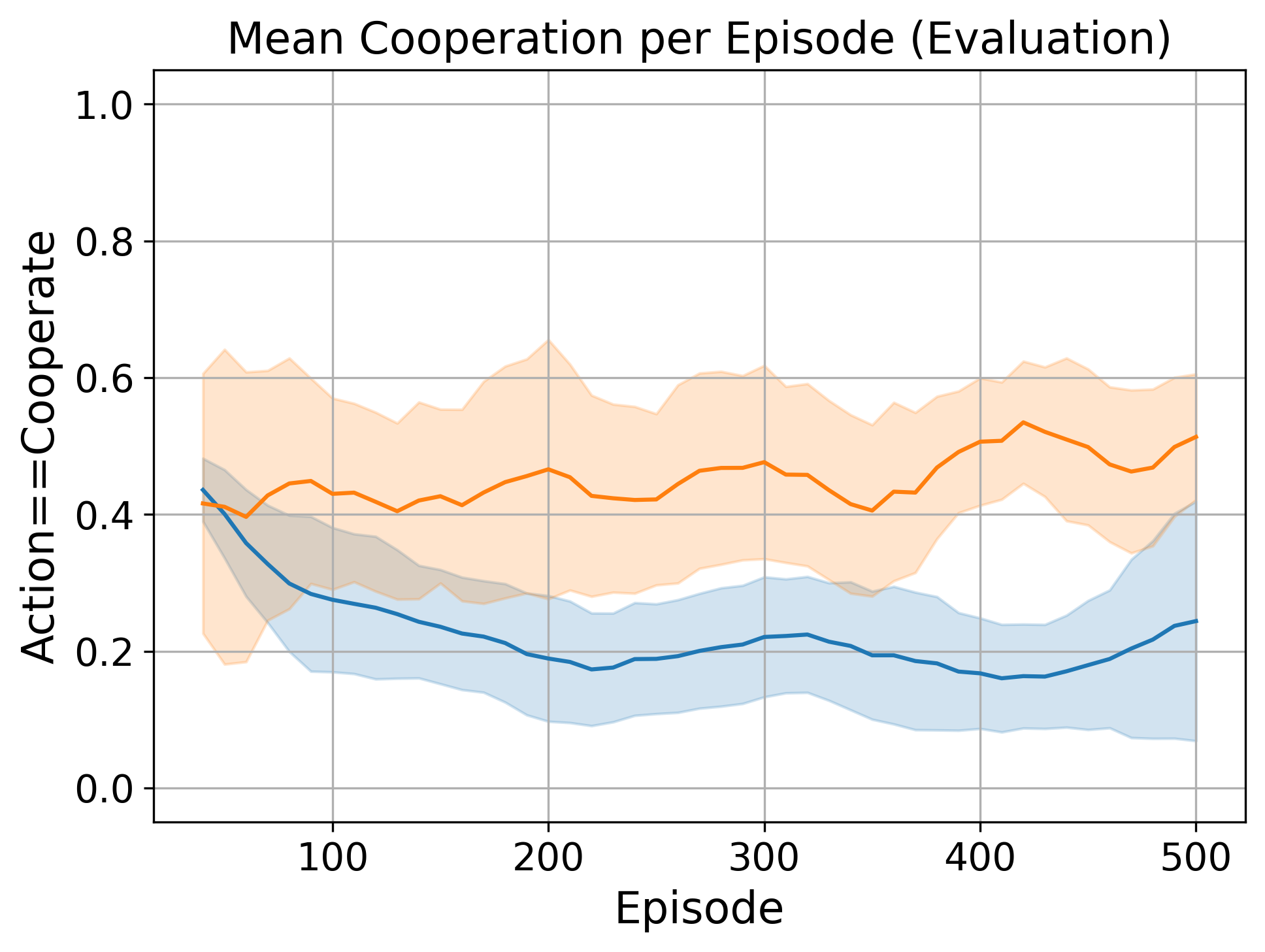}
        \vspace{0.3em}

        \textbf{(f)} Safety ($\mu=1.0$)
        \label{fig:epgg_results:sub6}
    \end{minipage}

    \vspace{5pt}

    % Row 3: mu = 1.5
    \begin{minipage}[b]{0.3\textwidth}
        \centering
        \includegraphics[width=0.9\textwidth]{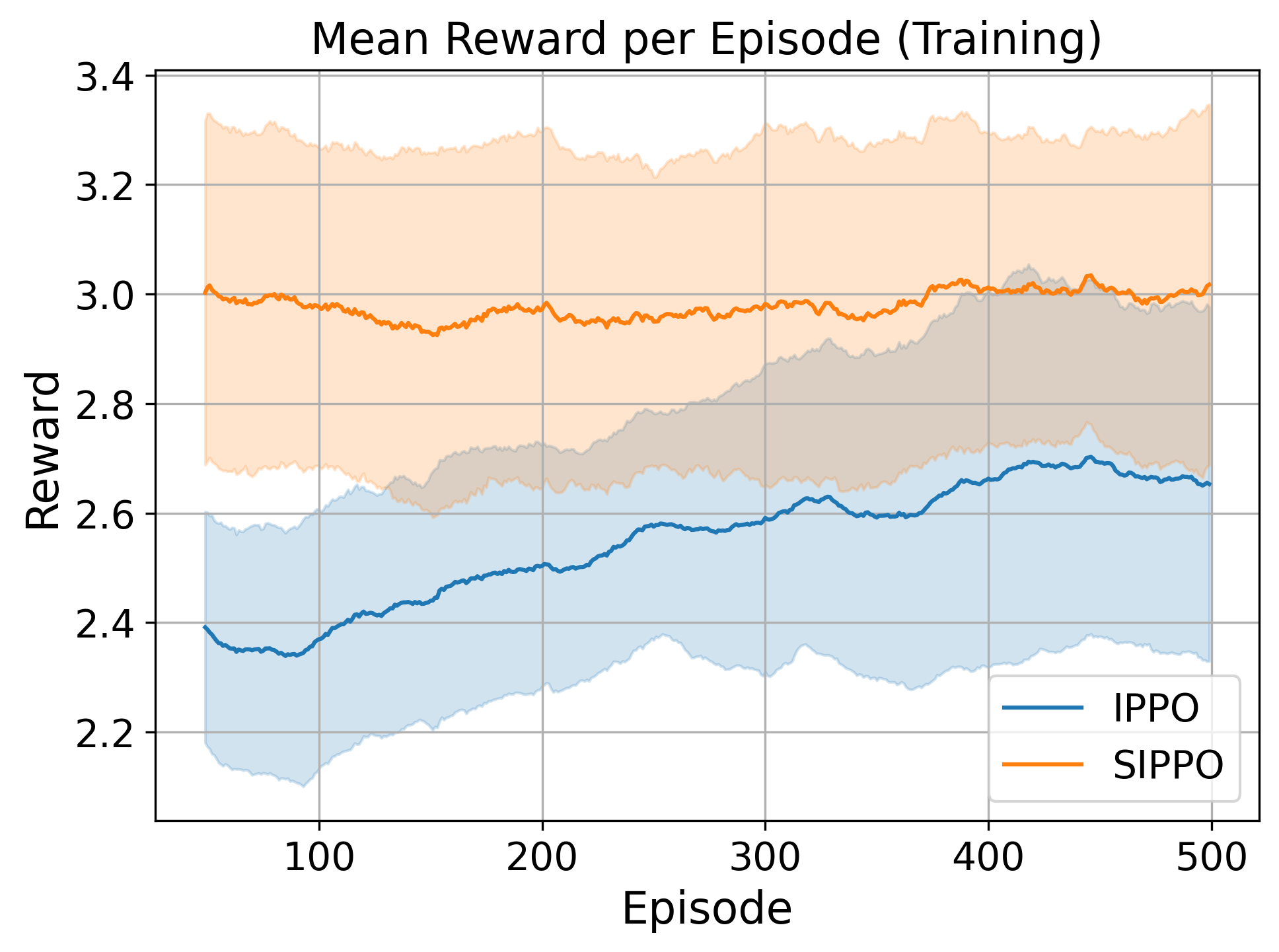}
        \vspace{0.3em}

        \textbf{(g)} Training ($\mu=1.5$)
        \label{fig:epgg_results:sub7}
    \end{minipage}
    \hfill
    \begin{minipage}[b]{0.3\textwidth}
        \centering
        \includegraphics[width=0.9\textwidth]{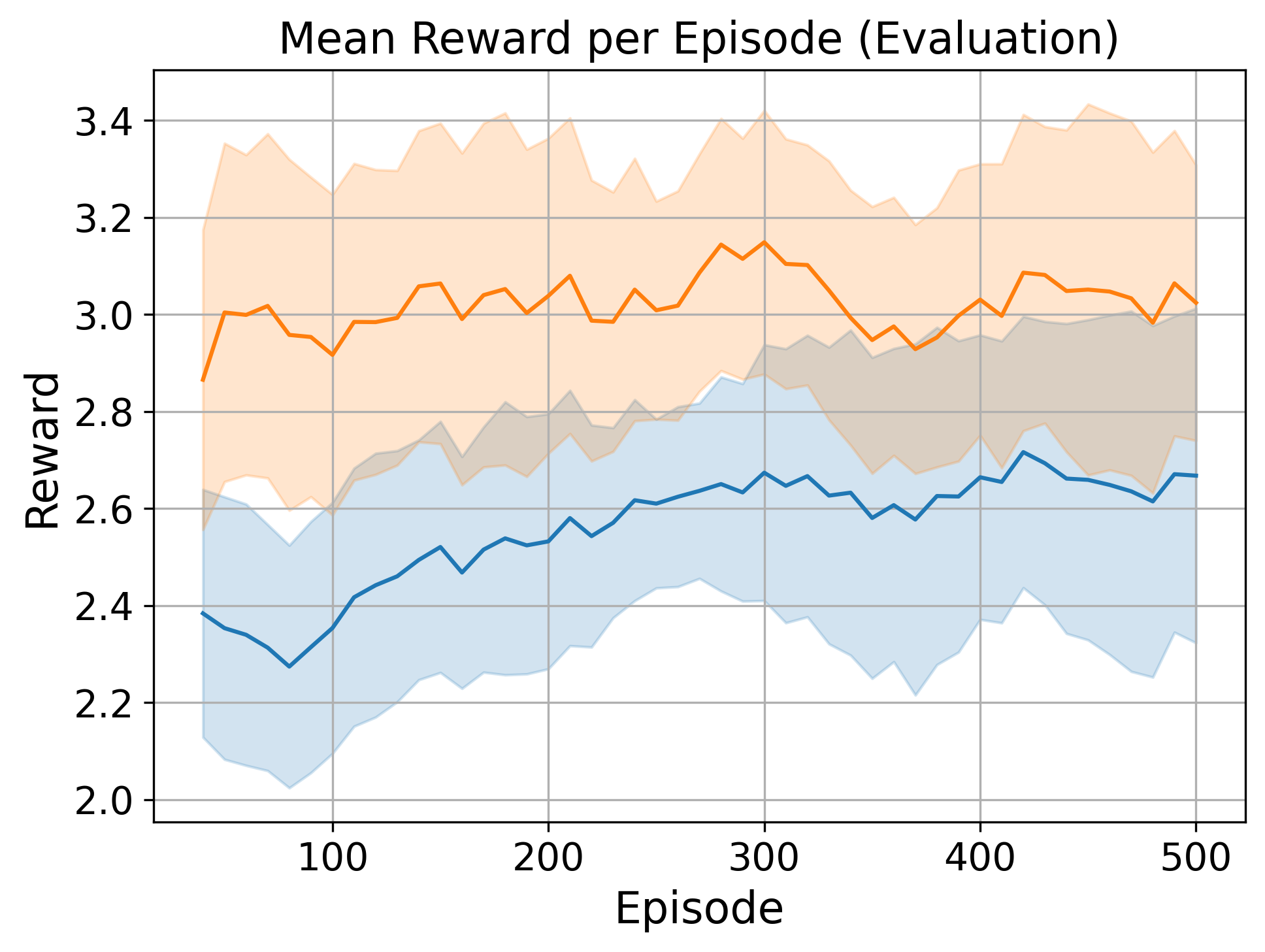}
        \vspace{0.3em}

        \textbf{(h)} Evaluation ($\mu=1.5$)
        \label{fig:epgg_results:sub8}
    \end{minipage}
    \hfill
    \begin{minipage}[b]{0.3\textwidth}
        \centering
        \includegraphics[width=0.9\textwidth]{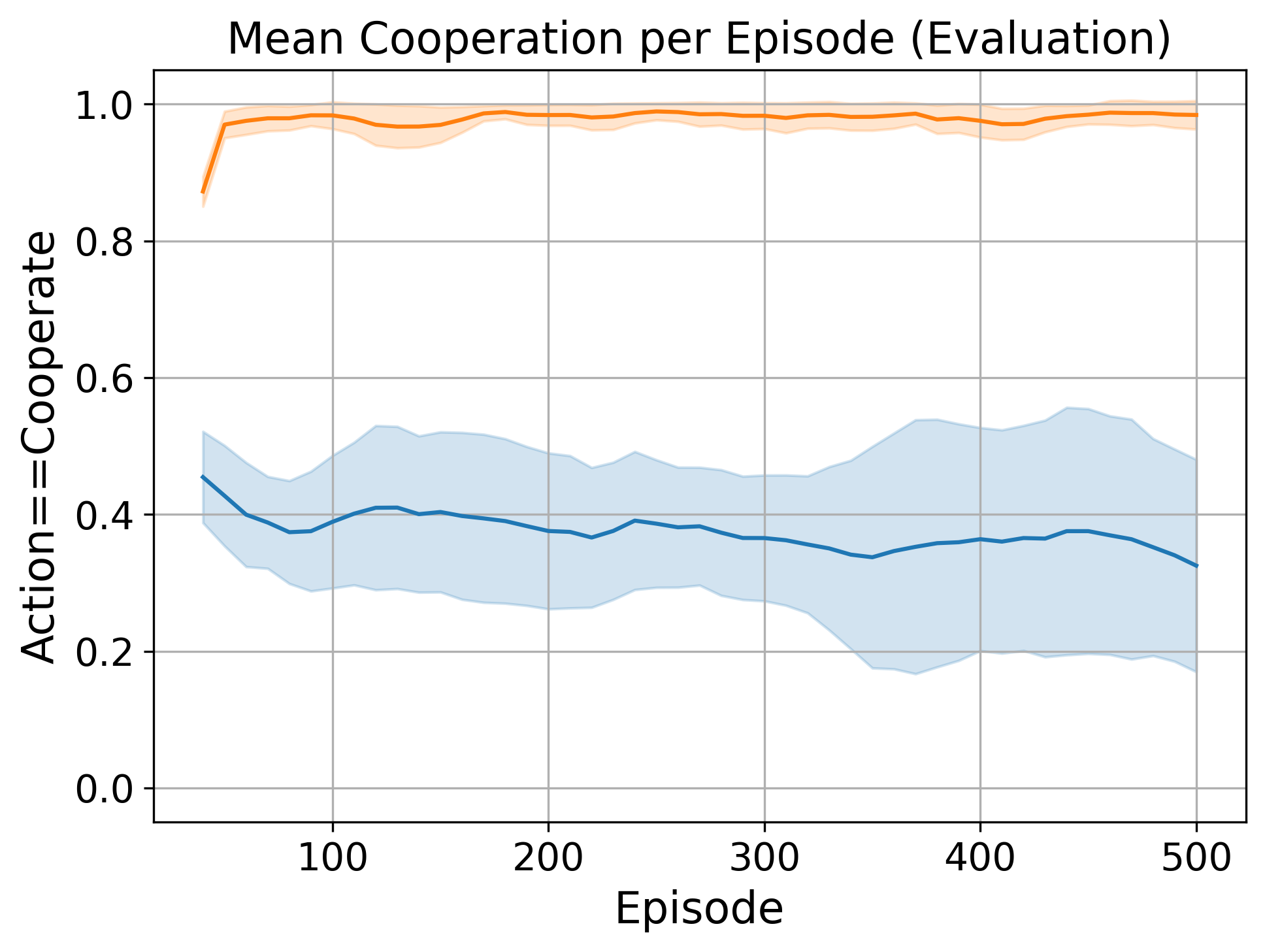}
        \vspace{0.3em}

        \textbf{(i)} Safety ($\mu=1.5$)
        \label{fig:epgg_results:sub9}
    \end{minipage}

    \vspace{5pt}

    % Row 4: mu = 2.5
    \begin{minipage}[b]{0.3\textwidth}
        \centering
        \includegraphics[width=0.9\textwidth]{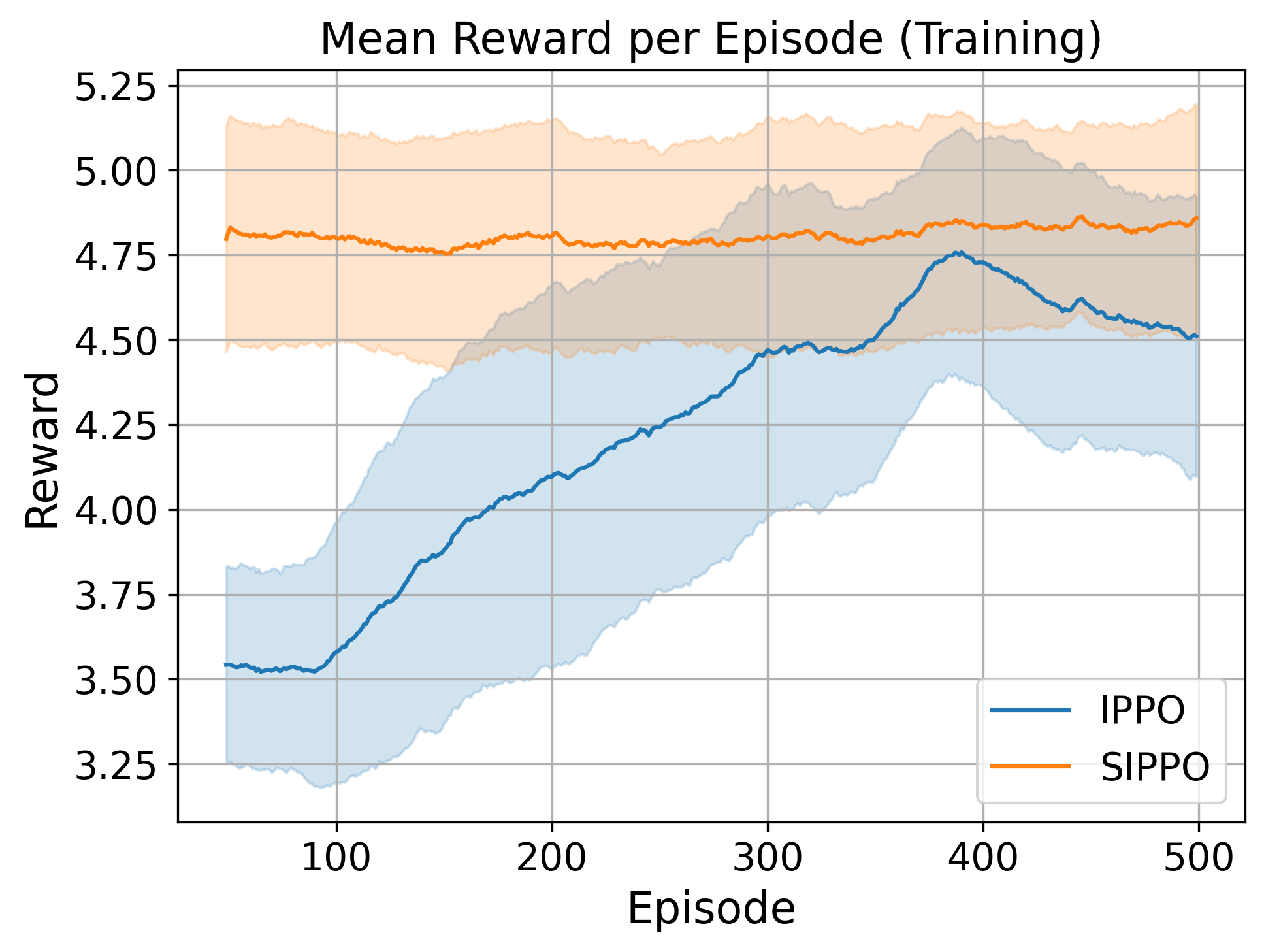}
        \vspace{0.3em}

        \textbf{(j)} Training ($\mu=2.5$)
        \label{fig:epgg_results:sub10}
    \end{minipage}
    \hfill
    \begin{minipage}[b]{0.3\textwidth}
        \centering
        \includegraphics[width=0.9\textwidth]{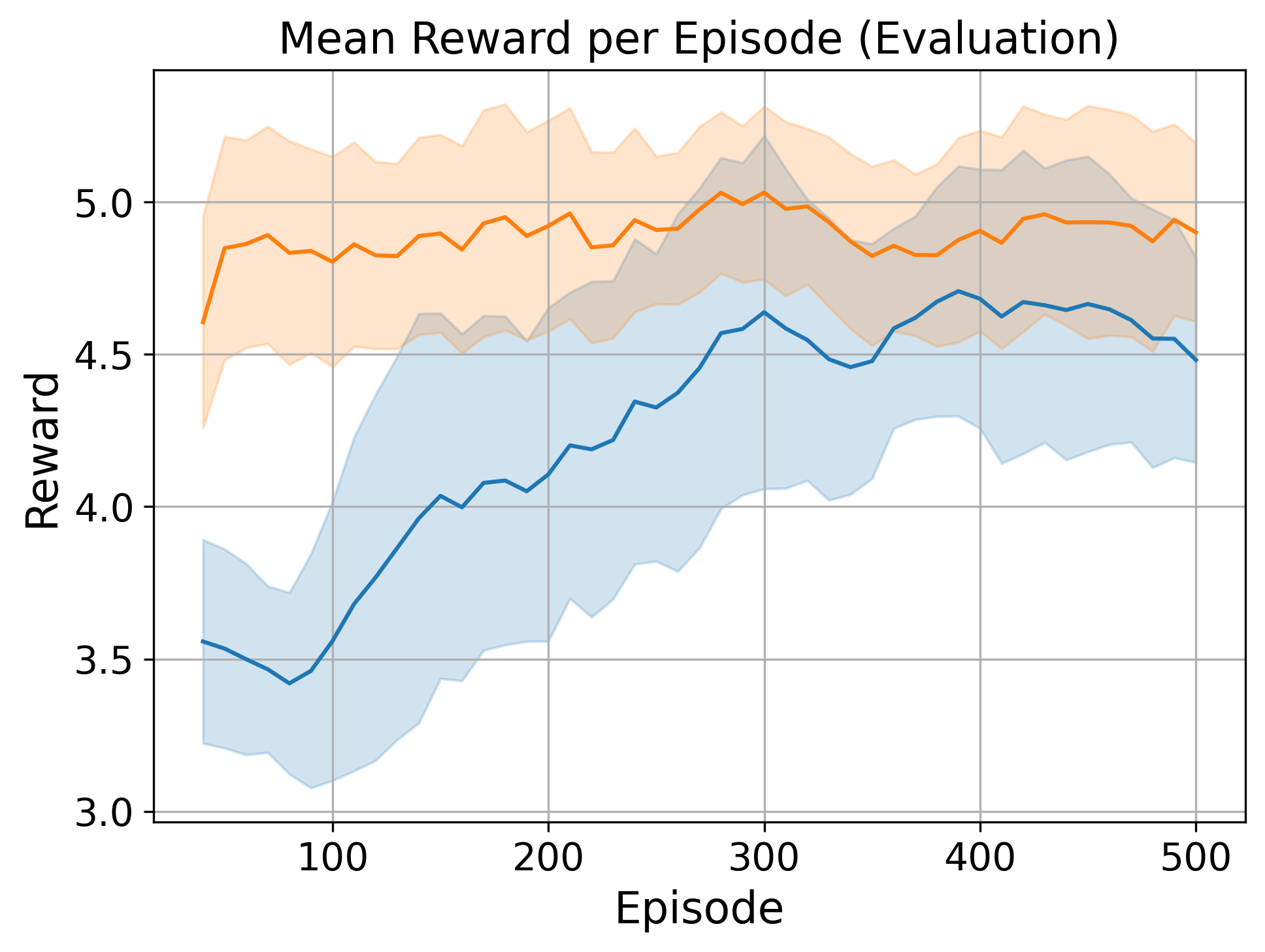}
        \vspace{0.3em}

        \textbf{(k)} Evaluation ($\mu=2.5$)
        \label{fig:epgg_results:sub11}
    \end{minipage}
    \hfill
    \begin{minipage}[b]{0.3\textwidth}
        \centering
        \includegraphics[width=0.9\textwidth]{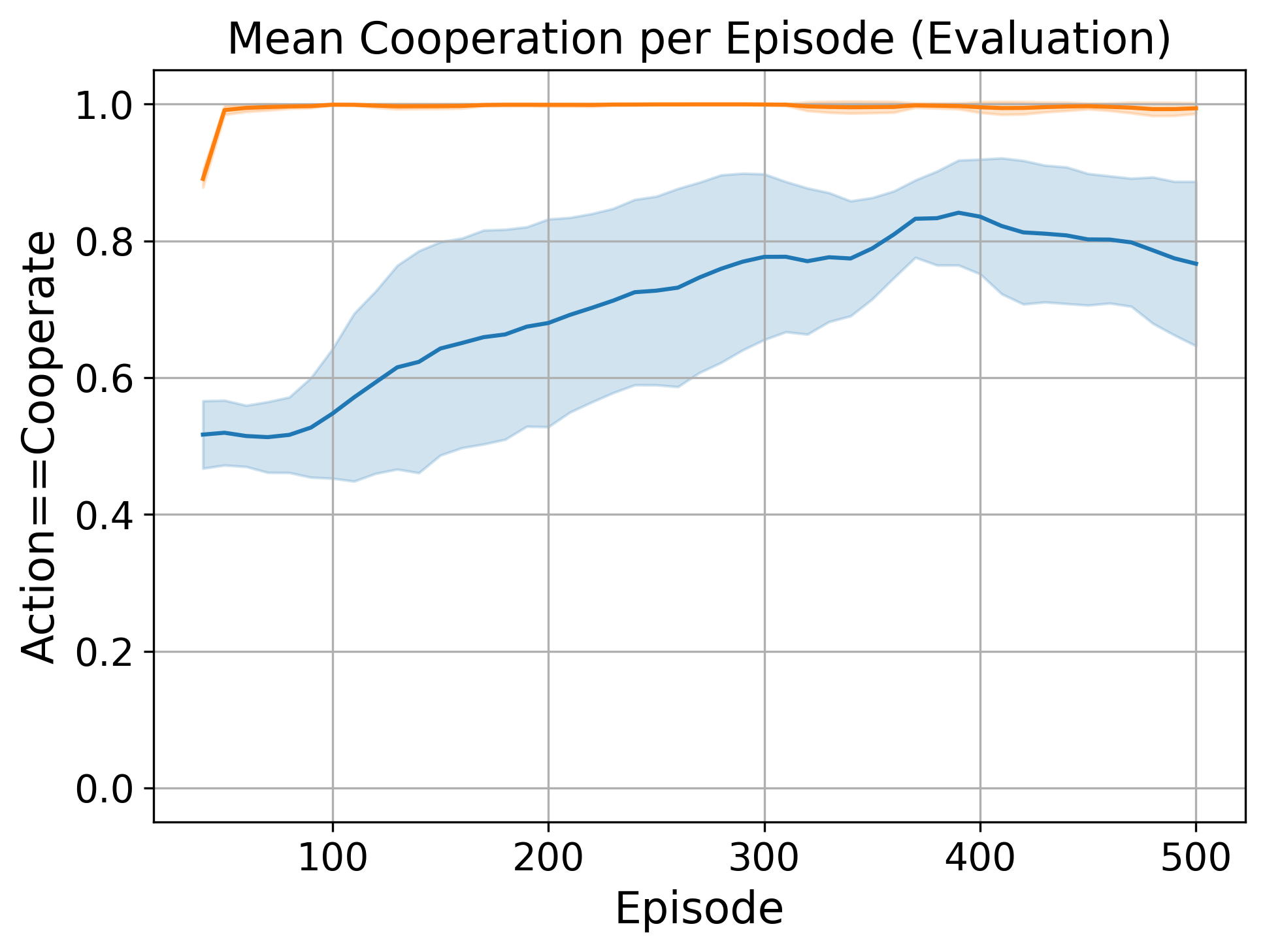}
        \vspace{0.3em}

        \textbf{(l)} Safety ($\mu=2.5$)
        \label{fig:epgg_results:sub12}
    \end{minipage}

    \vspace{5pt}

    % Row 5: mu = 5.0
    \begin{minipage}[b]{0.3\textwidth}
        \centering
        \includegraphics[width=0.9\textwidth]{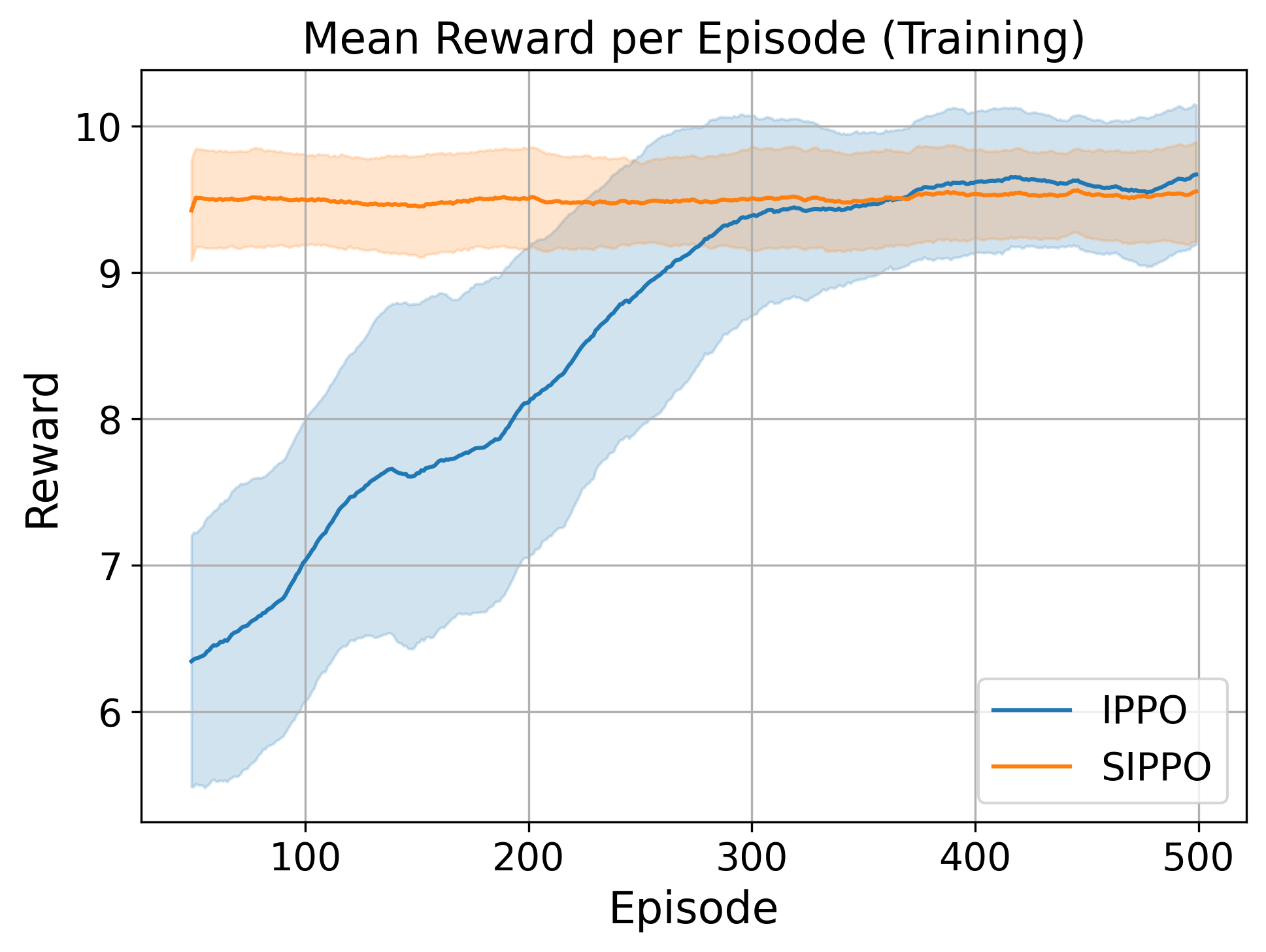}
        \vspace{0.3em}

        \textbf{(m)} Training ($\mu=5.0$)
        \label{fig:epgg_results:sub13}
    \end{minipage}
    \hfill
    \begin{minipage}[b]{0.3\textwidth}
        \centering
        \includegraphics[width=0.9\textwidth]{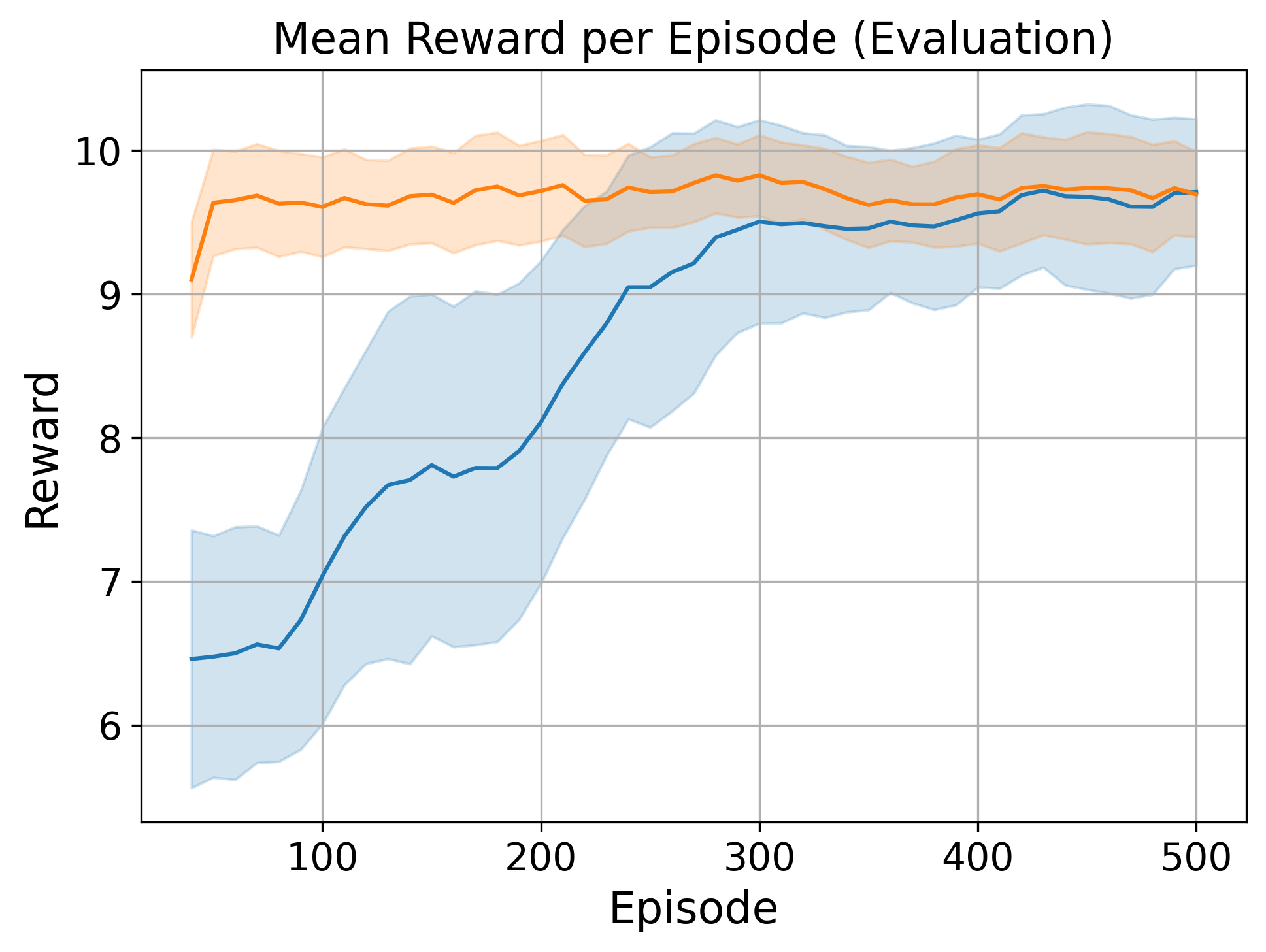}
        \vspace{0.3em}

        \textbf{(n)} Evaluation ($\mu=5.0$)
        \label{fig:epgg_results:sub14}
    \end{minipage}
    \hfill
    \begin{minipage}[b]{0.3\textwidth}
        \centering
        \includegraphics[width=0.9\textwidth]{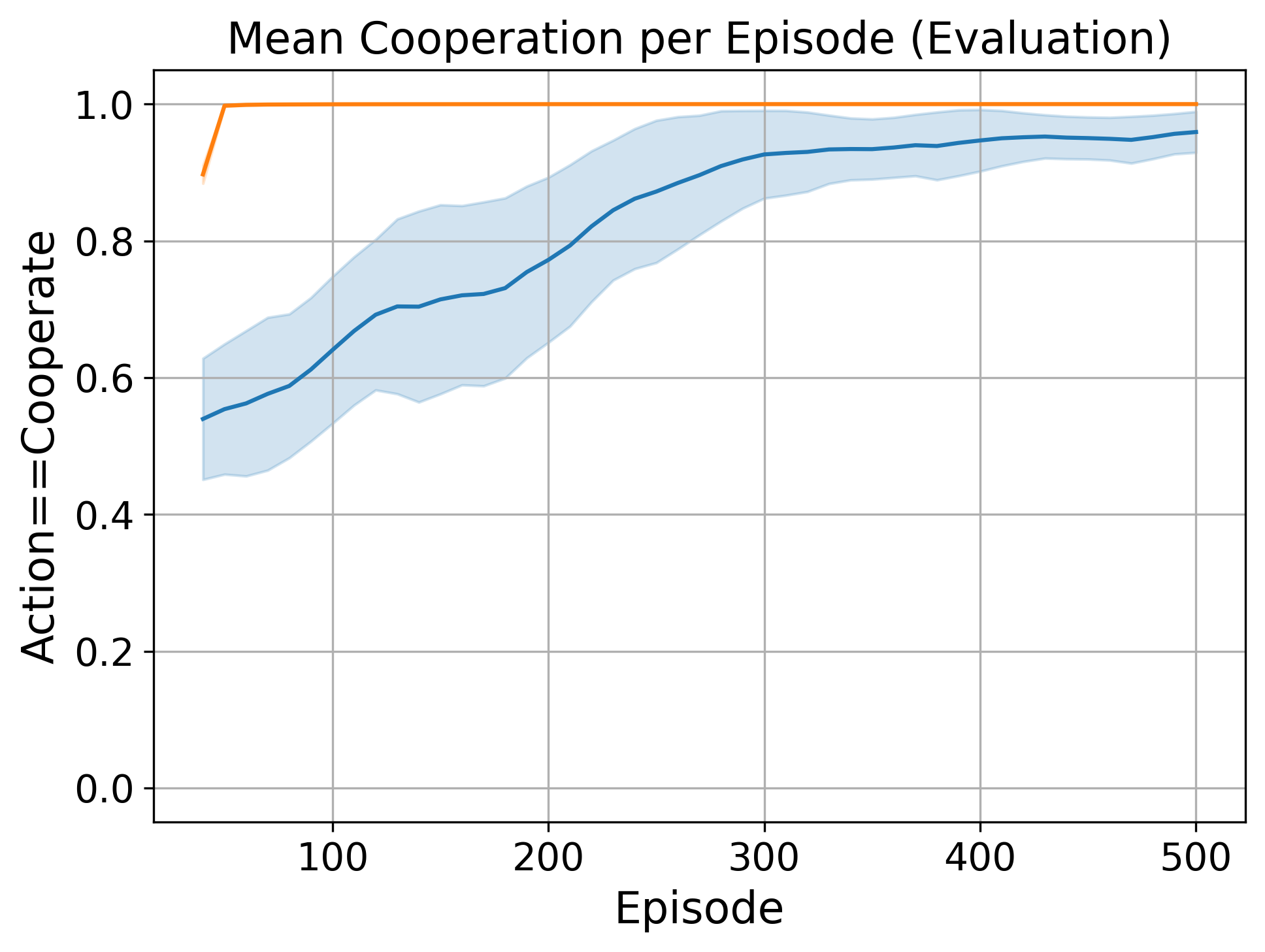}
        \vspace{0.3em}

        \textbf{(o)} Safety ($\mu=5.0$)
        \label{fig:epgg_results:sub15}
    \end{minipage}

    \caption[Results for 2-player \textit{Extended Public Goods Game}.]{Training, evaluation, and safety results for the two-player EPGG for PLPG-based agents. The multiplicative factor $f_t$ is sampled from $\mathcal{N}(\mu,1)$, where $\mu \in \{0.5, 1.0, 1.5, 2.5, 5.0\}$ for different experiments. The lines represent the mean and the shadow the standard deviation over 5 seeds. Results are smoothed using a rolling average with window size 50.}
    \label{fig:epgg_results}
\end{figure*}
\begin{figure*}[h!]
    \centering

    % First row
    \begin{minipage}[b]{0.32\textwidth}
        \centering
        \includegraphics[width=\textwidth]{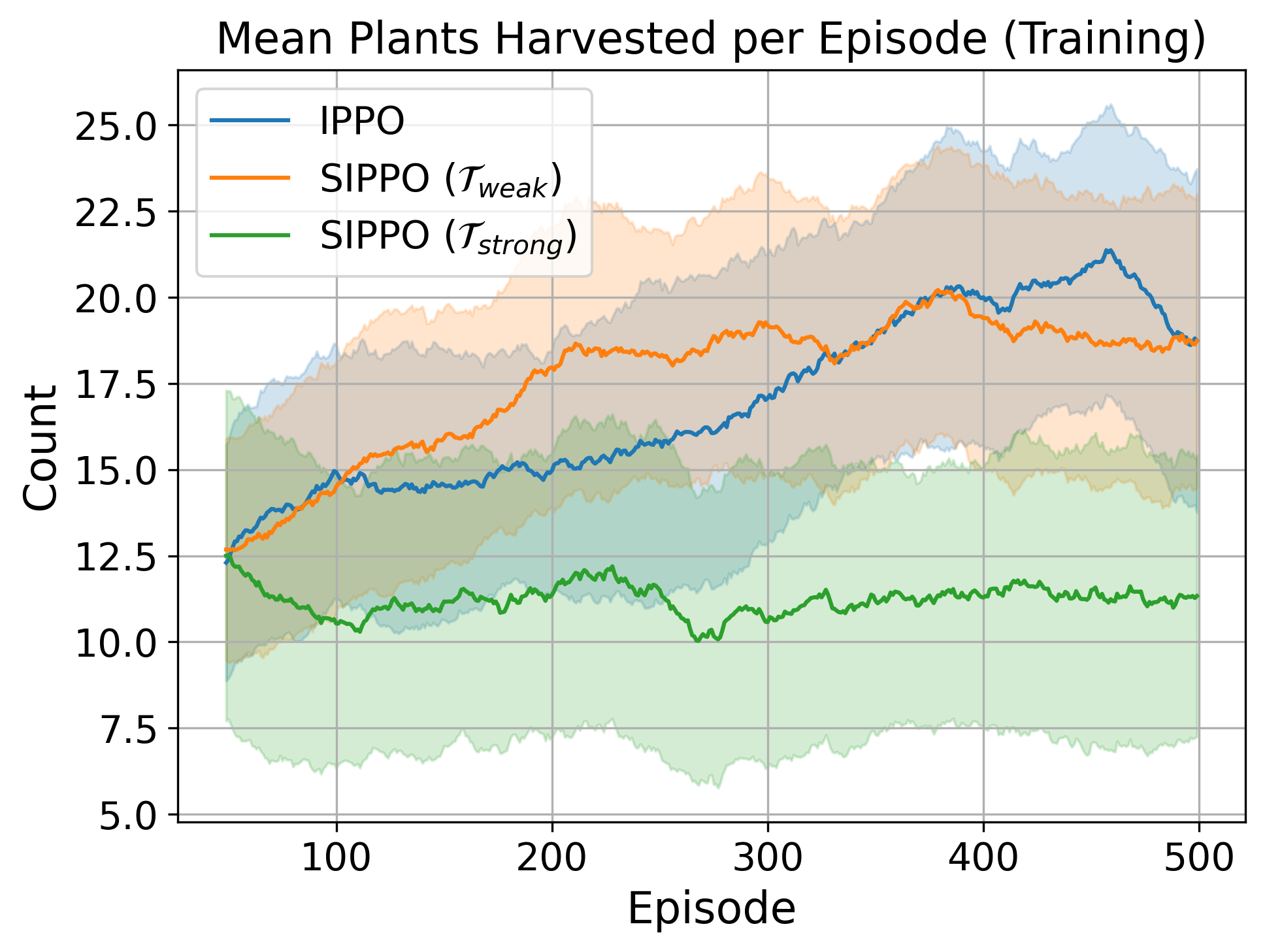}
        \vspace{0.3em}

        \textbf{(a)}
        \label{fig:msh_full_results:subfig1}
    \end{minipage}
    \hfill
    \begin{minipage}[b]{0.32\textwidth}
        \centering
        \includegraphics[width=\textwidth]{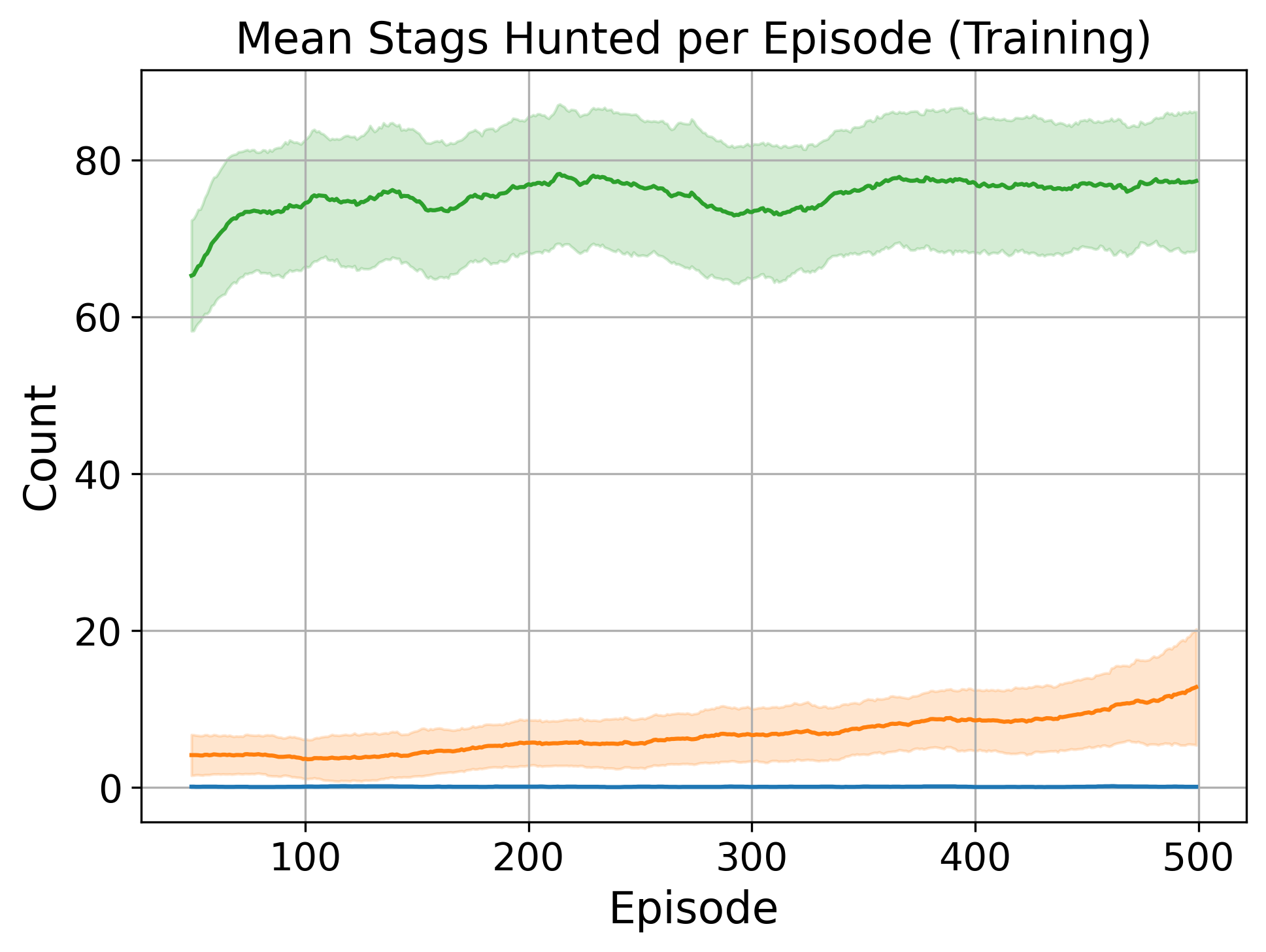}
        \vspace{0.3em}

        \textbf{(b)}
        \label{fig:msh_full_results:subfig2}
    \end{minipage}
    \hfill
    \begin{minipage}[b]{0.32\textwidth}
        \centering
        \includegraphics[width=\textwidth]{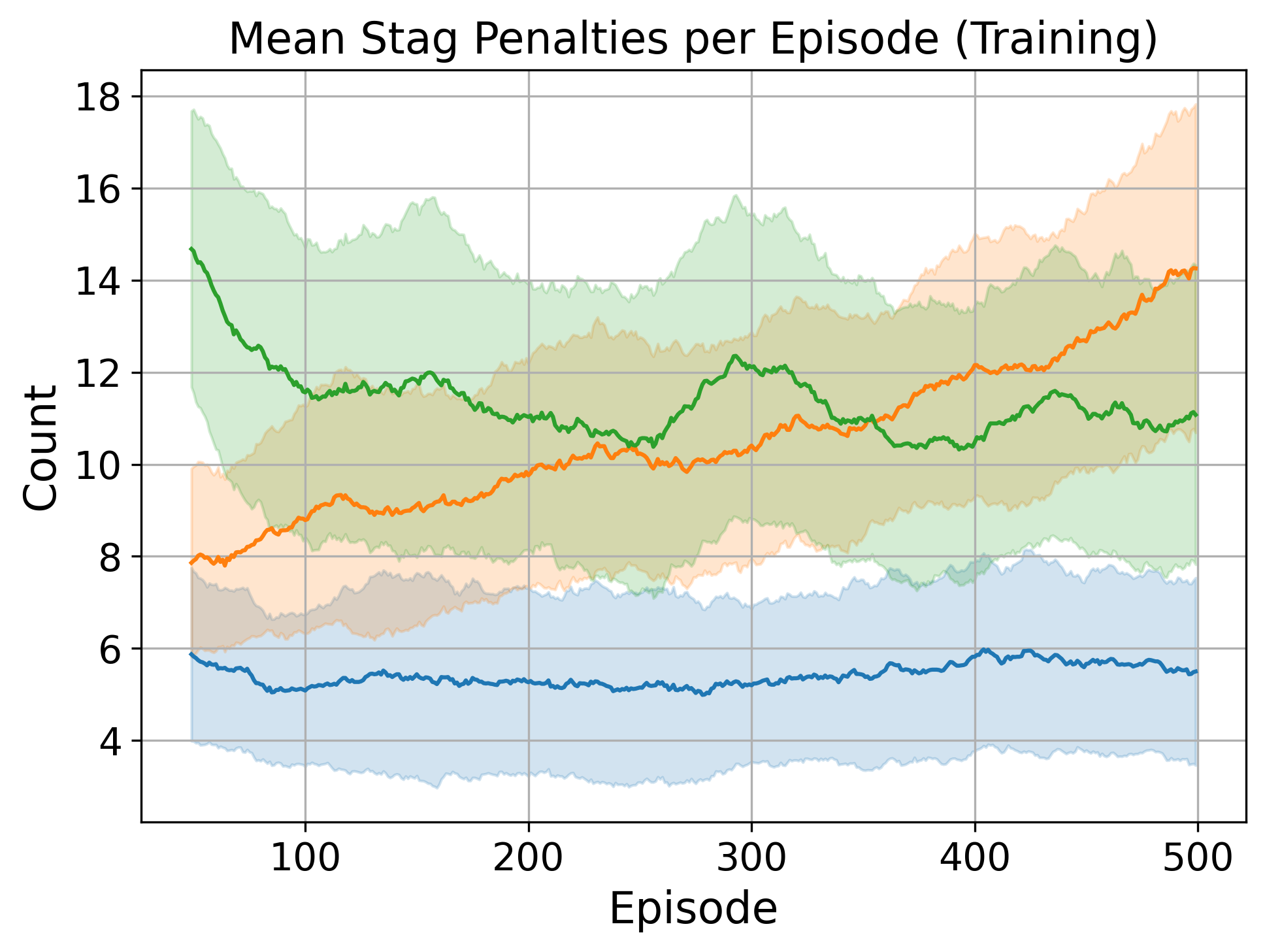}
        \vspace{0.3em}

        \textbf{(c)}
        \label{fig:msh_full_results:subfig3}
    \end{minipage}

    \vspace{10pt}

    % Second row
    \begin{minipage}[b]{0.32\textwidth}
        \centering
        \includegraphics[width=\textwidth]{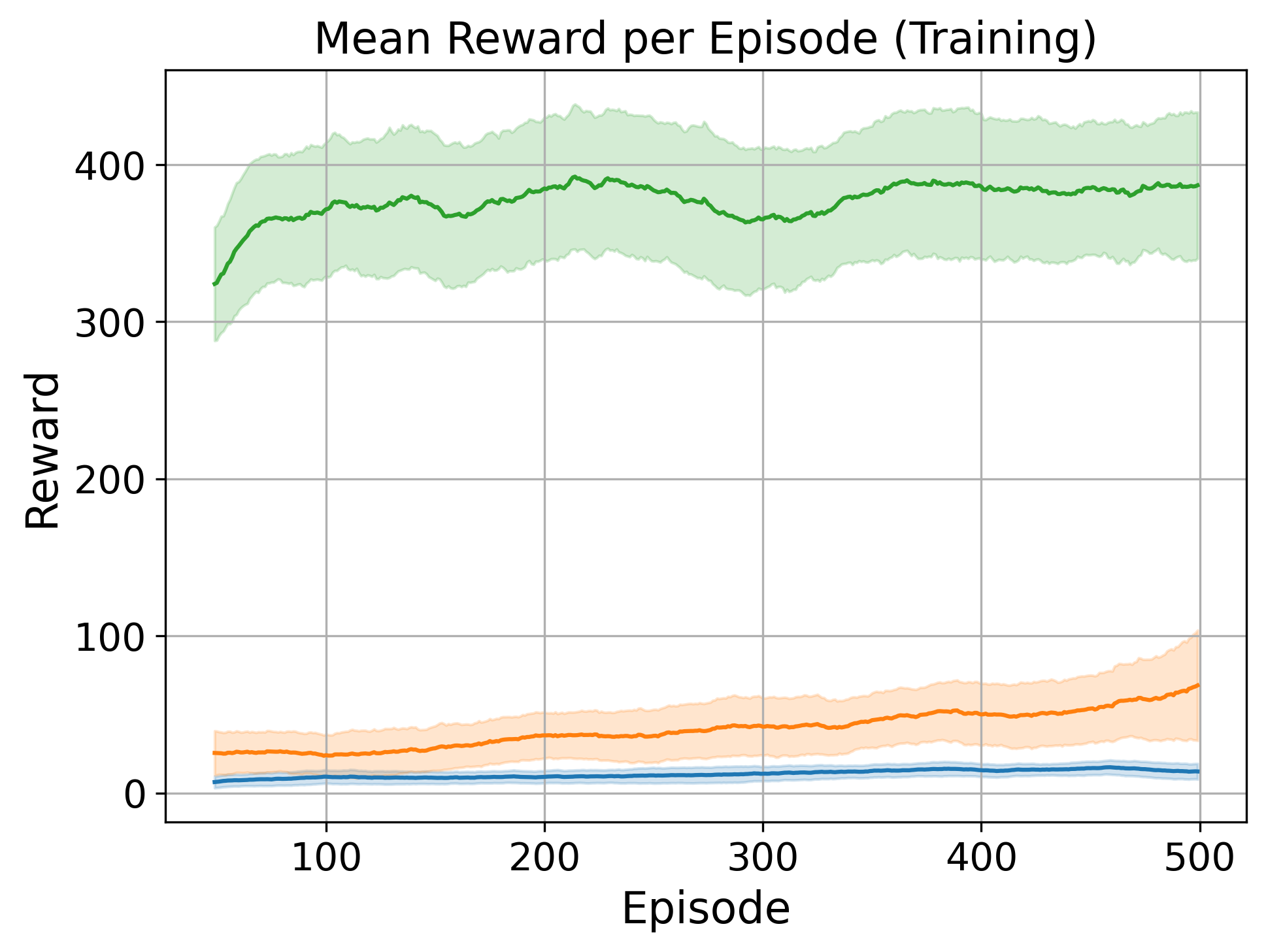}
        \vspace{0.3em}

        \textbf{(d)}
        \label{fig:msh_full_results:subfig4}
    \end{minipage}
    \hfill
    \begin{minipage}[b]{0.32\textwidth}
        \centering
        \includegraphics[width=\textwidth]{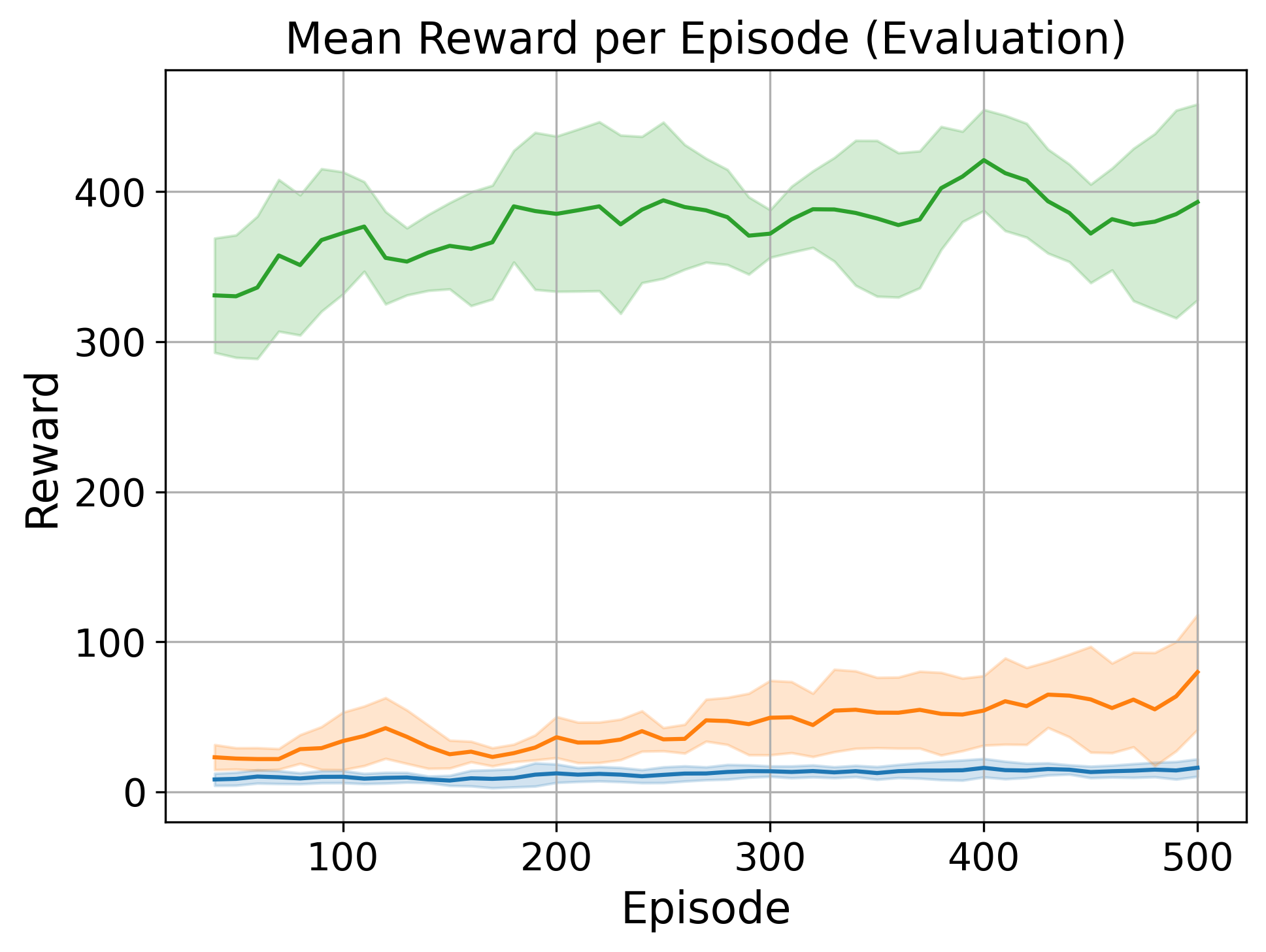}
        \vspace{0.3em}

        \textbf{(e)}
        \label{fig:msh_full_results:subfig5}
    \end{minipage}
    \hfill
    \begin{minipage}[b]{0.32\textwidth}
        \centering
        \includegraphics[width=\textwidth]{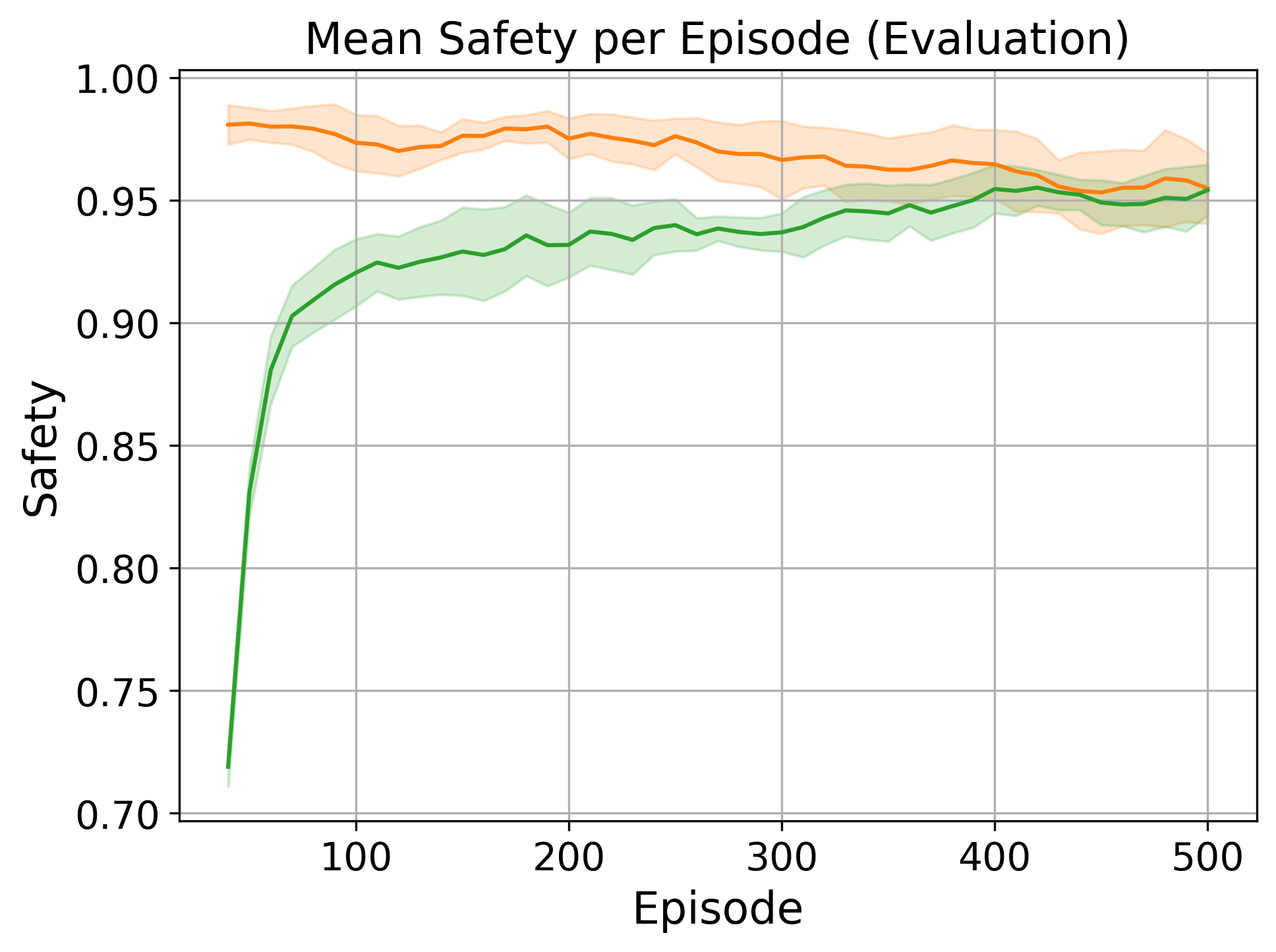}
        \vspace{0.3em}

        \textbf{(f)}
        \label{fig:msh_full_results:subfig6}
    \end{minipage}

    \caption[Results for \textit{Markov Stag-Hunt}.]{Behavioural and training results during training for \textit{Markov Stag-Hunt} for PLPG-based agents with no shield or two different shields, $\mathcal{T}_{weak}$ or $\mathcal{T}_{strong}$. The lines represent the mean and the shadow the standard deviation over 3 seeds. Results are smoothed using a rolling average with window size 50.}
    \label{fig:msh_full_results}
\end{figure*}
\begin{figure*}[h!]
    \centering

    % First row
    \begin{minipage}[b]{0.32\textwidth}
        \centering
        \includegraphics[width=\textwidth]{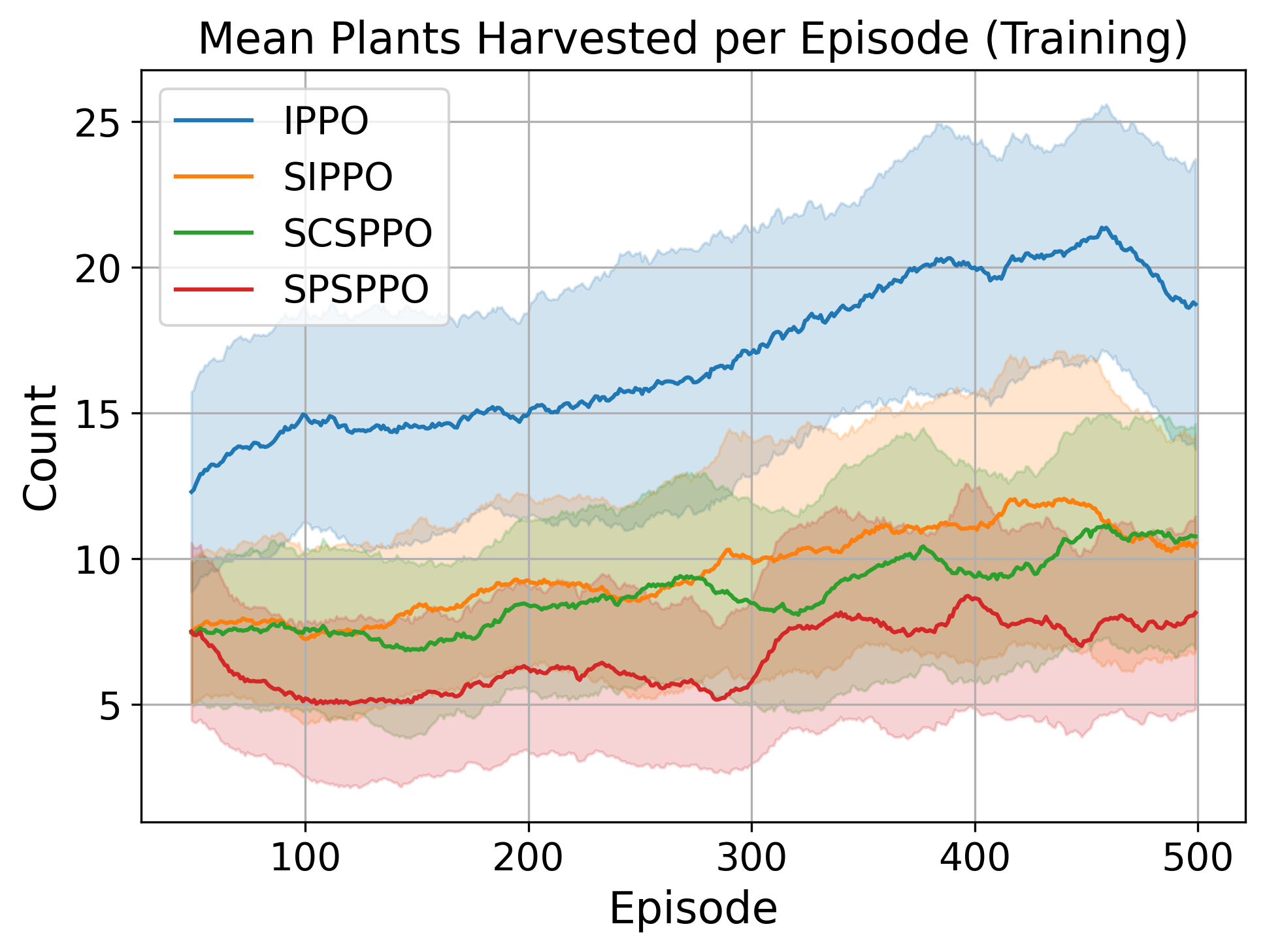}
        \vspace{0.3em}

        \textbf{(a)}
        \label{fig:msh_part_results:subfig1}
    \end{minipage}
    \hfill
    \begin{minipage}[b]{0.32\textwidth}
        \centering
        \includegraphics[width=\textwidth]{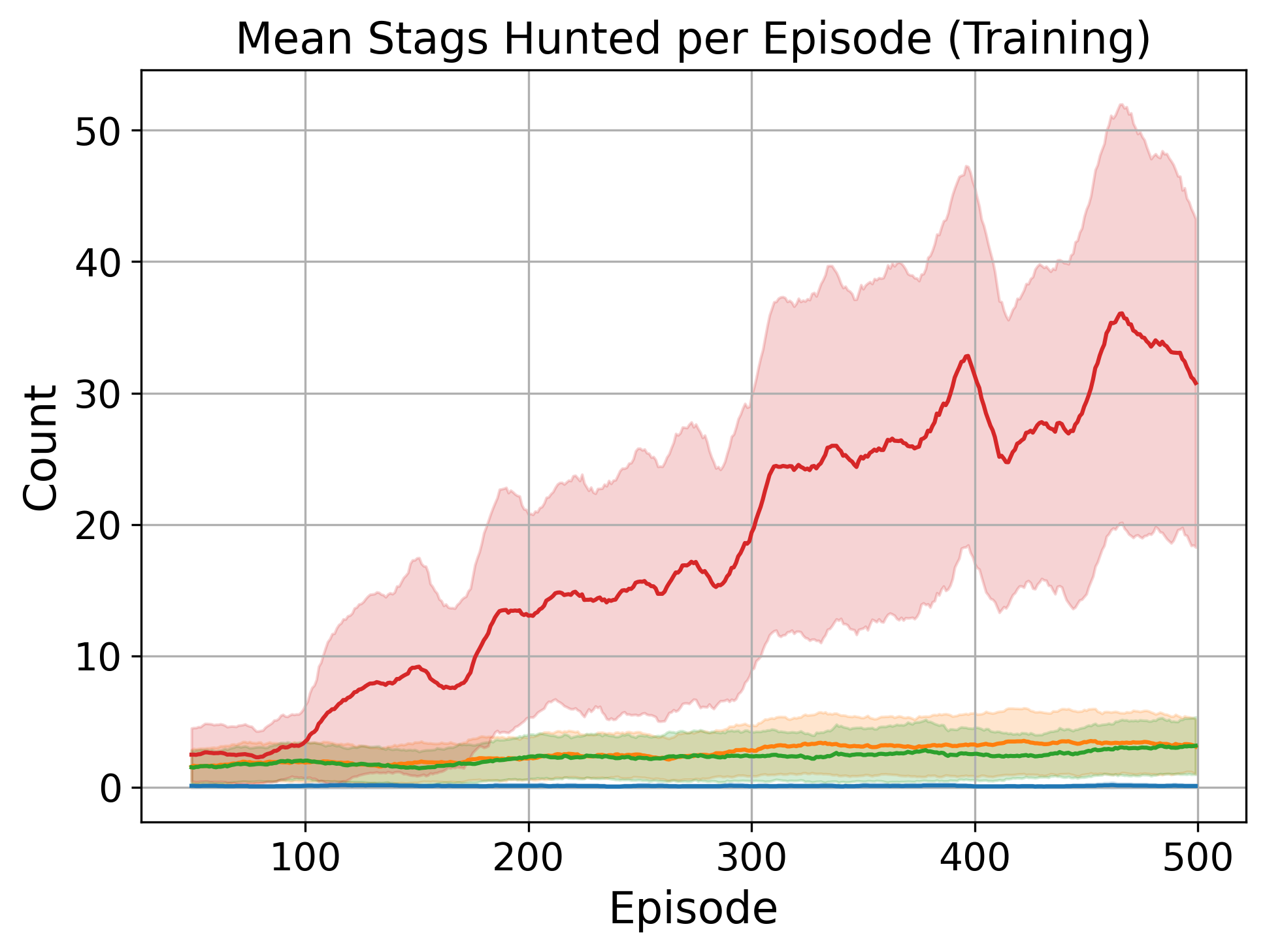}
        \vspace{0.3em}

        \textbf{(b)}
        \label{fig:msh_part_results:subfig2}
    \end{minipage}
    \hfill
    \begin{minipage}[b]{0.32\textwidth}
        \centering
        \includegraphics[width=\textwidth]{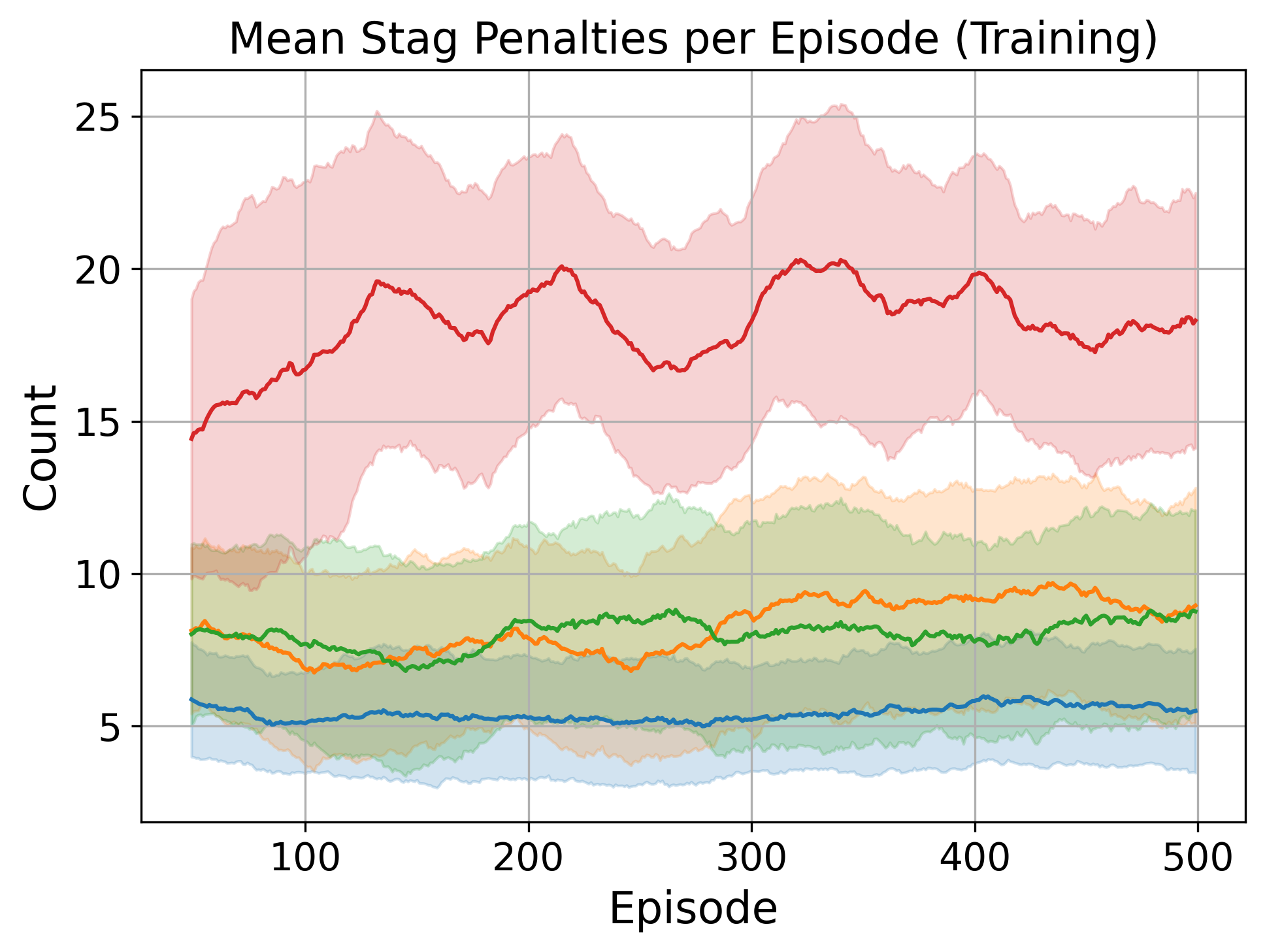}
        \vspace{0.3em}

        \textbf{(c)}
        \label{fig:msh_part_results:subfig3}
    \end{minipage}

    \vspace{10pt}

    % Second row
    \begin{minipage}[b]{0.32\textwidth}
        \centering
        \includegraphics[width=\textwidth]{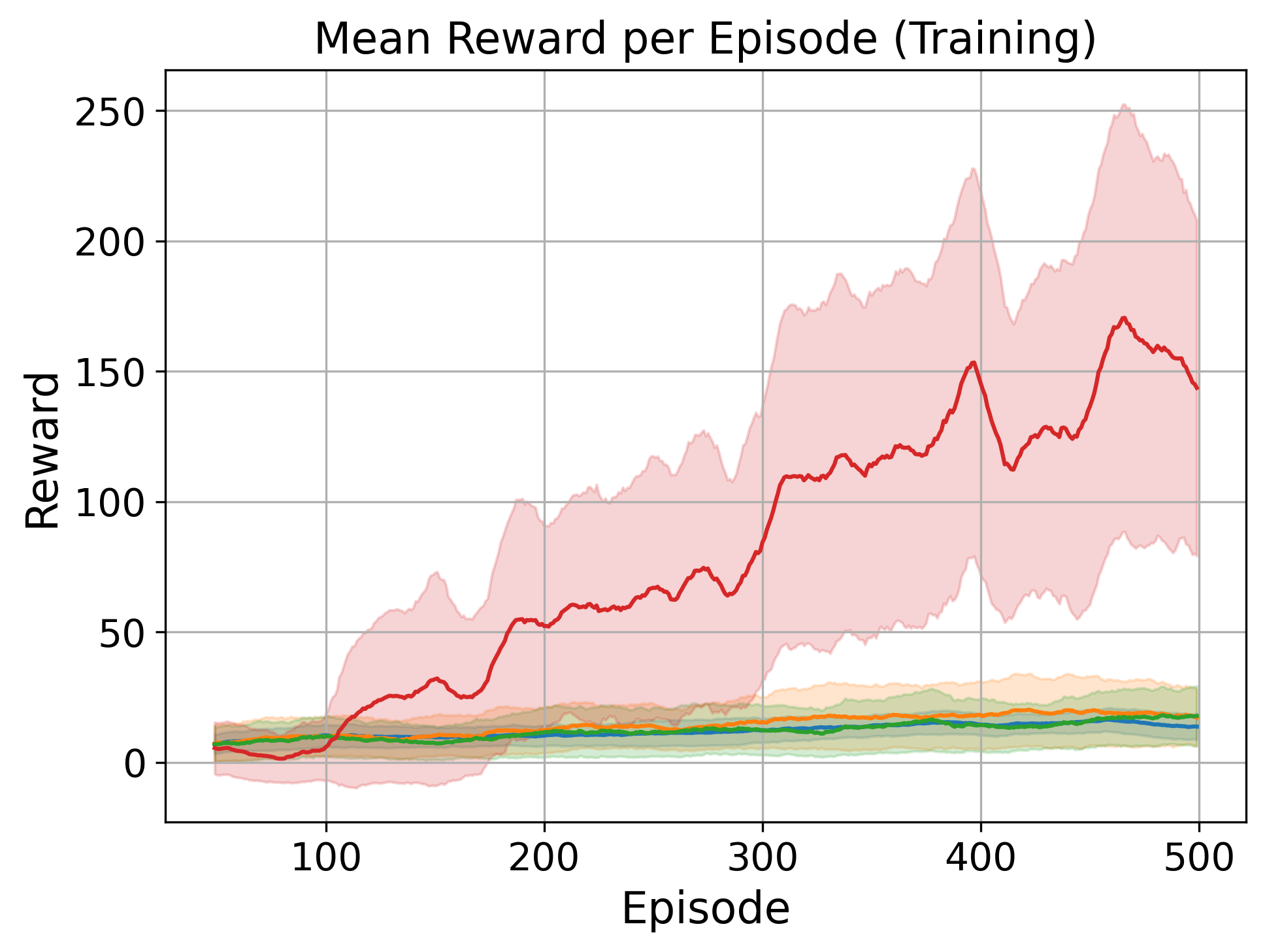}
        \vspace{0.3em}

        \textbf{(d)}
        \label{fig:msh_part_results:subfig4}
    \end{minipage}
    \hfill
    \begin{minipage}[b]{0.32\textwidth}
        \centering
        \includegraphics[width=\textwidth]{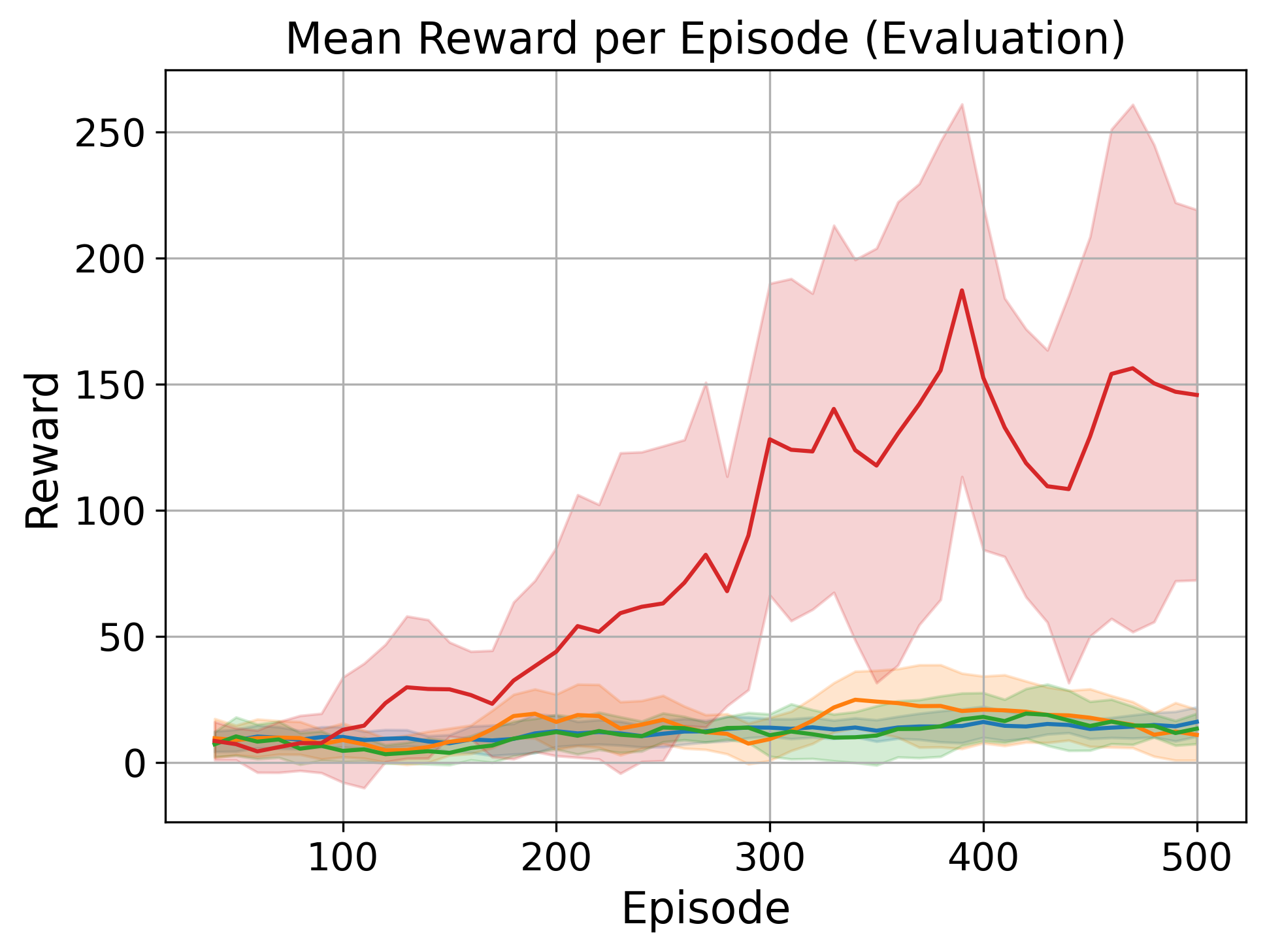}
        \vspace{0.3em}

        \textbf{(e)}
        \label{fig:msh_part_results:subfig5}
    \end{minipage}
    \hfill
    \begin{minipage}[b]{0.32\textwidth}
        \centering
        \includegraphics[width=\textwidth]{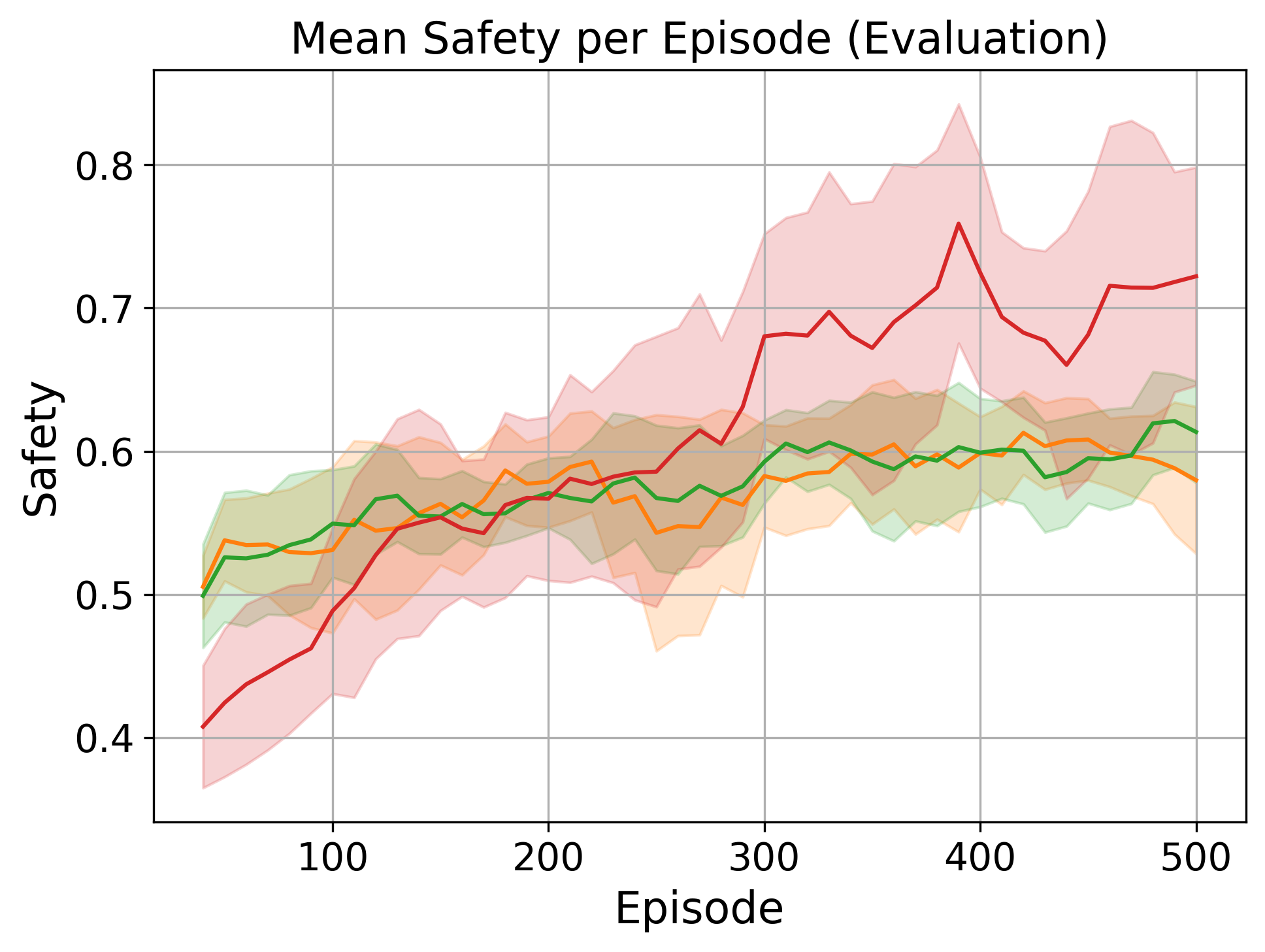}
        \vspace{0.3em}

        \textbf{(f)}
        \label{fig:msh_part_results:subfig6}
    \end{minipage}

    \caption[Results for Partially Shielded \textit{Markov Stag-Hunt}.]{Behavioural and training results for \textit{Markov Stag-Hunt} for PLPG-based agents with or without parameter sharing during training, where one of the agents is shielded with $\mathcal{T}_{strong}$. The solid lines represent the mean and the shadow the standard deviation over 3 seeds. Results are smoothed using a rolling average with window size 50.}
    \label{fig:msh_part_results}
\end{figure*}

\end{document}